\newcommand{\calX}{\mathcal{X}}
\newcommand{\calA}{\mathcal{A}}
\newcommand{\calB}{\mathcal{B}}
\newcommand{\calF}{\mathcal{F}}
\newcommand{\calH}{\mathcal{H}}
\newcommand{\calI}{\mathcal{I}}
\newcommand{\calK}{\mathcal{K}}
\newcommand{\calN}{\mathcal{N}}
\newcommand{\calS}{\mathcal{S}}
\newcommand{\calU}{\mathcal{U}}
\newcommand{\bbR}{\mathbb{R}}
\newcommand{\bbE}{\mathbb{E}}
\newcommand{\bbZ}{\mathbb{Z}}
\newcommand{\sd}{\mathbb{S}}
\newcommand{\ksd}{\mathbb{D}}
\newcommand{\ksdHat}{\hat{\mathbb{D}}}
\newenvironment{talign*}
 {\csname align*\endcsname}
 {\endalign}
\newenvironment{talign}
{\align}
{\endalign}
\newenvironment{proplist}{\begin{enumerate}[leftmargin=2.5em,labelwidth=0em,label=(\roman{enumi}),topsep=.1em,partopsep=0em,itemsep=-.3em]}{\end{enumerate}}
\theoremstyle{plain}
\newtheorem{theorem}{Theorem}[section]
\newtheorem{proposition}[theorem]{Proposition}
\newtheorem{lemma}[theorem]{Lemma}
\theoremstyle{definition}
\theoremstyle{remark}
\newtheorem{remark}[theorem]{Remark}
\icmltitlerunning{Using Perturbation to Improve GOF Tests based on KSD}
\begin{document}

\twocolumn[
\icmltitle{Using Perturbation to Improve Goodness-of-Fit Tests based on Kernelized Stein Discrepancy}
% It is OKAY to include author information, even for blind
% submissions: the style file will automatically remove it for you
% unless you've provided the [accepted] option to the icml2023
% package.

% List of affiliations: The first argument should be a (short)
% identifier you will use later to specify author affiliations
% Academic affiliations should list Department, University, City, Region, Country
% Industry affiliations should list Company, City, Region, Country

% You can specify symbols, otherwise they are numbered in order.
% Ideally, you should not use this facility. Affiliations will be numbered
% in order of appearance and this is the preferred way.
\icmlsetsymbol{equal}{*}

\begin{icmlauthorlist}
\icmlauthor{Xing Liu}{ic}
\icmlauthor{Andrew B. Duncan}{ic,at}
\icmlauthor{Axel Gandy}{ic}
\end{icmlauthorlist}

\icmlaffiliation{ic}{Department of Mathematics, Imperial College London, London, UK.}
\icmlaffiliation{at}{Alan Turing Institute, London, UK}

\icmlcorrespondingauthor{Xing Liu}{xing.liu16@imperial.ac.uk}

% You may provide any keywords that you
% find helpful for describing your paper; these are used to populate
% the "keywords" metadata in the PDF but will not be shown in the document
\icmlkeywords{Goodness-of-fit testing, Stein's method, Kernel method}

\vskip 0.3in
]

% this must go after the closing bracket ] following \twocolumn[ ...

% This command actually creates the footnote in the first column
% listing the affiliations and the copyright notice.
% The command takes one argument, which is text to display at the start of the footnote.
% The \icmlEqualContribution command is standard text for equal contribution.
% Remove it (just {}) if you do not need this facility.

%\printAffiliationsAndNotice{}  % leave blank if no need to mention equal contribution
\printAffiliationsAndNotice{} % otherwise use the standard text.

\begin{abstract}
  Kernelized Stein discrepancy (KSD) is a score-based discrepancy widely used in goodness-of-fit tests. It can be applied even when the target distribution has an unknown normalising factor, such as in Bayesian analysis. We show theoretically and empirically that the KSD test can suffer from low power when the target and the alternative distributions have the same well-separated modes but differ in mixing proportions. We propose to perturb the observed sample via Markov transition kernels, with respect to which the target distribution is invariant. This allows us to then employ the KSD test on the perturbed sample. We provide numerical evidence that with suitably chosen transition kernels the proposed approach can lead to substantially higher power than the KSD test.
\end{abstract}

\section{Introduction}
Stein discrepancy (SD) \citep{stein1972bound, gorham2015measuring} is a statistical divergence between two probability measures based on Stein's method. More specifically, given two Borel probability measures $Q$ and $P$ supported on $\calX \subset \mathbb{R}^d$, the Stein discrepancy is defined to be
\begin{talign}
\label{eq: stein discrepancy}
    \ksd_\mathcal{F}(Q, P) \coloneqq \sup_{f\in \calF}\mathbb{E}_{x\sim Q}[\calA_P f(x)],
\end{talign}
where $\calF$ is a set of functions on $\calX$ and $\mathcal{A}_P$ is an operator acting on $\calF$ such that $\mathbb{E}_{x\sim Q}[\calA_P f(x)] = 0$ for all $f \in \mathcal{F}$ if and only if $Q\equiv P$. When $\calX = \mathbb{R}^d$ and $P$ admits a positive, continuously differentiable density $p$ with respect to the Lebesgue measure, then the \emph{Langevin-Stein operator} is the natural candidate for $\mathcal{A}_P$ which has the crucial property that it only depends on the \emph{score function} $s_p(x) \coloneqq \nabla \log p(x)$ of $p$, which does not require evaluation of the (possibly intractable) normalising constant of $p$.   When a Reproducing Kernel Hilbert Space (RKHS) \citep{berlinet2011reproducing} is used to construct the function class, this discrepancy is called the \emph{kernelized Stein discrepancy} (KSD), and admits a closed-form expression.  This has made KSD popular for applications involving an unnormalised density, such as Bayesian inference \citep{liu2016stein}, goodness-of-fit testing \citep{liu2016kernelized, chwialkowski2016kernel, jitkrittum2017linear}, and sample quality measurement \citep{gorham2015measuring, gorham2017measuring, gorham2019measuring}; see \citet{anastasiou2021stein} for a review.

We focus on goodness-of-fit (GOF) testing, where independent samples from a \emph{candidate distribution} $Q$ are observed and the goal is to test for evidence against the null hypothesis that $Q$ matches a \emph{target distribution} $P$. When the density $p$ of $P$ is only available in an unnormalised form (i.e.\  the normalisation constant is infeasible to compute) and direct sampling from $P$ is infeasible, classical tests such as the Kolmogorov-Smirnov test \citep{massey1951kolmogorov} or two-sample tests \citep{gretton2012kernel,schrab2021mmd} cannot be used, as they require either a tractable cumulative distribution function or samples from $P$.  A GOF test based on KSD, on the other hand, does not have these limitations.

However, KSD tests may suffer from low test power when the target probability measure has well-separated modes. For example, when $Q$ and $P$ are mixtures of the same components but differ in the mixing proportions,  the power will converge to the test level as the modes of the components become more and more separated (Fig.~\ref{fig: bimodal delta and densities}). This is because the KSD statistic can be numerically close to 0 if the region where the score functions are practically different has low $Q$-probability. This issue, sometimes known as the \emph{blindness to isolated components} \citep{wenliang2020blindness}, has been noted in a number of works \citep{gorham2019measuring, matsubara2021robust, kanagawa2022controlling} and addressed in some applications of KSD \citep{zhang2022towards}. However, little work has been devoted to tackling this issue in the context of GOF testing. 
% A short manuscript by \citet{liu2022using} formalises this issue in the context of GOF testing and proposes a solution based on perturbations. However, their perturbed discrepancy does not guarantee separation of distributions, thus raising questions over its validity.

\vspace{-0.2cm}
\paragraph{Contributions}
Our contribution is twofold. First, we demonstrate theoretically and numerically with a bimodal Gaussian example that the power of KSD tests can converge to the test level when the target distribution has well-separated modes. This is different from the works of \citet{gorham2017measuring} and \citet{wenliang2020blindness}, which focus on the convergence of the sample KSD but not its limiting null distribution. Second, we address this issue by introducing a \emph{perturbation operator}, giving rise to a family of perturbation-based GOF test (Fig.~\ref{fig: bimodal delta and densities}, bottom right) which we call the \emph{perturbed kernelized Stein discrepancy} (pKSD) test. The role of the operator is to perturb the candidate and the target distributions simultaneously to create discrepancy that can be more easily detected by KSD. We propose to use Markov transition kernels that are invariant to the target $P$ as the perturbation operator. The $P$-invariance ensures the resulting GOF tests provably control the Type-I error. The transition kernel is non-irreducible and uses an inter-modal jump proposal, which can increase the test power against multi-modal alternatives, sometimes substantially from the nominal level to almost 1.

\vspace{-0.25cm}
\paragraph{Outline}
Section~\ref{sec: background} reviews kernelized Stein discrepancy. Section~\ref{sec: Limitations of KSD as a Score-Based Discrepancy} formalises the low-power problem of the KSD test. The proposed method is presented in Section~\ref{sec: proposed method} and Section~\ref{sec:transition kernel}. We discuss related work in Section~\ref{sec: related works}, followed by experiments in Section~\ref{sec: experiments}. Section~\ref{sec: conclusion} concludes.

\vspace{-0.25cm}
\paragraph{Notation}
Throughout this article, we denote by $Q, P$ probability measures on $\calX = \bbR^d$ equipped with the Borel $\sigma$-algebra $\calB(\calX)$, and assume $P$ has a continuously differentiable, positive Lebesgue density $p$.  We refer to $Q$ as the \emph{candidate} distribution and $P$ as the \emph{target} distribution. Our interest lies in testing $H_0: Q = P$ against $H_1: Q \neq P$ using a finite sample $\{ x_i \}_{i = 1}^n$ drawn independently from $Q$. We assume we can evaluate pointwise the \emph{unnormalised} density $p^\ast(x) = p(x)/Z$, where $Z$ is an unknown constant, as well as $\nabla \log p^\ast(x) $, which is identical to the \emph{score function} of $p$, namely $s_p(x) \coloneqq \nabla \log p(x) = ( \nabla_{x_1} \log p(x), \ldots, \nabla_{x_d} \log p(x) )^\top$.

% % without schematic diagram ---------------------------------------------
% \begin{figure}
%     \centering
%     \includegraphics[width=.4\textwidth]{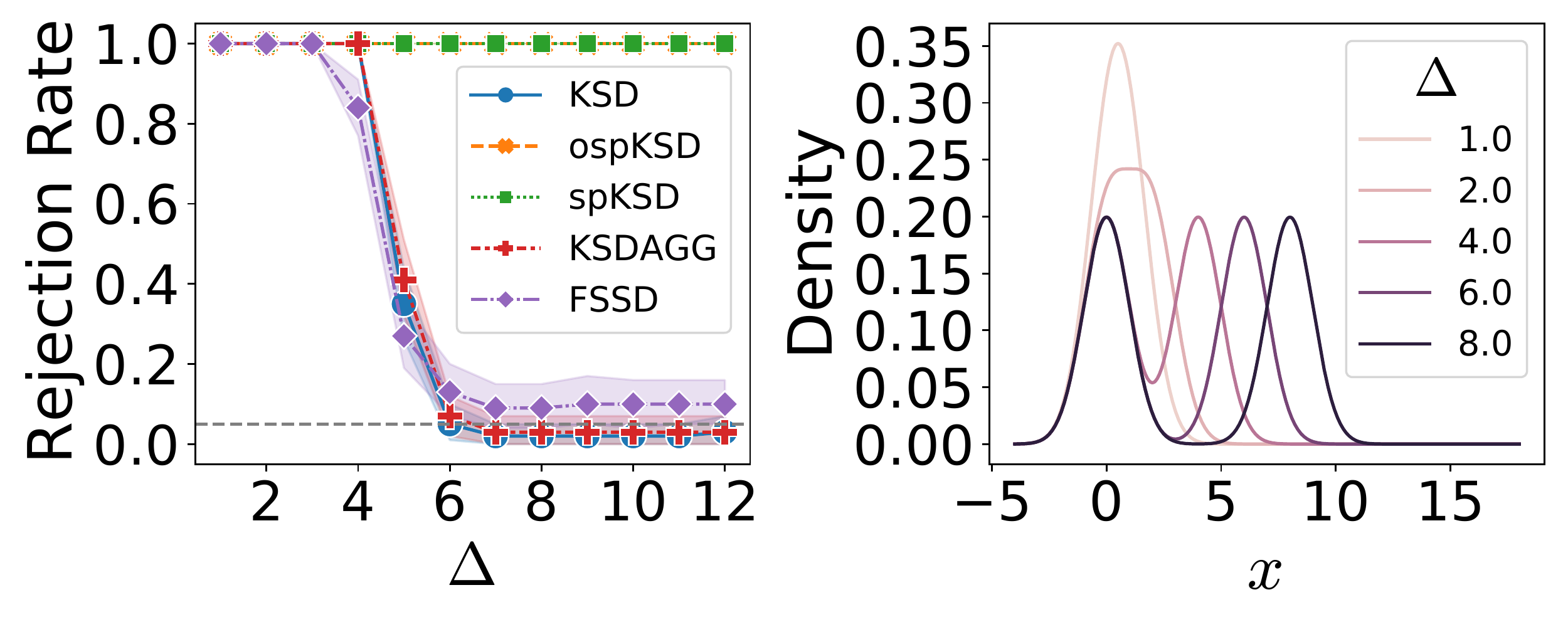}
%     \includegraphics[width=.3\textwidth]{figs/bimodal/bimodal_density_plot.pdf}
%     \vspace{-0.5cm}
%     \caption{Power for a one-dimensional bimodal Gaussian target distribution $P$ with mixing weight $0.5$ and mode separation $\Delta$. The candidate distribution $Q$ is only the left component, from which samples are drawn. ospKSD and spKSD are our proposed method; the others are existing benchmarks. Top: Rejection rates and target densities for varying $\Delta$; the orange and green lines overlap. Bottom: Density of $Q$ before and after $10$ steps of the perturbation described in Sec.~\ref{sec: proposed method} and density of the target distribution. }
%     \label{fig: bimodal delta and densities}
%     \vspace{-1em}
% \end{figure}
% % ---------------------------------------------

% with schematic diagram ---------------------------------------------
\begin{figure}
    \centering
    \includegraphics[width=.4\textwidth]{figs/bimodal/bimodal_delta.pdf}
    % \vspace{-0.2cm}
    \includegraphics[width=.2\textwidth]{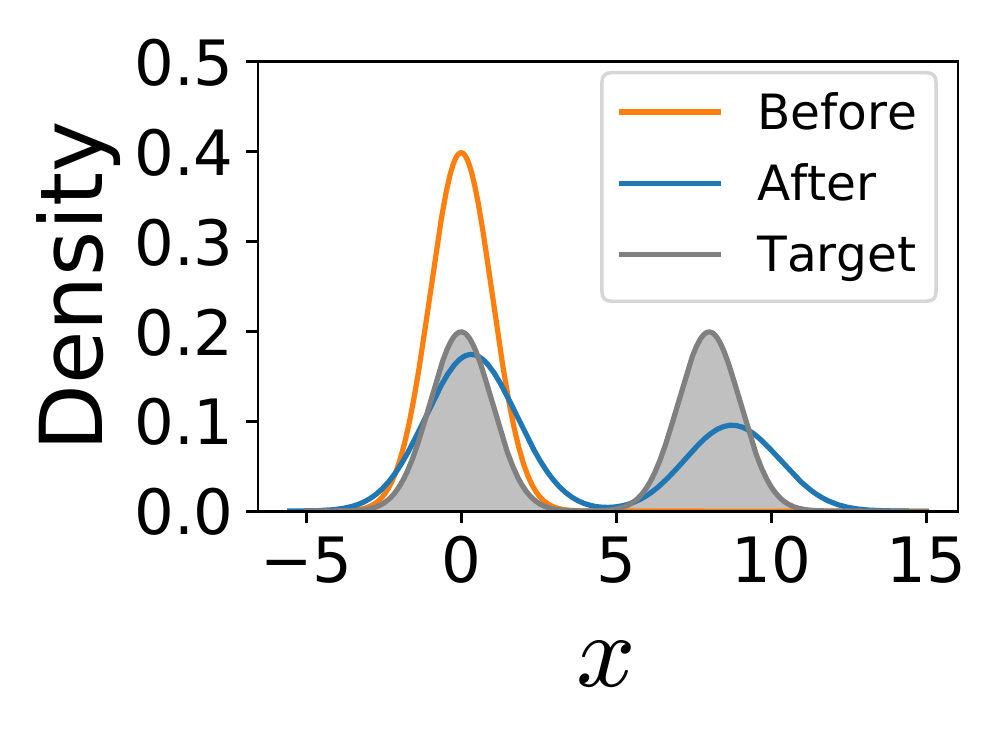}
    \includegraphics[width=.2\textwidth]{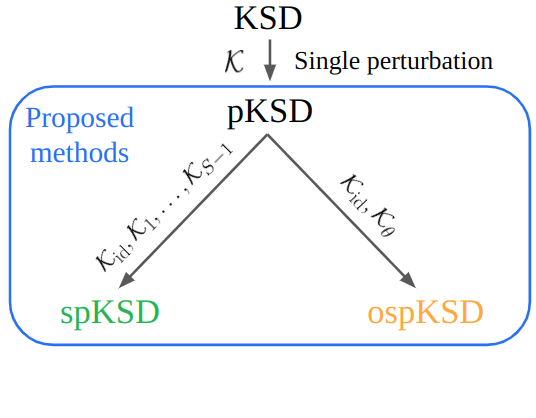}
    \vspace{-0.5cm}
    \caption{Power for a one-dimensional bimodal Gaussian target distribution $P$ with mixing weight $0.5$ and mode separation $\Delta$. The candidate distribution $Q$ is only the left component, from which 1000 samples are drawn. ospKSD and spKSD are our proposed method; the others are existing benchmarks. \emph{Top}: Rejection rates and target densities for varying $\Delta$; the orange and green lines overlap. \emph{Bottom left}: Density of $Q$ before and after $10$ steps of the perturbation described in Sec.~\ref{sec: proposed method} and density of the target $P$. \emph{Bottom right}: Connections between KSD and the proposed divergences.}
    \label{fig: bimodal delta and densities}
    \vspace{-1em}
\end{figure}
% ---------------------------------------------

\section{Kernelized Stein Discrepancy Test}
\label{sec: background}
Choosing the Stein operator $\calA_P$ in \eqref{eq: stein discrepancy} to be the operator mapping continuously differentiable, vector-valued functions $f: \bbR^d \to \bbR^d$ to scalar-valued functions via $\calA_p f(x) \coloneqq \langle \nabla \log p(x), f(x) \rangle + \langle \nabla, f(x) \rangle $, one obtains a statistical divergence which only depends on the score function of $P$. If $f \in \calF$ satisfies regularity conditions such as $\lim_{\|x\|_2 \to \infty} f(x) p(x) = 0$, then one can show that $\bbE_{x \sim P} [\calA_P f(x)] = 0$, and $f$ is said to lie in the \emph{Stein class} of $P$ \citep[Sec.~2.2]{liu2016kernelized}. The function class $\calF$ is usually chosen to be \emph{(i)} sufficiently broad so that the discrepancy separates distinct probability measures, i.e.\ $\sd(Q, P; \calF) = 0 \iff Q = P$, and \emph{(ii)} sufficiently regular so that the right-hand-side of (\ref{eq: stein discrepancy}) can be efficiently solved. 

To this end, \citet{liu2016kernelized, chwialkowski2016kernel} proposed to let $\calF$ be the unit ball of a \emph{reproducing kernel Hilbert space} (RKHS) \citep{berlinet2011reproducing}. Specifically, let $\calH$ be an RKHS associated with positive definite kernel $k: \calX \times \calX \to \bbR$. Let $\calF^d$ be the unit ball of the $d$-times Cartesian product $\calH^d \coloneqq \calH \times \cdots \times \calH$. Choosing $\calF = \calF^d$ and the operator $\calA_P$ yields the \emph{(Langevin) kernelized Stein discrepancy} (KSD): $\ksd(Q, P) \coloneqq \ksd_{\calF^d}(Q, P)$.

Assuming the kernel $k$ has continuous first-order derivatives with respect to both arguments, \citet[Thm.\ 2.1]{chwialkowski2016kernel} showed that KSD attains a closed form: $\ksd(Q, P) = \mathbb{E}_{x, x' \sim Q}[ u_P(x, x') ]$, 
% \begin{align}
%     \ksd(Q, P)
%     &= \mathbb{E}_{x, x' \sim Q}[ u_P(x, x') ], 
%     \label{eq: ksd}
% \end{align}    
where $x, x'$ are independent random variables drawn from $Q$, and $u_P$ is the \emph{Stein kernel}: $u_P(x, x') \coloneqq s_p(x)^\top k(x, x') s_p(x') + s_p(x)^\top \nabla_{x'} k(x, x') + \nabla_x k(x, x')^\top s_p(x') + \sum_{i=1}^d \frac{\partial^2}{\partial x_i \partial x'_i}  k(x, x')$. Notably, $u_P$ (hence also $\ksd(Q, P)$) depends on $p$ only through $s_p(x) = \nabla \log p(x)$, so KSD is computable even without the knowledge of the normalising constant of $p$. 

We will assume $k$ lies in the \emph{Stein class} of $p$ \citep[Def.~3.4]{liu2016kernelized}, so that $\ksd(P, P) = 0$. When $Q$ also admits a density $q$,
% \ad{Actually you don't need Q to have a density if you relax result a bit...}
and $k$ is $cc$-universal \citep{Sriperumbudur2011universality, Sriperumbudur2010relation} or integrally strictly positive definite \citep[Sec.~6]{stewart1976positive}, KSD is \emph{separating}, meaning that $\ksd(Q, P) = 0 \iff Q = P$, provided that $\bbE_{x \sim Q}[ \| s_q(x) - s_p(x) \|^2 ] < \infty$ \citep{chwialkowski2016kernel, liu2016kernelized}. The assumption that $Q$ has a density can be relaxed if the target density satisfies additional tail conditions, such as distant dissipativeness \citep[Proposition 4]{hodgkinson2020reproducing}. 

$\ksd(Q, P)$ can be estimated from a sample $\{ x_i \}_{i = 1}^n$ from $Q$ by the following U-statistic \citep[Sec.~5.5]{serfling2009approximation}:
\begin{talign}
    \ksdHat_P
    \coloneqq \frac{1}{n(n - 1)} \sum_{1 \leq i \neq j \leq n} u_P(x_i, x_j) .
    \label{eq: ksd u stat}
\end{talign}
The KSD test uses (\ref{eq: ksd u stat}) as a test statistic. The asymptotic distribution of $\ksdHat_P$ under $H_0$ has no closed form, but can be approximated with a bootstrap procedure \citep{huskova1993consistency} using the bootstrap samples
\begin{talign}
    \ksdHat^b_P \coloneqq \frac{1}{n^2} \sum_{1 \leq i \neq j \leq n} \left( w_i^b - 1\right) \left(w_j^b - 1 \right) u_P(x_i, x_j) ,
    \label{eq: bootstrap sample}
\end{talign}
where $(w_1^b, \ldots, w_n^b) \sim \textrm{Mult}\left(n; \frac{1}{n}, \ldots, \frac{1}{n}\right)$ follows a multinomial distribution. The test statistic $\ksdHat_P$ is compared against quantiles of $\{ \ksdHat^b_P \}_{b=1}^B $ computed with $B$ i.i.d.\ draws $(w_1^b, \ldots, w_n^b)$, and $H_0$ is rejected for large values of $\ksdHat_P$. The resulting test achieves the desired level $\alpha$ asymptotically \citep{huskova1993consistency, liu2016kernelized}.

Many improvements over the standard KSD test have been proposed, e.g., to reduce the computational cost \citep{jitkrittum2017linear}, to address the curse-of-dimensionality \citep{gong2021sliced, gong2021active}, and to avoid kernel selection by adopting an aggregated testing procedure \citep{schrab2022ksd}.

\section{Limitations of KSD Test}
\label{sec: Limitations of KSD as a Score-Based Discrepancy}
The KSD can be blind to certain discrepancies that are strongly visible in other metrics (e.g., in the $L_2$ norm). One example is mixtures of the same well-separated components, differing only by the mixing proportions (weights). 
In fact, the KSD will be small in settings where the score difference $\| s_p(x) - s_q(x) \|_2^2$ is large only with low $Q$-probability. This is because the KSD can be bounded from above by the \emph{Fisher Divergence} (FD) $F(q, p) \coloneqq \bbE_{x \sim Q}[ \| s_p(x) - s_q(x) \|_2^2 ]$ \citep[Thm.~5.1]{liu2016kernelized}.

This is known as the ``blindness'' of score-based discrepancies \citep{wenliang2020blindness} such as  KSD. This limitation of KSD has been highlighted in a number of works \citep{gorham2019measuring, matsubara2021robust, zhang2022towards, kanagawa2022controlling}; however, its implication to the test power in GOF tests has not yet been formalised.  
% In the GOF setting specifically, a recent work \citep{liu2022using} formalised this under a Gaussian mixture alternative, showing that the test power will converge to the prescribed test level as the mode separation goes to infinity, if the sample size does not grow exponentially fast.

In Prop.~\ref{prop: distribution under H1 converges to null distribution} (proved in Appendix~\ref{proof: distribution under H1 converges to null distribution}), we formally connect the blindness issue with the rates of increase of the sample size and the FD between the two distributions. 

% % --------------------- d-dim version
% \begin{proposition}
% \label{prop: distribution under H1 converges to null distribution}
%     Let $Q = \calN( 0, I_d)$ and $P = P_\Delta = \pi \calN( 0, I_d) + (1 - \pi) \calN( \Delta, I_d)$, respectively, where $\pi \in [0, 1]$ is the mixing proportion and $\Delta \in \bbR^d$. Suppose the kernel $k$ satisfies
%     \begin{talign}
%         \max\left\{ 
%         \bbE_{x, x' \sim Q}[ | k(x, x') | ], \ 
%         \bbE_{x, x' \sim Q}[ \| \nabla_{x'} k(x, x')\|_2^2], \ 
%         \bbE_{x, x' \sim Q}[ \| \nabla_{x } k(x, x')\|_2^2]
%         \right\} < \infty.
%         \label{eq: boundedness assumption}
%     \end{talign}
%     Let $x_1,x_2, \ldots$ be a sequence of i.i.d.\ samples from $Q$.
%     Suppose  $\Delta_1,\Delta_2,\dots \in \bbR^d$ is a sequence with $\| \Delta_\nu \|_2 \to \infty$ as $\nu \to \infty$. Let $n_1, n_2, \dots \in \mathbb{N}$ be such that  $n_\nu = o\left(e^{\|\Delta_\nu \|_2^2 / 64} \right)$. Then
%     \begin{talign}
%         n_\nu \ksdHat_{P_{\Delta_\nu}}
%         \to_d \sum_{j = 1}^\infty c_j (Z_j^2 - 1) 
%         \quad(\nu\to \infty)
%         \label{eq: limiting distribution ksdHat(Q, P_Delta)}
%     \end{talign}
%      where $\ksdHat_{P_{\Delta_\nu}}$ is computed using $x_1, \ldots, x_{n_\nu}$, $Z_j \sim \calN(0, 1)$ i.i.d.\ and $\{ c_j \}$ are the eigenvalues of the Stein kernel $u_P$ under $Q$.
% \end{proposition}

% % --------------------- end of d-dim version

% --------------------- general version
\begin{proposition}
    \label{prop: distribution under H1 converges to null distribution}
    Let $Q$ and $P_\nu$, $\nu = 1, 2, \ldots$, be probability measures defined on $\bbR^d$ with positive densities $q$ and $p_\nu$. respectively. Assume $\bbE_{x \sim Q}[ \| s_q(x) \|_2^2 ],\; \bbE_{x, x' \sim Q}[ u_Q(x, x')^2 ] < \infty$, and the kernel $k$ satisfies
    \begin{talign}
        \max\big\{ 
        & \bbE_{x, x' \sim Q}[ | k(x, x') | ], \; 
        \bbE_{x, x' \sim Q}[ \| \nabla_{x'} k(x, x')\|_2^2], \; \nonumber \\
        & \bbE_{x, x' \sim Q}[ \| \nabla_{x } k(x, x')\|_2^2]
        \big\} < \infty \;.
        \label{eq: boundedness assumption}
    \end{talign}
    Let $x_1,x_2, \ldots$ be a sequence of i.i.d.\ samples from $Q$. Denote by $F_\nu \coloneqq \bbE_{x \sim Q}[ \| s_{p_\nu}(x) - s_q(x) \|_2^2 ]$ the Fisher Divergence between $Q$ and $P_\nu$. If the sequence $n_1, n_2, \ldots \in \mathbb{N}$ satisfies $n_\nu \to \infty$ as $\nu \to \infty$ and $n_\nu = o( 1 / \max(F_\nu, F_\nu^{1/2}) )$, then
    \begin{talign}
        n_\nu \ksdHat_{P_{\nu}}
        \to_d \sum_{j = 1}^\infty c_j (Z_j^2 - 1) 
        \quad(\nu\to \infty) \;,
        \label{eq: limiting distribution ksdHat(Q, P_Delta)}
    \end{talign}
     where $\ksdHat_{P_{\nu}}$ is the sample KSD computed using $x_1, \ldots, x_{n_\nu}$, $Z_j \sim \calN(0, 1)$ i.i.d.\ and $\{ c_j \}$ are the eigenvalues of the Stein kernel $u_P$ under $Q$.
\end{proposition}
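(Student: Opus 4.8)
The plan is to write the Stein kernel $u_{P_\nu}$ of the moving target as the Stein kernel $u_Q$ of the fixed candidate plus a remainder that vanishes because the score difference $\delta_\nu \coloneqq s_{p_\nu} - s_q$ is small in $L^2(Q)$, i.e.\ $\bbE_{x \sim Q}[\|\delta_\nu(x)\|_2^2] = F_\nu \to 0$ (which the rate condition forces). Substituting $s_{p_\nu} = s_q + \delta_\nu$ into the closed form of $u_{P_\nu}$ and using its bilinearity in the score, one obtains $u_{P_\nu}(x,x') = u_Q(x,x') + R_\nu(x,x')$, where $R_\nu$ is a sum of terms each carrying one or two factors of $\delta_\nu$ --- two terms linear in $\delta_\nu$ (also involving $k$ or $\nabla k$ and $s_q$) and one quadratic in $\delta_\nu$ (involving $k$). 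Averaging over $i \neq j$ in (\ref{eq: ksd u stat}) then splits the statistic as $n_\nu \ksdHat_{P_\nu} = n_\nu \ksdHat_Q + n_\nu \hat R_\nu$, with $\ksdHat_Q$ and $\hat R_\nu$ the $U$-statistics built from $u_Q$ and $R_\nu$ on $x_1, \ldots, x_{n_\nu}$, and the goal becomes: (a) $n_\nu \ksdHat_Q \to_d \sum_j c_j(Z_j^2 - 1)$, and (b) $n_\nu \hat R_\nu \to_p 0$; the conclusion then follows by Slutsky's theorem.

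For (a), I would observe that $\ksdHat_Q$ is precisely the KSD $U$-statistic for $Q$ tested against itself. Writing $u_Q(x,x') = \langle \xi_Q(\cdot,x), \xi_Q(\cdot,x') \rangle_{\calH^d}$ with $\xi_Q(\cdot,x) \coloneqq k(\cdot,x) s_q(x) + \nabla_x k(\cdot,x)$, the assumption that $k$ lies in the Stein class of $Q$ gives $\bbE_{x'\sim Q}[\xi_Q(\cdot,x')] = 0$, hence $\bbE_{x'\sim Q}[u_Q(x,x')] = 0$ for $Q$-a.e.\ $x$, so $\ksdHat_Q$ is a \emph{degenerate} $U$-statistic. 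Together with $\bbE_{x,x'\sim Q}[u_Q(x,x')^2] < \infty$, the classical spectral limit theorem for degenerate $U$-statistics \citep[Sec.~5.5.2]{serfling2009approximation} yields $n \ksdHat_Q \to_d \sum_j c_j(Z_j^2 - 1)$ as $n \to \infty$, where $c_j$ are the eigenvalues of $f \mapsto \bbE_{x'\sim Q}[u_Q(\cdot,x') f(x')]$ on $L^2(Q)$; because $s_{p_\nu} \to s_q$ in $L^2(Q)$, these agree with the $c_j$ in the statement. Restricting to the sequence $n_\nu \to \infty$ preserves convergence in distribution, giving (a).

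For (b), since $\bbE[|\hat R_\nu|] \le \bbE[|R_\nu(x_1,x_2)|]$ for independent $x_1, x_2 \sim Q$ (triangle inequality), Markov's inequality reduces the claim to $n_\nu\, \bbE[|R_\nu(x_1,x_2)|] \to 0$. I would bound $\bbE[|R_\nu(x_1,x_2)|]$ term by term: Cauchy--Schwarz extracts a factor $(\bbE_x[\|\delta_\nu(x)\|_2^2])^{1/2} = F_\nu^{1/2}$ from each linear term and $\bbE_x[\|\delta_\nu(x)\|_2^2] = F_\nu$ (via Jensen) from the quadratic term, while the leftover factors are finite --- uniformly in $\nu$ --- by the hypotheses $\bbE_x[\|s_q(x)\|_2^2] < \infty$, $\bbE_{x,x'\sim Q}[u_Q(x,x')^2] < \infty$, and the moment bounds \eqref{eq: boundedness assumption}. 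This gives $\bbE[|R_\nu(x_1,x_2)|] \le C\max(F_\nu, F_\nu^{1/2})$ with $C$ independent of $\nu$, so $n_\nu\,\bbE[|\hat R_\nu|] \le C\, n_\nu \max(F_\nu, F_\nu^{1/2}) \to 0$ by the rate assumption $n_\nu = o(1/\max(F_\nu, F_\nu^{1/2}))$, proving (b).

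The main obstacle is the remainder bound in (b): one must pick the Cauchy--Schwarz split for each (vector-valued) term of $R_\nu$ so that the residual moment is genuinely controlled by the stated hypotheses --- the term quadratic in $\delta_\nu$, which entangles $x$ and $x'$ through $k(x,x')$, needs the most care --- and one must verify that the resulting constant $C$ does not depend on $\nu$, so that it is the \emph{rate} $n_\nu \max(F_\nu, F_\nu^{1/2}) \to 0$, and not merely $F_\nu \to 0$, that drives the limit. The decomposition, the invocation of the degenerate $U$-statistic CLT, and the Slutsky step are otherwise routine.
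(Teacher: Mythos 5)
Your proposal is correct and follows essentially the same route as the paper: split $n_\nu \ksdHat_{P_\nu}$ into $n_\nu \ksdHat_Q$ (handled by the degenerate $U$-statistic spectral CLT, which is exactly what the cited Theorem 4.1 of Liu et al.\ provides) plus a remainder, and kill the remainder via Markov's inequality and term-by-term Cauchy--Schwarz bounds that extract $F_\nu^{1/2}$ from the terms linear in the score difference and $F_\nu$ from the quadratic one, yielding an $O(\max(F_\nu, F_\nu^{1/2}))$ bound with a $\nu$-independent constant before invoking Slutsky. The paper's proof carries out precisely this decomposition and bookkeeping, so no substantive difference remains.
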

% --------------------- end of d-dim version

\begin{remark}
    The RHS of (\ref{eq: limiting distribution ksdHat(Q, P_Delta)}) is the limiting distribution of $\ksdHat_{P_\nu}$ under $H_0$ \citep{liu2016kernelized}. Hence, this result shows that if the sample size $n_\nu$ is $o(1 / \max(F_\nu, F_\nu^{1/2}))$, then the test power converges to the \emph{nominal level} of the test.
\end{remark}
\begin{remark}
    Assumption (\ref{eq: boundedness assumption}) is standard and holds for Inverse Multi-Quadrics (IMQ) and Radial Basis Function (RBF) kernels when $Q$ has a finite second moment. IMQ kernels are preferred as they have desired tail properties to ensure a \emph{convergence determining} KSD for target densities satisfying the distantly dissipative condition \citep{gorham2017measuring, hodgkinson2020reproducing}. This includes Gaussian mixtures with common covariance, as well as distributions strongly log-concave outside of a compact set, such as Bayesian linear, logistic, and Huber regression posteriors with Gaussian priors, c.f., \citet{gorham2019measuring,gorham2017measuring}. Prop.~\ref{prop: distribution under H1 converges to null distribution} does not contradict this result, as it considers a different regime where a \emph{sequence} of target distributions is of interest.
\end{remark}

Prop.~\ref{prop: distribution under H1 converges to null distribution} allows us to study the test power by analysing the FD. For instance, when $P$ is a mixture of two Gaussian components and $Q$ is one of its component, the FD decreases exponentially fast to 0 with the mode separation. Prop.~\ref{prop: distribution under H1 converges to null distribution} then implies that an unrealistically large sample size would be needed for the test to have a non-trivial power. This is formalised in the following result.

\begin{theorem}
\label{thm: multimodal}
    Let $Q = \calN( 0, I_d)$ and $P_\nu = \pi \calN( 0, I_d) + (1 - \pi) \calN( \Delta_\nu, I_d)$, where $\pi \in [0, 1]$ and $\Delta_\nu \in \bbR^d$. With the same notation in Prop.~\ref{prop: distribution under H1 converges to null distribution} and assuming $k$ satisfies (\ref{eq: boundedness assumption}), the limit (\ref{eq: limiting distribution ksdHat(Q, P_Delta)}) holds if $n_\nu = o\left(e^{\|\Delta_\nu \|_2^2 / 64} \right)$.
\end{theorem}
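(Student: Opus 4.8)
The plan is to deduce the claim from Proposition~\ref{prop: distribution under H1 converges to null distribution}. Its hypotheses other than the rate condition on $n_\nu$ are easy here: $Q=\calN(0,I_d)$ gives $s_q(x)=-x$, so $\bbE_{x\sim Q}[\|s_q(x)\|_2^2]=d<\infty$, and $\bbE_{x,x'\sim Q}[u_Q(x,x')^2]<\infty$ follows from (\ref{eq: boundedness assumption}) and the light tails of $Q$ for the standard choices of $k$ (the IMQ/RBF regime of the second remark after Prop.~\ref{prop: distribution under H1 converges to null distribution}). So the only real work is to show that $n_\nu = o(e^{\|\Delta_\nu\|_2^2/64})$ implies $n_\nu = o(1/\max(F_\nu,F_\nu^{1/2}))$, i.e.\ to upper bound the Fisher divergence $F_\nu$. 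Take $\pi\in(0,1)$ (the case $\pi=1$ is trivial since then $P_\nu=Q$). Writing $\phi_\mu$ for the $\calN(\mu,I_d)$ density and $p_\nu=\pi\phi_0+(1-\pi)\phi_{\Delta_\nu}$, differentiating $\log p_\nu$ yields the exact identity
\[
    s_{p_\nu}(x)-s_q(x)=w_\nu(x)\,\Delta_\nu,\qquad w_\nu(x)\coloneqq\frac{(1-\pi)\phi_{\Delta_\nu}(x)}{\pi\phi_0(x)+(1-\pi)\phi_{\Delta_\nu}(x)}\in[0,1],
\]
so $w_\nu(x)$ is the $P_\nu$-posterior weight of the displaced component at $x$, and $F_\nu=\|\Delta_\nu\|_2^2\,\bbE_{x\sim Q}[w_\nu(x)^2]$; all the exponential decay must come from $\bbE_{x\sim Q}[w_\nu(x)^2]$.

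To bound that expectation, use $\pi\phi_0+(1-\pi)\phi_{\Delta_\nu}\ge\pi\phi_0$ and $\phi_{\Delta_\nu}(x)/\phi_0(x)=\exp(\langle x,\Delta_\nu\rangle-\tfrac12\|\Delta_\nu\|_2^2)$ to get $0\le w_\nu(x)^2\le w_\nu(x)\le\min\{1,r_\nu(x)\}$ with $r_\nu(x)\coloneqq\tfrac{1-\pi}{\pi}\exp(\langle x,\Delta_\nu\rangle-\tfrac12\|\Delta_\nu\|_2^2)$, and split on the crossover event $\{r_\nu(x)\ge1\}=\{\langle x,\Delta_\nu\rangle\ge b_\nu\}$, $b_\nu\coloneqq\tfrac12\|\Delta_\nu\|_2^2+\log\tfrac{\pi}{1-\pi}$:
\[
    \bbE_{x\sim Q}[w_\nu(x)^2]\;\le\;\mathbb{P}_Q\!\left(\langle x,\Delta_\nu\rangle\ge b_\nu\right)+\bbE_Q\!\left[r_\nu(x)\,\indicator\{\langle x,\Delta_\nu\rangle<b_\nu\}\right].
\]
Under $Q$, $\langle x,\Delta_\nu\rangle\sim\calN(0,\|\Delta_\nu\|_2^2)$, so the first term is a Gaussian tail $\le\exp(-b_\nu^2/(2\|\Delta_\nu\|_2^2))$, while the second equals $\tfrac{1-\pi}{\pi}\Phi\big((b_\nu-\|\Delta_\nu\|_2^2)/\|\Delta_\nu\|_2\big)$ after completing the square in the Gaussian integral. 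Since $b_\nu/\|\Delta_\nu\|_2=\tfrac12\|\Delta_\nu\|_2+O(\|\Delta_\nu\|_2^{-1})$ and $(b_\nu-\|\Delta_\nu\|_2^2)/\|\Delta_\nu\|_2=-\tfrac12\|\Delta_\nu\|_2+O(\|\Delta_\nu\|_2^{-1})$, the bound $\Phi(-t)\le e^{-t^2/2}$ makes both terms $O_\pi(e^{-\|\Delta_\nu\|_2^2/8})$, hence $F_\nu\le C_\pi\,\|\Delta_\nu\|_2^2\,e^{-\|\Delta_\nu\|_2^2/8}$ for all large $\nu$, with $C_\pi$ depending only on $\pi$.

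To conclude: Prop.~\ref{prop: distribution under H1 converges to null distribution} also requires $n_\nu\to\infty$, so $n_\nu=o(e^{\|\Delta_\nu\|_2^2/64})$ forces $\|\Delta_\nu\|_2\to\infty$, hence $F_\nu\to0$, hence eventually $\max(F_\nu,F_\nu^{1/2})=F_\nu^{1/2}\le C_\pi^{1/2}\|\Delta_\nu\|_2\,e^{-\|\Delta_\nu\|_2^2/16}$, so
\[
    n_\nu\max(F_\nu,F_\nu^{1/2})=o\!\left(\|\Delta_\nu\|_2\,e^{\|\Delta_\nu\|_2^2/64-\|\Delta_\nu\|_2^2/16}\right)=o\!\left(\|\Delta_\nu\|_2\,e^{-3\|\Delta_\nu\|_2^2/64}\right)\to 0 .
\]
Thus $n_\nu=o(1/\max(F_\nu,F_\nu^{1/2}))$ and Prop.~\ref{prop: distribution under H1 converges to null distribution} gives (\ref{eq: limiting distribution ksdHat(Q, P_Delta)}). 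The factor-of-four gap between $1/64$ and the essentially optimal threshold $1/16$ merely absorbs the polynomial factor $\|\Delta_\nu\|_2$ and the square root in Prop.~\ref{prop: distribution under H1 converges to null distribution}.

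I expect the main obstacle to be the second paragraph — getting the correct rate $e^{-\|\Delta_\nu\|_2^2/8}$ for $\bbE_Q[w_\nu^2]$. Any crude estimate that uses $w_\nu\le r_\nu$ on all of $\bbR^d$ and then squares diverges like $e^{+\|\Delta_\nu\|_2^2}$, because $r_\nu$ is worthless beyond the crossover hyperplane $\{\langle x,\Delta_\nu\rangle=b_\nu\}$; one genuinely has to split there, and that hyperplane lies at Euclidean distance $\approx\|\Delta_\nu\|_2/2$ from the origin, so under $Q$ it is reached only with probability $\approx e^{-\|\Delta_\nu\|_2^2/8}$ — which is exactly the exponent appearing in the claim. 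A lesser nuisance is tracking the $\pi$-dependent constants and confirming they stay finite for each fixed $\pi\in(0,1)$; they necessarily blow up as $\pi\downarrow0$, consistent with the statement being informative only away from that degenerate endpoint.
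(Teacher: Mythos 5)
Your proof is correct and reaches the paper's conclusion by the same overall strategy --- verify the side conditions of Prop.~\ref{prop: distribution under H1 converges to null distribution} and then show the Fisher divergence $F_\nu$ decays fast enough that $n_\nu=o(e^{\|\Delta_\nu\|_2^2/64})$ implies $n_\nu=o(1/\max(F_\nu,F_\nu^{1/2}))$ --- but the key technical step is executed differently. The paper (Lemma~\ref{lemma: fisher div goes to zero}) starts from the same exact identity $\|s_{p_\nu}(x)-s_q(x)\|_2^2=w_\nu(x)^2\|\Delta_\nu\|_2^2$ but then splits $\bbR^d$ into a Euclidean ball $B_\delta$ and its complement, bounding $\Delta_\nu^\top x\le\delta\|\Delta_\nu\|_2$ on the ball by Cauchy--Schwarz and using the dimension-dependent Gaussian norm tail $5^d e^{-\delta^2/8}$ outside, finally tuning $\delta\propto\|\Delta_\nu\|_2$ to claim $F_\nu=o(e^{-\|\Delta_\nu\|_2^2/32})$. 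You instead split along the half-space $\{\langle x,\Delta_\nu\rangle\ge b_\nu\}$, which reduces everything to a one-dimensional Gaussian tail and a completed square, yielding the essentially optimal and dimension-free bound $F_\nu\le C_\pi\|\Delta_\nu\|_2^2 e^{-\|\Delta_\nu\|_2^2/8}$. This buys you two things: a clean margin over the $1/64$ threshold (your exponent $3/64$ of slack absorbs all polynomial factors), and independence from $d$. It also quietly repairs a quantitative looseness in the paper's own computation: with $\delta=8\|\Delta_\nu\|_2/17$ the outer term is $5^d\|\Delta_\nu\|_2^2e^{-8\|\Delta_\nu\|_2^2/289}$, not $e^{-\|\Delta_\nu\|_2^2/17}$ as written, and $8/289<1/32$, so the ball-based split as executed does not quite deliver the claimed $o(e^{-\|\Delta_\nu\|_2^2/32})$; your half-space argument sidesteps this entirely. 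Two minor remarks: you are right to flag that the moment conditions $\bbE_{x\sim Q}[\|s_q(x)\|_2^2]<\infty$ and $\bbE_{x,x'\sim Q}[u_Q(x,x')^2]<\infty$ must be checked (the paper invokes Prop.~\ref{prop: distribution under H1 converges to null distribution} without comment), and your restriction to $\pi\in(0,1]$ is necessary --- at $\pi=0$ one has $F_\nu=\|\Delta_\nu\|_2^2\to\infty$ and the conclusion fails, so the paper's stated range $\pi\in[0,1]$ should be read with that endpoint excluded.
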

The proof is in Appendix~\ref{proof: multimodal}. Figure~\ref{fig: bimodal delta and densities} provides numerical evidence for Thm.~\ref{thm: multimodal} by showing the rate of rejection over $100$ repetitions at level $\alpha = 0.05$. We observe that the power of the KSD test (with IMQ kernel whose bandwidth is chosen by median heuristic \citep{gretton2012kernel}) approaches the prescribed level for $\Delta \geq 6$. A similarly poor performance is observed for \textsc{KSDAgg} \citep{schrab2022ksd} and FSSD \citep{jitkrittum2017linear}, two variants of KSD. In comparison, our proposed test, called ospKSD and spKSD, achieve an almost perfect power. Notably, the problem of low test power persists even if the samples are drawn from both components but with a different weight; see Figure~\ref{fig: bimodal ratio_s} in Sec.~\ref{sec: experiments}.

\section{KSD Test with Perturbation}
\label{sec: proposed method}
We propose to increase the power of KSD test against \emph{multi-modal alternatives} by perturbing both the candidate and the target distributions with a set of \emph{Markov transition kernels} \citep[Chapter 6]{robert2004monte} and performing KSD tests on the \emph{perturbed} distributions. A Markov transition kernel is a function $\calK: \calX \times \mathcal{B}(\calX) \to [0, 1]$ such that \emph{(i)} for all $x \in \calX$, $\calK(x, \cdot)$ is a probability measure on $(\calX, \mathcal{B}(\calX))$, and \emph{(ii)} for all $A \in \mathcal{B}(\calX)$, $\calK(\cdot, A)$ is a measurable function on $\calX$. In our example, $\calK$ may also be an iterated composition of an underlying kernel, e.g.\ a Metropolis-Hastings kernel. The perturbed measure of $Q$ is $(\calK Q)(\cdot) \coloneqq \int_{\calX} \calK (x, \cdot) Q(dx)$, and similarly for $\calK P$.

\subsection{KSD with a Single Perturbation Kernel}
\label{subsec: perturbed kernel stein discrepancy}
We first consider a \emph{single} transition kernel $\calK$. We define the \emph{perturbed kernelized Stein discrepancy} (pKSD) as
\begin{talign}
    \vspace{-0.2cm}
    \ksd(Q, P; \calK)
    &\coloneqq \ksd( \calK Q, \calK P) \nonumber \\
    &= \sup_{f \in \calF^d} | \bbE_{x \sim \calK Q}[\calA_{\calK P} f(x)] | \;,
    \label{eq: pKSD}
\end{talign}
assuming $\calK P$ admits a continuously differentiable density so that its score function is well-defined. Notably, $\calK Q$ need not have a (Lebesgue) density for (\ref{eq: pKSD}) to exist.

The properties of pKSD are dictated by the operator $\calK$. A desirable choice should ensure that \emph{(i)} pKSD is well-defined, and in particular $\calK P$ should have a continuously differentiable density whenever $P$ does, \emph{(ii)} pKSD (\ref{eq: pKSD}) can be computed efficiently, and \emph{(iii)} the test can achieve a high power against alternatives with wrong mixing weights.

Given these \emph{desiderata}, we propose to choose a transition kernel $\calK$ that is \emph{$P$-invariant}, i.e.\ $P(\cdot) = \int_\calX \calK(x, \cdot) p(x) dx$. A $P$-invariant kernel ensures $\calK P = P$, so the score function $s_{\calK p} = s_p$ is unchanged after perturbation. This means \emph{(i)} and \emph{(ii)} are trivially satisfied. In particular, pKSD will have a closed-form expression
\begin{align*}
    \vspace{-0.2cm}
    \ksd(Q, P; \calK) = \mathbb{E}_{x, x' \sim \calK Q }[u_{P}(x, x')] \;,
\end{align*}
provided that $\bbE_{x \sim \calK Q}[ u_{P}(x, x) ] < \infty$ (e.g., \citet[Thm.~2.1]{chwialkowski2016kernel}). Moreover, the $P$-invariance allows a GOF test similar to the standard KSD test to be constructed, as we will elucidate in Sec.~\ref{sec: pksd with multiple transition kernels}. To address \emph{(iii)}, we employ a proposal map for $\calK$ that ``aggregates'' densities across the modes of the distribution. As we will demonstrate numerically, such a proposal is sensitive to discrepancies in mixing weights.

% The following result states that pKSD, similarly to KSD, can be rewritten analytically as a double-expectation. 
% \begin{proposition}[pKSD close form]
% \label{prop: pKSD closed form}
%     Assume $\calK$ is a $P$-invariant Markov transition kernel. If $\bbE_{x \sim \calK Q}[ u_{P}(x, x) ] < \infty$, then $\ksd(Q, P; \calK) = \mathbb{E}_{x, x' \sim \calK Q }[u_{P}(x, x')]$, where $u_{P}$ is the Stein kernel for $P$ as defined in Sec.~\ref{sec: background}.
% \end{proposition}
% \begin{proof}
%     This follows from \citet[Thm.~2.1]{chwialkowski2016kernel} applied to $\calK Q$ and $\calK P = P$.
% \end{proof}

Given i.i.d.\ $\{ x_i \}_i^n \sim Q$, a sample $\{ \tilde{x}_i \}_{i=1}^n$ from $\calK Q$ can be drawn by running 1-step transitions under $\calK$ starting from each $x_i$. pKSD can then be estimated by the U-statistic:
\begin{talign}
    \ksdHat_{P, \calK}
    \coloneqq \frac{1}{n(n - 1)} \sum_{1 \leq i \neq j \leq n} u_P(\tilde{x}_i, \tilde{x}_j) \;.
    \label{eq: pksd u stat}
    \vspace{-0.2cm}
\end{talign}

\begin{algorithm}[!t]
\caption{Goodness-of-Fit Test with spKSD.}
\label{alg: GoF with pKSD}
    \begin{algorithmic}
        \STATE {\bfseries Input:} Target $P$, observed sample $\{ x_i \}_{i=1}^n$ from $Q$, set of transition kernels $\calS = \{ \calK_s \}_{s=1}^S$ that includes $\calK_\textrm{id}$ (i.e., no perturbation), number of transition steps $T$.
        \STATE Estimate the mode $\{ \mu_1, \ldots, \mu_M \}$ and Hessians $\{ A_1, \ldots, A_M \}$ using Algorithm~\ref{alg: find mode approximations} in the Appendix.
        \STATE For $s = 1, \ldots, S$, perturb $\{ x_i \}_{i=1}^n$ with $\calK_s$ by $T$ steps to generate perturbed samples $\{ x_i^s \}_{i=1}^n$.
        \STATE Compute test statistic $\hat{\ksd}_{P, \calS}$ using (\ref{eq: spKSD u statistic}).
        \STATE Generate bootstrap samples with (\ref{eq: bootstrap sample}) with $u_P$ replaced by $\tilde{u}_P$, and find the $(1-\alpha)$-quantile $\hat{\gamma}_{1 - \alpha}$.
        \STATE Reject $H_0$ if $\ksdHat_{P, \calS} \geq \hat{\gamma}_{1 - \alpha}$.
    \end{algorithmic}
\end{algorithm}

\vspace{-0.14cm}
\subsection{KSD with Multiple Perturbation Kernels}
\label{sec: pksd with multiple transition kernels}
A single transition kernel can be limited in improving the test power against general multi-modal alternatives. It also does \emph{not} guarantee the \emph{separation} property, as $\ksd(\calK Q, \calK P) = 0 \centernot\implies Q = P$, unless $\calK$ is injective so that $\calK Q = \calK P \implies Q = P$ (such as the convolution operator). However, choosing only injective $\calK$ would significantly restrict the class of possible options. Instead, we propose to employ a finite \emph{collection} $\calS = \{ \calK_s \}_{s=1}^S$ of $P$-invariant transition kernels, and require $\calS$ to include the identity transition kernel $\calK_\textrm{id}$, defined as $\calK_\textrm{id}(x, A) = \delta_x(A)$ for all $x \in \calX$ and $A \in \calB(\calX)$, where $\delta_x(A) = 1$ if $x \in A$ and 0 otherwise. In particular, $\ksd(Q, P; \calK_{\textrm{id}})$ reduces to the standard KSD. This gives rise to a separating statistical divergence which we term \emph{sum-pKSD} (spKSD)
\begin{talign*}
    \ksd(Q, P; \calS)
    % \coloneqq \sum_{\calK \in \calS} \ksd(\calK Q, \calK P)
    % = \sum_{\calK \in \calS} \ksd(\calK Q, P) .
    \coloneqq \sum_{\calK \in \calS} \ksd(\calK Q, P) \;,
\end{talign*}
% For economy of notation, we 
where we have overloaded $\ksd(Q, P; \calS)$ with a set $\calS$ in place of a single transition kernel to denote spKSD. The next result (proved in Appendix~\ref{proof: spksd validity}) shows that spKSD indeed separates probability measures so long as $\calK_{\textrm{id}} \in \calS$.

% For a general $\calK$, pKSD \emph{no longer separates probability measures}, as $\ksd(\calK Q, \calK P) = 0 \centernot\implies Q = P$. One solution is to choose an injective $\calK$ in the sense that $\calK Q = \calK P \implies Q = P$, but this significantly restricts the class of possible choices. We propose to address this issue by employing a finite \emph{collection} $\calS = \{ \calK_s \}_{s=1}^S$ of transition kernels that leaves $P$ invariant, and requiring $\calS$ to include the identity transition kernel $\calK_\textrm{id}$, defined as $\calK_\textrm{id}(x, A) = \delta_x(A)$ for all $x \in \calX$ and $A \in \calB(\calX)$, where $\delta_x(A) = 1$ if $x \in A$ and 0 otherwise. With $\calK_{\textrm{id}}$, $\ksd(Q, P; \calK_{\textrm{id}})$ reduces to the standard KSD. This gives rise to a separating statistical divergence which we term \emph{Sum-pKSD} (spKSD)
% \begin{talign*}
%     \ksd(Q, P; \calS)
%     % \coloneqq \sum_{\calK \in \calS} \ksd(\calK Q, \calK P)
%     % = \sum_{\calK \in \calS} \ksd(\calK Q, P) .
%     \coloneqq \sum_{\calK \in \calS} \ksd(\calK Q, P) \;,
% \end{talign*}
% % For economy of notation, we 
% where we have overloaded $\ksd(Q, P; \calS)$ with a set $\calS$ in place of a single transition kernel to denote spKSD. The next result (proved in Appendix~\ref{proof: spksd validity}) shows that spKSD indeed separates probability measures so long as $\calK_{\textrm{id}} \in \calS$.

\begin{proposition}[spKSD separation]
\label{prop: spKSD validity}
    Suppose $Q, P$ are probability measures on $\calX$ that admit positive (Lebesgue) densities $q, p$, respectively. Further assume $\bbE_{x \sim \calK Q}[u_P(x, x)] < \infty$ for all $\calK \in \calS$ and $\bbE_{x \sim Q}[\| s_p(x) - s_q(x) \|_2^2] < \infty$. If the kernel $k$ is cc-universal and $\calK_\textrm{id} \in \calS$, then $\ksd(Q, P; \calS) \geq 0$ with equality if and only if $Q = P$.
\end{proposition}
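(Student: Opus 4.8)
The plan is to reduce the claim to two ingredients already available in the excerpt: the non-negativity of each summand $\ksd(\calK Q, P)$, and the separation property of the standard KSD recalled in Section~\ref{sec: background}. The role of the hypothesis $\calK_\textrm{id} \in \calS$ will be to supply a single summand that coincides with the ordinary KSD between $Q$ and $P$, to which the cc-universal separation result can be applied.

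For non-negativity, I would first use the $P$-invariance of each $\calK \in \calS$ to write $\calK P = P$, so that each term $\ksd(\calK Q, P) = \ksd(\calK Q, \calK P)$ is a genuine KSD; under the assumed integrability $\bbE_{x \sim \calK Q}[u_P(x,x)] < \infty$ it admits the closed form $\bbE_{x,x' \sim \calK Q}[u_P(x,x')] = \| \bbE_{x \sim \calK Q}[\calA_P k(x, \cdot)] \|_{\calH^d}^2 \geq 0$ (equivalently, the supremum definition of $\ksd(\calK Q, P)$ immediately gives a non-negative value). Summing over the finite collection $\calS$ yields $\ksd(Q, P; \calS) \geq 0$.

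For the direction $Q = P \Rightarrow \ksd(Q, P; \calS) = 0$: $P$-invariance gives $\calK Q = \calK P = P$ for each $\calK \in \calS$, so every summand equals $\ksd(P, P)$, which vanishes because $k$ lies in the Stein class of $p$. For the converse, suppose $\ksd(Q, P; \calS) = 0$. Being a finite sum of non-negative terms, it forces every summand to be zero; in particular the term indexed by $\calK_\textrm{id}$ is zero. Since $\calK_\textrm{id} Q = \int_\calX \delta_x(\cdot)\,Q(dx) = Q$, that term is exactly the standard KSD $\ksd(Q, P)$. I then invoke the separation property quoted in Section~\ref{sec: background}: with $k$ cc-universal, $Q$ possessing a positive density $q$, and $\bbE_{x\sim Q}[\|s_p(x) - s_q(x)\|_2^2] < \infty$, one has $\ksd(Q,P) = 0 \iff Q = P$, giving $Q = P$ and completing the proof.

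Structurally this argument is short, so the main obstacle is bookkeeping rather than mathematics: one must check that the stated hypotheses are exactly what is needed to (i) make each $\ksd(\calK Q, P)$ well-defined and non-negative even though $\calK Q$ may lack a Lebesgue density — this is handled by $\bbE_{x\sim\calK Q}[u_P(x,x)]<\infty$ together with $P$-invariance — and (ii) legitimately apply the cc-universal separation result, which is only available for the single well-behaved measure $Q$ and is accessed through the $\calK_\textrm{id}$ summand. It is worth flagging that the necessity of including $\calK_\textrm{id}$ in $\calS$ is the genuinely load-bearing assumption here: without it, a non-injective $\calK$ could map distinct $Q, P$ to the same perturbed measure and destroy separation, as already noted immediately before the statement.
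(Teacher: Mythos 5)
Your proof is correct and follows essentially the same route as the paper's: each summand $\ksd(\calK Q, P)$ is a well-defined, non-negative KSD by $P$-invariance and the integrability assumption, so the sum dominates the $\calK_{\textrm{id}}$ term, which is the standard KSD and separates $Q$ from $P$ under the cc-universality and finite-Fisher-divergence hypotheses. The paper's proof is just a more compressed statement of the same argument (it phrases the key step as $\ksd(Q,P;\calS) \geq \ksd(Q,P)$), and your explicit check of the forward direction via $\calK Q = \calK P = P$ and $\ksd(P,P)=0$ matches what the paper leaves implicit.
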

The assumption that the alternative distribution $Q$ also admits a density is common in KSD literature when proving separation (e.g., \citet{liu2016kernelized, chwialkowski2016kernel, jitkrittum2017linear, gong2021active}), but it can be relaxed if $P$ is light-tailed or distantly dissipitative \citep{hodgkinson2020reproducing, gorham2017measuring}.

spKSD can also be written as a double expectation akin to KSD, provided $\bbE_{x \sim \calK_s Q}[ u_{P}(x, x) ] < \infty$ for all $s$. This allows spKSD to be estimated given a random sample $\{ x_i \}_{i=1}^n$ from $Q$. Formally, for each $\calK_s \in \calS = \{ \calK_1, \ldots, \calK_S \}$, a sample $\{ x_i^s \}_{i=1}^n$ from $\calK_s Q$ can be drawn by running 1-step transitions under $\calK_s$ starting from each $x_i$. Denote by $x_i^{1:S} \coloneqq \textrm{concat}(x_i^1, \ldots, x_i^S)$ the concatenation of $x_i^1, \ldots, x_i^S$ into a single vector. We propose to estimate $\ksd(Q, P; \calS)$ using the following U-statistic
\begin{talign}
    \vspace{-0.2cm}
    \hat{\ksd}_{P, \calS}
    &\coloneqq 
    \frac{1}{n(n - 1)} \sum_{1 \leq i \neq j \leq n} \tilde{u}_P(x_i^{1:S}, x_j^{1:S}) \;,
    \label{eq: spKSD u statistic}
\end{talign}
where $\tilde{u}_P(x_i^{1:S}, x_j^{1:S}) \coloneqq \sum_{s=1}^S u_P(x_i^s, x_j^s)$.

\subsection{GOF Testing with spKSD}
\label{subsec: pKSD with Markov Transition Kernels}

Having constructed a test statistic for spKSD in the form of a U-statistic, the next result (proved in Appendix~\ref{proof: asymptotic distribution of spKSD}) derives the limiting distribution of spKSD statistic under the null and alternative hypotheses. We denote by $R_Q$ the distribution of $x_i^{1:S}$ constructed as before and use the same notations as in Prop.~\ref{prop: spKSD validity}.

\begin{proposition}[Asymptotic distributions of spKSD]
\label{prop: asymptotic distribution of spKSD}
    Suppose the assumptions in Prop.~\ref{prop: spKSD validity} hold, and further assume $\bbE_{w, w' \sim R_Q}[ \tilde{u}_P(w, w')^2 ] < \infty$.  
    Let $\{ z_j \}_{j \geq 1}$ be independent draws from $\mathcal{N}(0, 1)$ and denote by $\{c_j\}_{j \geq 1}$ the eigenvalues of $\tilde{u}_{P}$ under $R_Q$, i.e., the solutions of $c_j \phi_j(\cdot) = \bbE_{w \sim R_Q}[ \tilde{u}_P(\cdot, w) \phi_j(w) ]$ for non-zero $\phi_j$. As $n \to \infty$,
    \begin{proplist}
        \vspace{-0.2cm}
        \item Under $H_0: Q = P$, we have $n \ksdHat_{P, \calS} \to_d \sum_{j = 1}^\infty c_j (z_j^2 - 1)$.
        \item Under $H_1: Q \neq P$, we have $\sigma_u^2 \coloneqq 4 \mathrm{Var}_{w \sim R_Q}( \mathbb{E}_{w' \sim R_Q}[\tilde{u}_{P}(w, w') ] ) > 0$, and $\sqrt{n} (\ksdHat_{P, \calS} - \ksd( Q, P; \calS)) \to_d \mathcal{N}(0, \sigma_u^2)$.
    \end{proplist}
\end{proposition}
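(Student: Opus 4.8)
The plan is to recognise $\ksdHat_{P, \calS}$ as a degree-two U-statistic with symmetric kernel $\tilde{u}_P$ evaluated on the i.i.d.\ sample $w_1, \dots, w_n \sim R_Q$ (i.i.d.\ because the base points $x_1, \dots, x_n \sim Q$ are i.i.d.\ and the $S$ one-step perturbations are run independently for each $i$), and then to invoke the classical limit theory for U-statistics \citep[Sec.~5.5]{serfling2009approximation}, mirroring how this was done for the plain KSD statistic in \citet[Thm.~4.1]{liu2016kernelized} but with $(\tilde{u}_P, R_Q)$ in place of $(u_P, Q)$. The hypothesis $\bbE_{w, w' \sim R_Q}[\tilde{u}_P(w, w')^2] < \infty$ is exactly what those theorems require, and $\bbE_{x \sim \calK_s Q}[ u_P(x, x) ] < \infty$ (from Prop.~\ref{prop: spKSD validity}) supplies trace-class-ness of the relevant integral operator. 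Everything then reduces to (a) identifying the first Hájek projection $h_1(w) \coloneqq \bbE_{w' \sim R_Q}[ \tilde{u}_P(w, w') ]$ and its mean $\bbE_w[ h_1(w) ] = \ksd(Q, P; \calS)$ (the latter via the closed form $\ksd(\calK Q, P) = \bbE_{x, x' \sim \calK Q}[ u_P(x, x') ]$), and (b) deciding whether the statistic is degenerate.

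\emph{Null case.} If $Q = P$, then $P$-invariance of each $\calK_s$ gives $\calK_s Q = \calK_s P = P$, hence $h_1(w) = \sum_{s=1}^S \bbE_{y \sim P}[ u_P(x^s, y) ]$; and since $k$ lies in the Stein class of $p$, $\bbE_{y \sim P}[ u_P(x, y) ] = 0$ for every $x$, so $h_1 \equiv 0$ and the statistic is first-order degenerate. Writing $\tilde{u}_P(w, w') = \langle \Phi(w), \Phi(w') \rangle$ with the block feature map $\Phi(w) = ( \xi_p(x^1), \dots, \xi_p(x^S) )$, $\xi_p(x) = s_p(x) k(x, \cdot) + \nabla_x k(x, \cdot)$, shows $\tilde{u}_P$ is a symmetric positive-definite kernel, square-integrable under $R_Q$, so it has a Mercer expansion $\tilde{u}_P(w, w') = \sum_j c_j \varphi_j(w) \varphi_j(w')$ with $c_j \geq 0$ and centred orthonormal $\varphi_j$. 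The limit theorem for degenerate U-statistics then gives $n \ksdHat_{P, \calS} \to_d \sum_j c_j (z_j^2 - 1)$, which is (i).

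\emph{Alternative case.} If $Q \neq P$, Prop.~\ref{prop: spKSD validity} (using $\calK_\textrm{id} \in \calS$ and cc-universality) gives $\ksd(Q, P; \calS) = \sum_s \ksd(\calK_s Q, P) > 0$, so the statistic is non-degenerate in mean; it remains to show $\sigma_u^2 = 4\,\mathrm{Var}_w( h_1(w) ) > 0$. Write $h_1(w) = \sum_s g_s(x^s)$ with $g_s(x) = \bbE_{y \sim \calK_s Q}[ u_P(x, y) ] = \langle \xi_p(x), \bar\phi_s \rangle = \calA_P \bar\phi_s(x)$, where $\bar\phi_s \coloneqq \bbE_{y \sim \calK_s Q}[ \xi_p(y) ] \in \calH^d$ is a Bochner integral (finite since $\bbE_{\calK_s Q}[ \| \xi_p \| ] \leq \bbE_{\calK_s Q}[ u_P(x, x) ]^{1/2} < \infty$), so $\| \bar\phi_s \|^2 = \ksd(\calK_s Q, P)$ and $\sum_s \| \bar\phi_s \|^2 = \ksd(Q, P; \calS) > 0$. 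Suppose $\sigma_u^2 = 0$; then $h_1 \equiv \ksd(Q, P; \calS)$ $R_Q$-a.s. Conditioning on the shared base point $x \sim Q$ (so $x^s \mid x \sim \calK_s(x, \cdot)$) gives $\sum_s (\calK_s g_s)(x) = \ksd(Q, P; \calS)$ for $Q$-a.e.\ $x$, hence for Lebesgue-a.e.\ $x$ (as $q > 0$), hence $P$-a.e.\ $x$. Integrating against $P$ and using $P$-invariance, $\bbE_{x \sim P}[ (\calK_s g_s)(x) ] = \bbE_{x \sim P}[ g_s(x) ] = \bbE_{x \sim P}[ \calA_P \bar\phi_s(x) ] = 0$ (the last step because $\bar\phi_s \in \calH^d$ lies in the Stein class of $p$), so $\ksd(Q, P; \calS) = 0$, a contradiction. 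Hence $\sigma_u^2 > 0$, and the classical CLT for non-degenerate U-statistics gives $\sqrt{n}( \ksdHat_{P, \calS} - \ksd(Q, P; \calS) ) \to_d \calN(0, 4\,\mathrm{Var}_w( h_1(w) )) = \calN(0, \sigma_u^2)$, which is (ii).

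\emph{Main obstacle.} The bookkeeping is routine given the stated moments: the normalisation $\tfrac{1}{n(n-1)} \sum_{i \neq j} = \binom{n}{2}^{-1} \sum_{i < j}$, the exchange of expectation with the Bochner integral, and the regularity of the Mercer expansion. The one genuinely delicate point is $\sigma_u^2 > 0$ under $H_1$: the argument above leans on three structural facts, namely positivity of both $q$ and $p$ (to upgrade ``$Q$-a.e.'' to ``$P$-a.e.''), $P$-invariance of every $\calK_s$, and the Stein identity for RKHS functions, so if the positivity of $q$ is relaxed (e.g.\ compactly supported $Q$) this step must be reworked via the light-tail / distantly-dissipative relaxation mentioned after Prop.~\ref{prop: spKSD validity}.
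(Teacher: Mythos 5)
Your proposal is correct and follows essentially the same route as the paper: both reduce the claim to the sufficient conditions of \citet[Sec.~5.5]{serfling2009approximation} for the U-statistic with kernel $\tilde{u}_P$ on i.i.d.\ $R_Q$-draws, both obtain degeneracy under $H_0$ from $P$-invariance plus the Stein identity, and both prove non-degeneracy under $H_1$ by a contradiction that transfers the a.s.\ constancy of $h_1$ from $Q$ to $P$ via positivity of $q$ and then invokes the Stein identity and the separation property of Prop.~\ref{prop: spKSD validity} (your conditioning on the shared base point $x$ is a slightly cleaner way of executing the paper's ``$R_Q$-a.s.\ implies $R_P$-a.s.''\ step). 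The only piece of the paper's proof you omit is the verification that $\xi_2 \coloneqq \mathrm{Var}_{w, w' \sim R_Q}(\tilde{u}_P(w, w')) > 0$ under $H_0$, which Serfling's degenerate-U-statistic theorem formally requires; the paper establishes it by the same separation-based contradiction (if $\xi_2 = 0$ then $\tilde{u}_P \equiv 0$ $P$-a.s., forcing $\ksd(Q', P; \calS) = 0$ for every $Q'$), and your Mercer-expansion setup would let you patch this in one line.
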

Prop.~\ref{prop: asymptotic distribution of spKSD} assumes Q also admits a Lebesgue density; when it does not, the stated results still hold true if we additionally assume \emph{i)} the conditions on $Q$ in Prop.~\ref{prop: spKSD validity} for KSD to separate probability measures, and \emph{ii)} $R_Q(A) > 0$ whenever $R_P(A) > 0$ for any measurable set $A \subset \calX^S$.

Similarly to the case with the standard KSD, the cumulative density function of the limiting distribution under $H_0$ has no closed-form expression, but the same bootstrap technique can be employed to estimate the $p$-value using the perturbed samples. The complete algorithm of goodness-of-fit testing with pKSD is given in Algorithm~\ref{alg: GoF with pKSD}.

\section{A Transition Kernel for Multi-Modal Alternatives}
\label{sec:transition kernel}
We consider transition kernels of the Metropolis-Hastings (MH) type \citep{metropolis1953equation, hastings1970monte}. At a current state $x$, a new state $x'$ is proposed by first generating a $d_u$-dimensional random vector $u$ from some known density $g$, then mapping to $x' = h(x | u)$, where $h(\cdot | u)$ is some deterministic, invertible function that is differentiable with differentiable inverse. The proposed state $x'$ is hence a  deterministic function given $x$ and $u$.

We choose in this paper a density $g$ defined on some discrete space $\calU$. The transition kernel is
\begin{talign*}
    \calK(x, A) = \sum_{u \in \calU} \delta_{x'}(A) g( u) \alpha(x, x') + \delta_x(A) r(x) , 
\end{talign*}
where $x' = h(x | u)$ is the proposed state, $\alpha(x, x')$ is an accept-reject rule that guarantees $P$-invariance, $\delta_x(A) = 1$ if $x \in A$ and 0 otherwise, and $r(x) = 1 - \sum_{ u \in \calU} g(u) \alpha(x, x')$. The accept-reject rule $\alpha(x, x')$ is designed to satisfy the \emph{detailed balance condition}:
\begin{talign}
    &\int_{x \in A} \sum_{u \in \calU} \delta_{x'}(B) p(x) g(u) \alpha(x, x') dx \nonumber \\
    &=  \int_{x'\in B} \sum_{u' \in \calU} \delta_{x}(A) p(x') g(u') \alpha(x', x) dx' ,
    \label{eq: detailed balance}
\end{talign}
for all $A, B \in \mathcal{B}(\calX)$. One valid choice is
\begin{talign}
    \alpha(x, x') 
    = 
    \min\left(1, \frac{p(x')g(u')} { p(x) g(u)} \left| \frac{\partial h(x | u)}{\partial x} \right| \right) ,
    \label{eq: mh rule}
\end{talign}
if $x' = h(x | u)$ and $x = h^{-1}(x' | u')$ for some $u, u' \in \calU$, and zero otherwise. Here, $\partial h(x | u) / \partial x$ denotes the Jacobian of the transformation from $x$ to $x'$. Appendix~\ref{appendix: balancing equation} proves that $\alpha(x, x')$ indeed satisfies (\ref{eq: detailed balance}). The accept-reject rule (\ref{eq: mh rule}) resembles those used in Reversible-Jump MCMC \citep{green1995reversible,green2009reversible} and generalises the well-known MH rule, for which the determinant of the Jacobian is 1. 
% Other valid choices of accept-reject rules also exist, e.g., the \emph{Barker's rule} \citep{peskun1973optimum, tierney1998note, livingstone2021barker} (see Appendix~\ref{appendix: balancing equation}).

\subsection{Choosing the Proposal Density}
\label{subsec: choosing the proposal density}
We propose a jump proposal $h(x | u)$ that superposes masses at each mode of $p$. Our choice is motivated by Markov kernels used in the optimisation-based MCMC literature, specifically the \emph{deterministic jumps} proposal in \citet{pompe2020framework}. New states are proposed by randomly selecting a mapping from a set of candidates that are constructed using the location and geometry of the modes of $p$. The resulting kernel is \emph{not} irreducible, so the limiting distribution is not necessarily $P$. Non-irreducibility is essential for the proposed test to work since, under the alternative, the transition kernel should perturb $Q$ to some other distribution for which the KSD between $P$ and the perturbed distribution becomes larger compared with the KSD with the un-perturbed one. This is in contrast to MCMC, which requires irreducibility so that asymptotically the chain can sample from the target distribution.

Denote by $\mu_1, \ldots, \mu_M \in \bbR^d$ the modes  of the density $p$, and $A_1, \ldots, A_M \in \bbR^{d \times d}$ the \emph{inverse} of the Hessian matrices at those points; how to estimate these quantities will be discussed later. When $p$ is a mixture of elliptic distributions such as Gaussian or multivariate $t$-distributions, each $A_m $ can be viewed as the covariance matrix of a component. When the Hessians do not exist (e.g., $-\log p$ is not twice differentiable), we can set $A_m = I_d$ and the remaining discussion still follows.

For a current state $x$, our proposal randomly selects a pair of modes and attempts to map $x$ from one mode to the ``corresponding'' point $x'$ in the other. Formally, let $u = (u_1, u_2) \sim \textrm{Unif}(\{(i, j): 1 \leq i \neq j \leq M \})$ be a uniform random vector over the index set of all $M(M - 1)$ pairs of \emph{distinct} (and ordered) modes, i.e., $g(u) = 1/(M(M-1))$ for all $u$. Given a fixed constant $\theta > 0$, the proposal map is
\begin{talign*}
    \vspace{-0.5em}
    h(x | u)
    = h_\theta(x | u)
    = A_{u_2}^{1/2} A^{-1/2}_{u_1}( x - \theta \mu_{u_1}) + \theta \mu_{u_2} \;,
    % \label{eq: jump proposal}
    \vspace{-0.5em}
\end{talign*}
with the inverse map $h^{-1}(x' | u) = A_{u_1}^{1/2} A^{-1/2}_{u_2}( x - \theta \mu_{u_2}) + \theta \mu_{u_1}$. Intuitively, $h$ sends points from mode $\mu_{u_1}$ to $\mu_{u_2}$ allowing for scaling by local Hessians, and $h^{-1}$ performs the opposite operation. The constant $\theta$ is a hyperparameter introduced to control the scale of the jump, which can increase the ability to detect discrepancies in the mixing weights. Herein, we call $\theta$ the \emph{jump scale}.

Given a current state, our proposal chooses two modes randomly, so a proposed state can potentially lie in a low-density region, thus leading to a low acceptance probability. \citet{pompe2020framework} address this by recording an auxiliary variable for the mode index and augmenting the state space to $\calX \times \{ 1, 2, \ldots, M(M-1)\}$, so that at every step the new state is \emph{guaranteed} to lie near a mode. However, the same trick cannot be used in our case because the augmented density no longer has a well-defined score function. 
% In our experiment, we find that our jump proposal can achieve a significant increase in power against mixtures of elliptical distributions that disagree in the weights.

% -------------------------------------
\subsection{Understanding the Source of Test Power}
\label{sec: power intuition}
\begin{figure}
    \centering
    \includegraphics[width=0.48\textwidth]{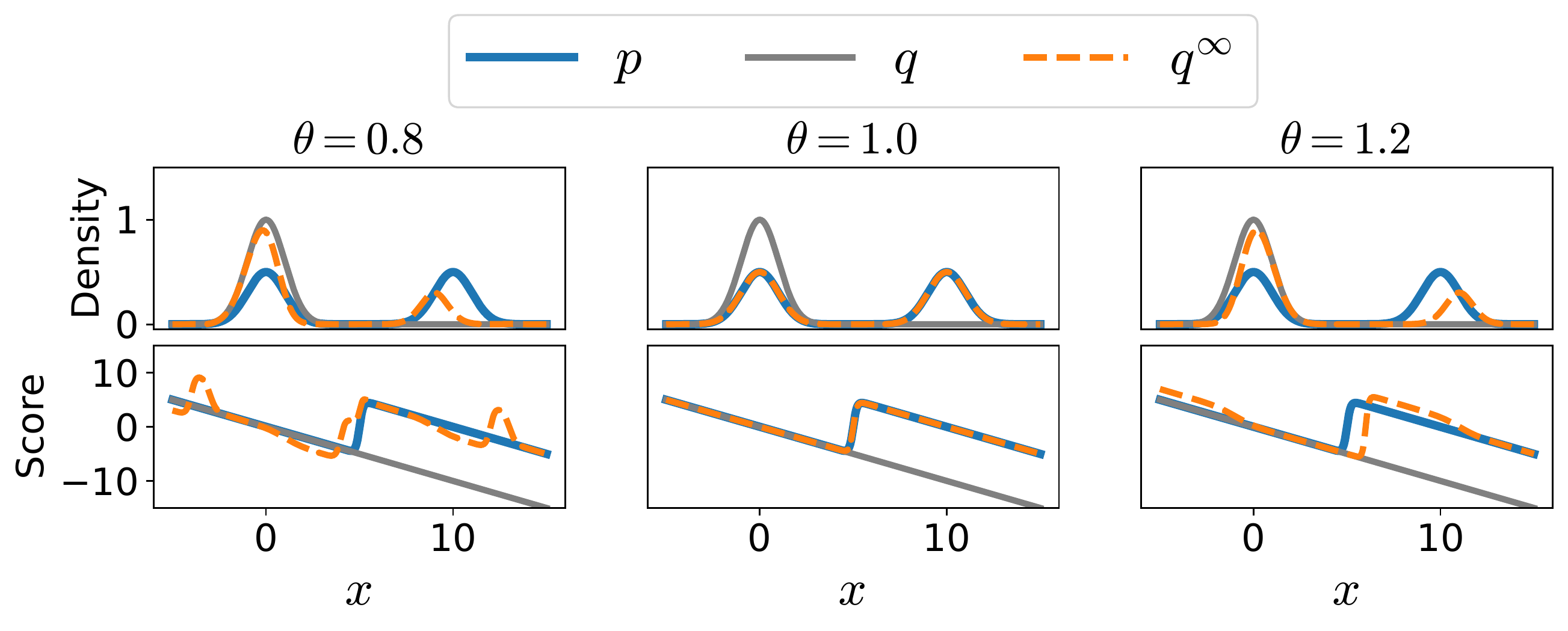}
    \includegraphics[width=0.4\textwidth]{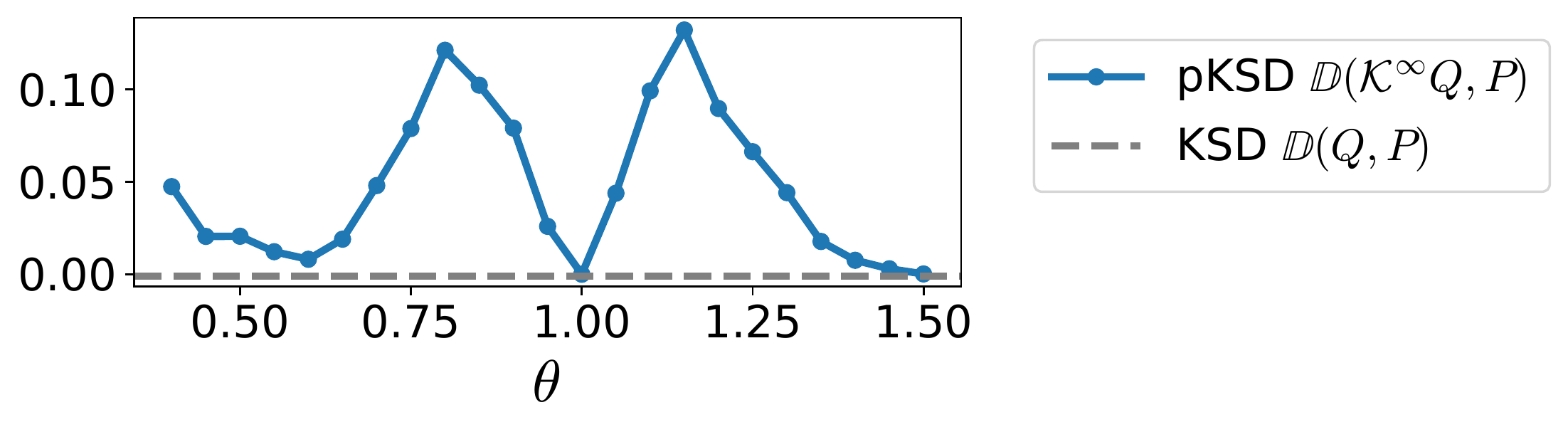}
    \vspace{-0.4cm}
    \caption{Top: Densities and score functions of $p, q$ and the limiting distribution $q^\infty$ in (\ref{eq: limiting distribution}). Bottom: pKSD with different jump scales $\theta$, compared with KSD.}
    \vspace{-0.5cm}
    \label{fig: transition kernel interpretation}
\end{figure}
To understand the improvement in test power against multi-modal alternatives, we characterise the limiting distribution of a general distribution $Q$ with a positive density $q$ when we apply the perturbation with infinitely many steps (i.e., $\calK^T$ with $T = \infty$). For simplicity, we assume $M = 2$ and $A_1 = A_2 = I_d$ are identity matrices, so that the proposal function is $h_\theta(x | u) = x - \theta (\mu_{u_1} - \mu_{u_2})$ for $x \in \calX$ and $u = (u_1, u_2) \in \calU = \{ (i, j): 1 \leq i \neq j \leq 2 \}$. Thus,  given a current state $x$, the transition kernel proposes moves to  $ x + \theta(\mu_1 - \mu_2)$ and $x - \theta(\mu_1 - \mu_2) $ with equal probability.

\begin{proposition}
\label{prop: limiting distribution}
    Under the assumptions of Sec.~\ref{sec: power intuition}, the limiting distribution under $\calK$ with the initial distribution $Q$ is $(\calK^\infty Q) (A) = \int_{x \in A} q^{\infty}(x) dx$, $A \in \calB(\calX)$, where
    \begin{talign}
        q^\infty (x) 
        \coloneqq p(x) \frac{\sum_{s \in \bbZ} q(x + s\nu)}{ \sum_{k \in \bbZ} p(x + k\nu) } \;,
        \label{eq: limiting distribution}
    \end{talign}
    and $\nu \coloneqq \theta(\mu_1 - \mu_2)$.
\end{proposition}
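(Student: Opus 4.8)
The plan is to exploit that, under the assumptions of this section, the proposal is a pure translation: for $u=(1,2)$ we have $h_\theta(x\mid u)=x-\nu$ and for $u=(2,1)$ we have $h_\theta(x\mid u)=x+\nu$, each selected with probability $g(u)=1/2$, and the Jacobian determinant of $h_\theta(\cdot\mid u)$ is $1$. Hence the acceptance rule (\ref{eq: mh rule}) collapses to $\alpha(x,x')=\min\{1,p(x')/p(x)\}$, and $\calK$ only ever moves a state to $x\pm\nu$. Consequently, for any starting point $x_0$, the Markov chain driven by $\calK$ stays forever on the countable lattice $L_{x_0}:=\{x_0+s\nu:s\in\bbZ\}$. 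First I would fix $x_0$ and, via the identification $x_0+s\nu\leftrightarrow s$, view the restricted chain as a nearest-neighbour Metropolis random walk on $\bbZ$ with transition probabilities $P(s,s\pm1)=\tfrac12\min\{1,p_{s\pm1}/p_s\}$ and $P(s,s)=1-P(s,s+1)-P(s,s-1)$, where $p_s:=p(x_0+s\nu)>0$.

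Next I would show this walk converges to the right stationary law. Reversibility w.r.t.\ $\pi^{x_0}_s\propto p_s$ is immediate from $p_sP(s,s+1)=\tfrac12\min\{p_s,p_{s+1}\}=p_{s+1}P(s+1,s)$; positivity of $p$ makes the walk irreducible; and it is positive recurrent precisely when $\Sigma_p(x_0):=\sum_{s\in\bbZ}p(x_0+s\nu)<\infty$, in which case its unique stationary distribution is $\pi^{x_0}(x_0+s\nu)=p(x_0+s\nu)/\Sigma_p(x_0)$. I would verify $\Sigma_p(x_0)<\infty$ for Lebesgue-a.e.\ $x_0$ by integrating the $\nu$-periodic function $x\mapsto\Sigma_p(x)$ over a fundamental domain $D\subset\bbR^d$ for the translation $x\mapsto x+\nu$ (e.g.\ a slab $\{x:\langle x,\nu\rangle\in[0,\|\nu\|_2^2)\}$): by Tonelli, $\int_D\Sigma_p(x)\,dx=\sum_{s}\int_{D+s\nu}p(y)\,dy=\int_{\bbR^d}p(y)\,dy=1$, so $\Sigma_p<\infty$ a.e.\ on $D$ and hence everywhere by periodicity; the same applies to $\Sigma_q(x):=\sum_s q(x+s\nu)$. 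Aperiodicity follows because $\Sigma_p(x_0)<\infty$ forces $p_s\to0$, so $p_{s^*+1}<p_{s^*}$ for some $s^*$, which gives $P(s^*,s^*)>0$. The standard convergence theorem for irreducible aperiodic positive-recurrent Markov chains on a countable state space then yields $\|\calK^T(x_0,\cdot)-\pi^{x_0}\|_{\mathrm{TV}}\to0$ as $T\to\infty$ for a.e.\ $x_0$.

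Finally I would assemble the limiting distribution $R:=\calK^\infty Q$, defined as the mixture $R(A)=\int_{\bbR^d}\pi^{x_0}(A)\,q(x_0)\,dx_0$. That $\calK^T Q\to R$ is justified by dominated convergence, since $\|\calK^T Q-R\|_{\mathrm{TV}}\le\int\|\calK^T(x_0,\cdot)-\pi^{x_0}\|_{\mathrm{TV}}\,q(x_0)\,dx_0$ with integrand bounded by $1$ and tending to $0$ a.e. For a bounded measurable $\varphi$, $\int\varphi\,dR=\int q(x_0)\sum_{s}\varphi(x_0+s\nu)\frac{p(x_0+s\nu)}{\Sigma_p(x_0)}\,dx_0$; substituting $x=x_0+s\nu$ in the $s$-th summand (so $\Sigma_p(x_0)=\Sigma_p(x)$ by $\nu$-periodicity) and reindexing the resulting sum over $s$ gives $\int\varphi\,dR=\int\varphi(x)\,p(x)\,\frac{\Sigma_q(x)}{\Sigma_p(x)}\,dx$, identifying the density of $R$ as $q^\infty$ in (\ref{eq: limiting distribution}). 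A last slab computation, using $\nu$-periodicity of $\Sigma_q/\Sigma_p$, confirms $\int_{\bbR^d}q^\infty=\int_D\Sigma_q(y)\,dy=\int_{\bbR^d}q=1$, so $R$ is a genuine probability measure.

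The main obstacle is not any individual computation but getting the almost-everywhere and convergence-mode statements right: one must confirm $\Sigma_p,\Sigma_q<\infty$ a.e.\ (so $\pi^{x_0}$ is well-defined $Q$-a.s.), rule out periodicity of the lattice walks, and cleanly justify both the per-lattice convergence theorem and the interchange of the limit in $T$ with the integration over $x_0$. The change-of-variables bookkeeping is routine once the lattice picture is in place — the key observation being that the normaliser $\Sigma_p$ is constant along each lattice and therefore passes through the substitution unchanged.
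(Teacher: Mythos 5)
Your proof is correct and follows essentially the same route as the paper's: restrict the chain to the translation lattices $\calI_{x_0}=\{x_0+s\nu:s\in\bbZ\}$, identify the stationary law proportional to $p$ on each lattice via detailed balance, invoke the convergence theorem for irreducible, aperiodic, positive-recurrent countable-state chains, and then pass to a general initial law $Q$ by dominated convergence followed by the change of variables and reindexing that produces $q^\infty$. Your write-up is in fact slightly more careful on two points the paper glosses over: you verify via the fundamental-domain/Tonelli argument that $\sum_{s\in\bbZ}p(x_0+s\nu)<\infty$ for a.e.\ $x_0$ (so the per-lattice stationary distribution is well defined and positive recurrence holds), and you deduce aperiodicity automatically from that summability (one self-loop suffices since period is a class property) rather than assuming the extra condition $p(x+\nu)<p(x)$ or $p(x-\nu)<p(x)$ that the paper's intermediate proposition imposes.
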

A proof is in Appendix~\ref{pf: limiting distribution}. Prop.~\ref{prop: limiting distribution}  shows that the limiting density under $\calK$ is the target density $p$ weighted by the ratio between the total masses of $q$ and $p$ over a discrete grid. 

To understand why this helps to increase the KSD value, we first rewrite KSD as
\begin{talign*}
    \ksd(Q, P) = \bbE_{x, x' \sim Q}[ \delta_{q, p}(x)^\top k(x, x') \delta_{q, p}(x') ] \;,
\end{talign*}
where $\delta_{q, p}(x) \coloneqq s_q(x) - s_p(x)$ is the score difference. This holds whenever $k$ is an integrally strictly positive definite kernel \citep[Thm.~3.6]{liu2016kernelized}. The pKSD is then
\begin{talign*}
    \ksd(Q, P; \calK^\infty)
    = \bbE_{x, x' \sim Q^\infty}[ \delta_{q^\infty, p}(x)^\top k(x, x') \delta_{q^\infty, p}(x') ] 
    \;,
\end{talign*}
where, by Prop.~\ref{prop: limiting distribution}, the score difference becomes $\delta_{q^\infty, p}(x) = s_{q^\infty}(x) - s_p(x) = \nabla \log \phi_q(x) - \nabla \log \phi_p(x)$, where $\phi_q(x) \coloneqq \sum_{s \in \bbZ} q(x + s\nu)$ and similarly for $\phi_p$. 
% \begin{talign*}
%     \delta_{q^\infty, p}(x)
%     &= \nabla \log \phi_q(x) - \nabla \log \phi_p(x) 
%     \;,
% \end{talign*}
% where $\phi_q(x) \coloneqq \sum_{s \in \bbZ} q(x + s\nu)$ and similarly for $\phi_p$. 

The operator $\phi$ superposes densities along a grid, thus allowing to create local discrepancy in the high-probability regions of $Q$, for example by exchanging masses between modes. 

As a concrete example, we consider the setup in Thm.~\ref{thm: multimodal}, where $Q = \calN(0, I_d)$, and $P = \pi \calN(0, I_d) + (1-\pi) \calN(\Delta, I_d)$ for some $\pi \in (0, 1)$ and $\Delta \in \bbR^d$. The operator has created discrepancy in high-probability regions of $Q$, as demonstrated in Fig.~\ref{fig: transition kernel interpretation}. This also highlights the role of $\nu$: when $\nu = \Delta$, the two components will overlap almost exactly under the perturbation, so $\delta_{q^\infty, p}(x) \approx 0$ near $x = 0$, and the KSD will remain small (Fig.~\ref{fig: transition kernel interpretation}). It is hence crucial to tune $\nu$ (equivalently, $\theta$). One can in principle select $\theta$ by maximising the (approximate) test power, similarly to the idea in \citet{jitkrittum2017linear}. However, gradient-based approaches are infeasible as pKSD is \emph{not} differentiable with respect to $\theta$. An alternative is to use grid-search over some finite set of $\theta$ values.

% -------------------------------------
\subsection{Choosing the Set of Perturbations $\calS$}
\label{subsec: spKSD, ospKSD and Implementation Details}
It remains to choose the set of perturbations $\calS$ in spKSD. We propose two ways to construct $\calS$, one based on grid-search, and the other based on optimisation. 

For the grid-based approach, we choose a set of values $\{ \theta_s \}_{s=1}^{S-1}$ and let $\calS = \{\calK_{\textrm{id}}, \calK_1, \ldots, \calK_{S-1} \}$, where $\calK_s$ is the transition kernel described in this section with jump scale $\theta_s$. We propose to choose each $\theta_s $ close to 1, following the observations in Sec.~\ref{sec: power intuition}. We still refer to the resulting divergence as spKSD.

For the optimisation-based approach, we use only two transition kernels $\calS = \{\calK_{\textrm{id}}, \calK_\theta \}$, where $\calK_\theta$ has jump scale $\theta$ that is tuned by maximising a proxy for the asymptotic test power. Due to the asymptotic normality proved in Prop.~\ref{prop: asymptotic distribution of spKSD}, we can adopt the same approach in \citet[Prop.~4]{jitkrittum2017linear} to approximate the asymptotic power with the ratio
\begin{talign}
    \ksdHat_{P, \calK_{\theta}^{T}} \; / \; \hat{\sigma}_u
    \;,
    \label{eq: test power proxy}
\end{talign}
where $\hat{\sigma}_u$ is an estimate of the asymptotic standard deviation $\sigma_u$ is given by the square root of 
\begin{align*}
    \hat{\sigma}_u^2
    \coloneqq
     \frac{4}{n^3} \sum_{i=1}^n \left( \sum_{j=1}^n H_{i, j} \right)^2
     - \frac{4}{n^4} \left( \sum_{i, j = 1}^{n} H_{i, j} \right)^2 
     \;,
\end{align*} 
with $H_{i, j} \coloneqq u_{P}(x_i, x_j) + u_P(x_i^\theta, x_j^\theta)$ and $x_i^\theta \sim \calK^{T}_\theta Q$; see also \citet[Eq.~8]{schrab2022ksd}. Since the objective (\ref{eq: test power proxy}) (in particular, $\calK_\theta$) is not differentiable with respect to $\theta$, we still choose $\theta$ from a pre-specified finite set $\{ \theta_s \}_{s=1}^S$. The objective is hence $\max_{\theta \in \{ \theta_1, \ldots, \theta_S\}} \ksdHat_{P, \calK_\theta^{T}} / \hat{\sigma}_u $. We call the resulting discrepancy the \emph{optimised sum-pKSD} (ospKSD).

Whether the grid-based or the optimisation-based method should be preferred requires trade-offs and depends on the specific problem at hand --- The spKSD requires no held-out sets, but can suffer from a low test power if $\{\theta_s\}_{s=1}^{S-1}$ is poorly chosen in that most of $\calK_s$ fail to improve the test power. On the other hand, ospKSD uses a judiciously tuned $\theta$, but the data-splitting can also lead to a drop in test power. In our experiments, we find that spKSD tends to work better for target distributions with a simple geometry, specifically mixtures of \emph{elliptic} distributions \citep{cambanis1981theory} (e.g., the Gaussian mixture examples). However, for distributions whose mixing components have non-elliptic contours (e.g., the mixture of $t$ and banana example, and the sensor network localisation example), the benefit of optimisation seems to overweigh the negative impact due to data-splitting, and ospKSD outperforms spKSD.

% -------------------------------------
\subsection{Estimating Mode Vectors and Hessians}
We estimate $\mu_j$ and $A_j$ by the local minima and Hessians of $- \log p$. To do so, we run in parallel a sequence of BFGS optimisers \citep{nocedal2009numerical} initiated at different starting points, following \citet{pompe2020framework}. BFGS is used because it returns both the local optima and approximated Hessians at those points. The optima are then merged if their weighted Mahalanobis distance is smaller than a pre-specified threshold. In our experiments, we initialise the optimisers from a set of size $n_{\textrm{init}}$, constructed either by sampling uniformly from a hyper cube $[L_1, U_1] \times \cdots \times [L_d, U_d]$ (for spKSD), or by using both randomly sampled data and some training set (for ospKSD). The full procedure is described in Appendix~\ref{subsec: finding the mode vectors via optimisation}.

\section{Related Work}
\label{sec: related works}
\paragraph{Perturbation with convolution}
The idea of combining a discrepancy with perturbation has been widely studied, where the perturbation is often a convolution operator with Gaussian noise. E.g., the \emph{spread divergence} \citep{zhang2020spread} combines Gaussian convolution with Kullback-Leibler (KL) divergence (more generally, $f$-divergences) to solve the issue that KL divergence is ill-defined when the distributions have undefined densities or unmatched support. In generative modelling, \emph{denoising score matching} \citep{vincent2011connection} and \emph{Noise Conditional Score Networks} \citep{song2019generative} combine Gaussian convolution with score matching to improve computational efficiency or estimation quality. Notably, convolution is \emph{not} invariant to the target distribution, thus rendering the score function intractable. This is why we chose a MH-type kernel instead.
% This is why we chose a MH-type kernel but not convolution. 

\paragraph{Perturbation with convex combination}
In score matching, \citet{zhang2022towards} addresses the blindness of Fisher Divergence by mapping the target and candidate distributions to a convex combination with a Gaussian distribution, thereby ``connecting'' the well-separated modes. A similar idea cannot be applied to improve the KSD test, as, similarly to convolution, the resulting target distribution no longer has a tractable score function.

\paragraph{Perturbation with annealing}
Another choice of perturbation is to anneal both distributions by raising the densities to some power, which is studied in \citet{wenliang2020blindness}. Although the score function remains tractable under this perturbation, annealing alone cannot solve the blindness of score-based discrepancies, as noted by the authors and in \citet{zhang2020spread}. Moreover, sampling from the annealed candidate distribution is also non-trivial. 

% \paragraph{Aggregated KSD test}
% The most common approaches to select kernel hyperparameters for KSD is to use the median heuristic \citep{gretton2012kernel, liu2016kernelized} or rely on data splitting \citep{jitkrittum2017linear}, which either lacks theoretical guarantees or can cause a loss of power due to the reduction in sample size. \citet{schrab2022ksd} proposed \textsc{KSDAgg}, which employs an aggregated approach for kernel selection without the need to split the data. However, since the low power issue persists for \emph{any} fixed kernel as implied by Prop.~\ref{prop: distribution under H1 converges to null distribution}, \textsc{KSDAgg} with a finite collection of bandwidths does not solve the issue, as we show in the experiments.

\section{Experiments}
\label{sec: experiments}
\begin{figure}
    \centering
    \begin{minipage}[t]{.48\textwidth}
         \centering
         \includegraphics[width=\textwidth]{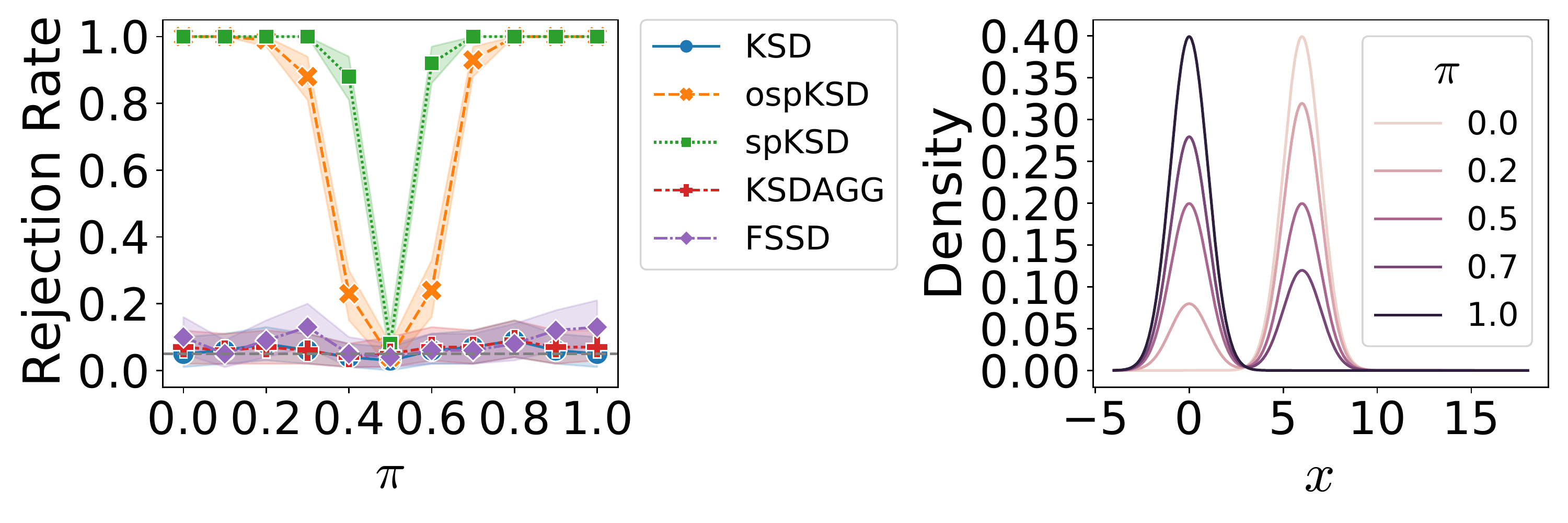}
         \vspace{-0.8cm}
         \caption{One-dimensional Gaussian mixture example. Samples are drawn with a different mixing weight $\pi$.}
        \label{fig: bimodal ratio_s}
    \end{minipage}
    % ----------------------------------------------------
    % % GB-RBM example
    % \hfill
    % \begin{minipage}[t]{.46\textwidth}
    %     \centering
    %     \includegraphics[width=.85\textwidth]{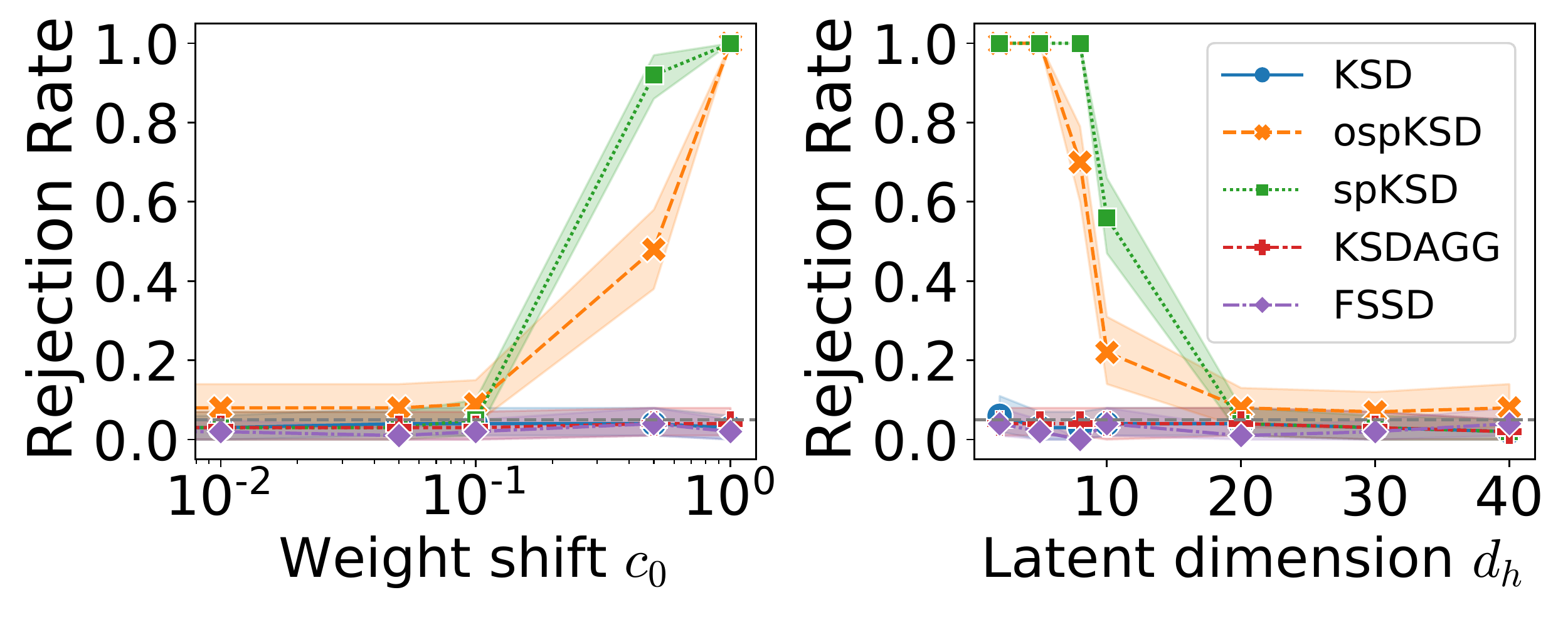}
    %     \caption{GB-RBM with the multi-modal setting.}
    %     \label{fig: rbm}
    % \end{minipage}
    % ----------------------------------------------------
    \vspace{-0.2cm}
\end{figure}

% ---------------------------------------------------------------------------------------------------------------------------
% % multivariate gaussian mixture + t_banana
\begin{figure}
    % % t-banana + thumbnail -----------------
    % \centering
    % \includegraphics[width=.20\textwidth]{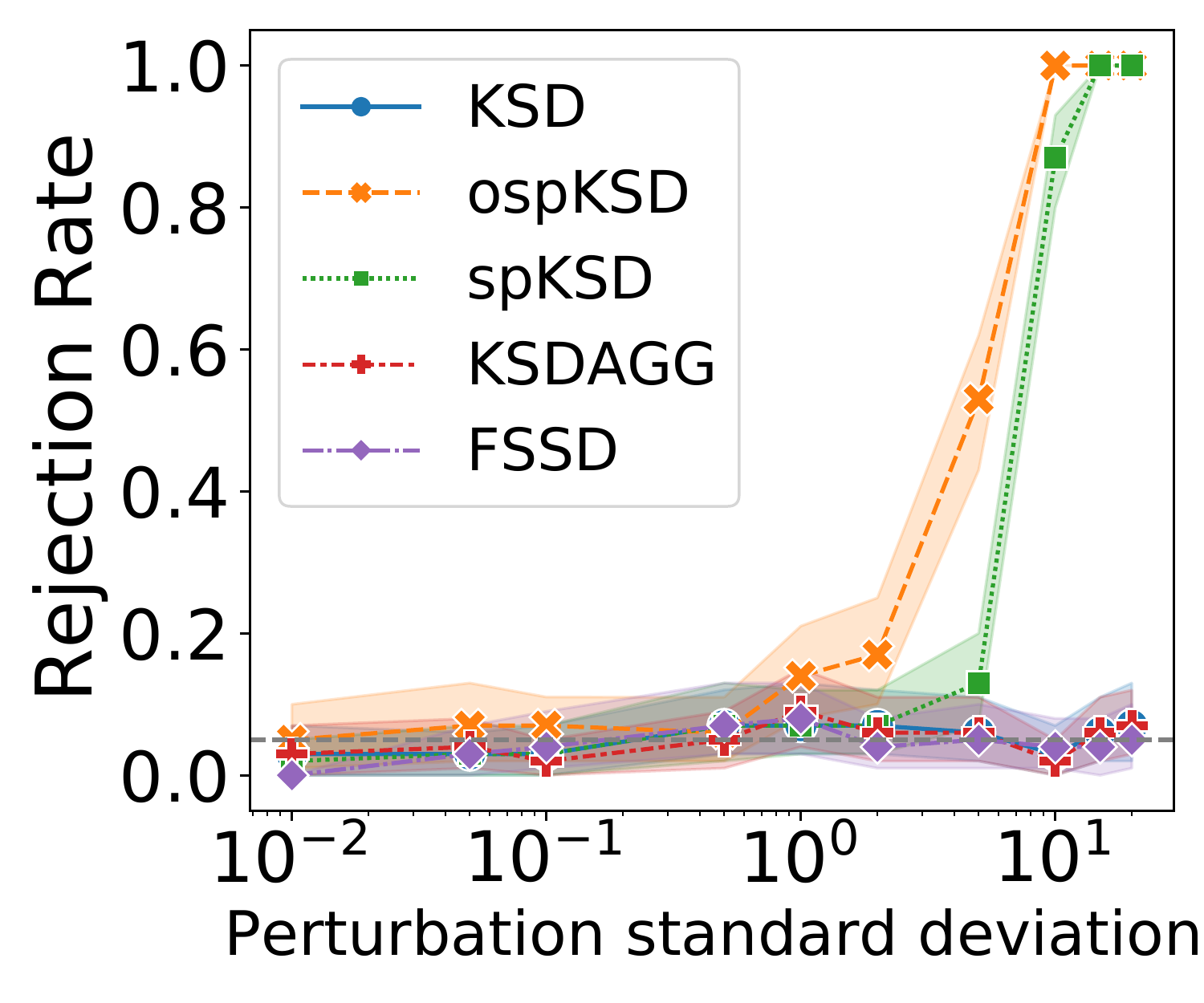}
    % \hfill
    % \includegraphics[width=0.24\textwidth]{figs/thumbnail.png}
    % \vspace{-0.2cm}
    % \caption{Left: Mixture of 10 $t$ and 10 banana distributions. Right: Connections between KSD and the proposed tests.}
    % \label{fig: tbanana noise}
    % \vspace{-0.5cm}    
    % % -----------------
    % t-banana alone -----------------
    \centering
    \includegraphics[width=.20\textwidth]{figs/t-banana/t-banana_dim10_nm20_nb10_noise.pdf}
    \vspace{-0.2cm}
    \caption{Mixture of 10 $t$ and 10 banana distributions.}
    \label{fig: tbanana noise}
    \vspace{-0.5cm}
    % -----------------
    % t-banana + RBM -----------------
   % \begin{minipage}[t]{.23\textwidth}
   %      \centering
   %      \includegraphics[width=.8\textwidth]{figs/t-banana/t-banana_dim10_nm20_nb10_noise.pdf}
   %      \caption{\xl{Update plot. 20-dim Gaussian vs $\Delta$}.}
   %      % \label{fig: rbm standard}  
   %  \end{minipage}
   %  \hfill
    % \begin{minipage}[t]{.23\textwidth}
    %     \centering
    %     \includegraphics[width=.8\textwidth]{figs/t-banana/t-banana_dim10_nm20_nb10_noise.pdf}
    %     \caption{Mixture of 10 $t$ and 10 banana distributions.}
    %     \label{fig: tbanana noise}  
    % \end{minipage}
    % \vspace{-0.5cm}
    % -----------------
\end{figure}
% ---------------------------------------------------------------------------------------------------------------------------

We use $51$ jump scales $\theta$ equally spaced in $[0.5, 1.5]$, a heuristic that we find works well in practice. All samples have size $n=1000$. We compare the ospKSD and spKSD tests against benchmarks including KSD test and two variants (\textsc{KSDAgg} and FSSD). All experiments are run with level $\alpha = 0.05$ using the IMQ kernel $k(x, y) = ( 1 + \|x - y \|_2^2 / \lambda)^{-1/2}$, where $\lambda$ is chosen to be $\texttt{median}_{i<j} \{\| x_i - x_j \|_2^2\}$. \textsc{KSDAgg} follows the setup in \citet{schrab2022ksd}, and FSSD follows \citet{jitkrittum2017linear}. The probability of rejecting the null hypothesis is estimated by averaging the test output over 100 repetitions, except in the sensors location example, which is repeated 10 times. Translucent shades represent $95\%$-CIs. The number of transition steps $T$ is selected to be $10$ for the Gaussian mixture example, $100$ for the mixture of $t$ and banana distributions example, and $1000$ for the sensor network localisation. 

Discussions on how to choose $T$ in practice, as well as supplementary plots and experiments, are held in Appendix~\ref{appendix: details on experiments}. In particular, in Appendix~\ref{appendix: GB-RBM}, we include an additional experiment concerning a 50-dimensional Gaussian-Bernoulli Restricted Boltzmann Machine (RBM) \cite{cho2013gaussian}, a latent variable model that can be viewed as a mixture of Gaussian distributions. Code for reproducing all experiments can be found at \url{github.com/XingLLiu/pksd}.

\paragraph{Gaussian mixture}
The target has density $p(x) \propto  \exp\left(- \frac{1}{2} x^2 \right) + 0.5 \exp\left(- \frac{1}{2} (x - 6)^2 \right)$. Samples are drawn with a different mixing weight $\pi \in [0, 1]$ of the first component. The results are presented in Fig.~\ref{fig: bimodal ratio_s}. KSD, \textsc{KSDAgg} and FSSD all have a power close to the level $0.05$ regardless of the value of $\pi$, which is not surprising due to the blindness of KSD. In comparison, ospKSD and spKSD achieve almost perfect power when $\pi$ deviates from the true value 0.5. Fig.~\ref{fg: bimodal level and power} in the Appendix verifies numerically that ospKSD and spKSD achieve the desired level under $H_0$.

\paragraph{Mixture of $t$ and banana distributions}
We consider a mixture of 10 multivariate $t$-distributions and 10 banana-shaped distributions with $t$-tails in $d=50$ dimensions, also studied in \citet{pompe2020framework}. Each component has an equal weight $0.02$ and is centered randomly in $[-20, 20]^d$, giving rise to a target with sparsely located, non-elliptic modes. Samples are drawn with a different set of weights $\{w_j\}_{j=1}^{20}$ formed by sampling $\tilde{w}_j \sim \calN(0, \sigma_s^2)$, $\sigma_s > 0$, and normalising $w_j \propto \exp(\tilde{w}_j)$. Other details are held in Appendix~\ref{appendix: t banana}. Fig.~\ref{fig: tbanana noise} (left) shows the results. As $\sigma_s$ increases, the weights in the two distributions deviate further, so ospKSD and spKSD achieve a higher power. All the others perform poorly for all values of $\sigma_s$, because the components have almost no overlapping high-density regions.

\paragraph{Sensor network localisation}
\citet{tak2018repelling} use Bayesian methods to infer the locations of sensors from noisy distance data. This is a modification of the example in \citet{ihler2005nonparametric} that has been used as a benchmark for MCMC samplers designed for multi-modal distributions \citep{pompe2020framework, ahn2013distributed, lan2014wormhole}. Here, six sensors $x_1, \ldots, x_6$ are located in $[0, 1]^2$, four of which have unknown locations and the remaining two are known. We observe distance $y_{ij}$ between two sensors $x_i, x_j$ with probability $\exp(- \| x_i - x_j \|_2^2 / (2 \times 0.3^2))$. If observed, the distance follows a Gaussian distribution $y_{ij} \sim \calN(\| x_i - x_j\|, 0.02^2)$. Full model details are held in Appendix~\ref{appendix: sensors}. To draw posterior samples, \citet{tak2018repelling} propose the \emph{repelling-attracting Metropolis} (RAM), which is an MCMC algorithm designed for efficient learning of multi-modal target distributions. RAM relies on a Gaussian proposal with a fixed covariance matrix $\sigma^2 I_d$ to propose new states in its uphill and downhill steps. The \emph{scale} $\sigma$ needs to be tuned to facilitate transitions between modes.

\begin{table}[t]
    \centering
    \caption{GOF tests for checking the quality of RAM samples with different scales. Reported values are the number of rejections over 10 repetitions with level $0.05$.}
    \vspace{-0.1cm}
    \begin{adjustbox}{max width=.45\textwidth} 
        \begin{tabular}{ccccccc}
            \toprule
            Methods & & KSD & ospKSD & spKSD & \textsc{KSDAgg} & FSSD \\
            \midrule
            \multirow{3}{*}{RAM scale $\sigma$}
             & 0.1  & 0 & 8 & 0 & 0 & 0 \\
             & 0.5  & 0 & 0 & 0 & 0 & 1 \\
             & 1.08 & 10 & 8 & 10 & 10 & 7 \\
            \bottomrule
        \end{tabular}
        \label{tab: sensors}
    \end{adjustbox}
    % \vspace{-0.4cm}
\end{table}

\begin{figure}[t]
    \centering
    \vspace{-0.4cm}
    \includegraphics[width=1.\linewidth]{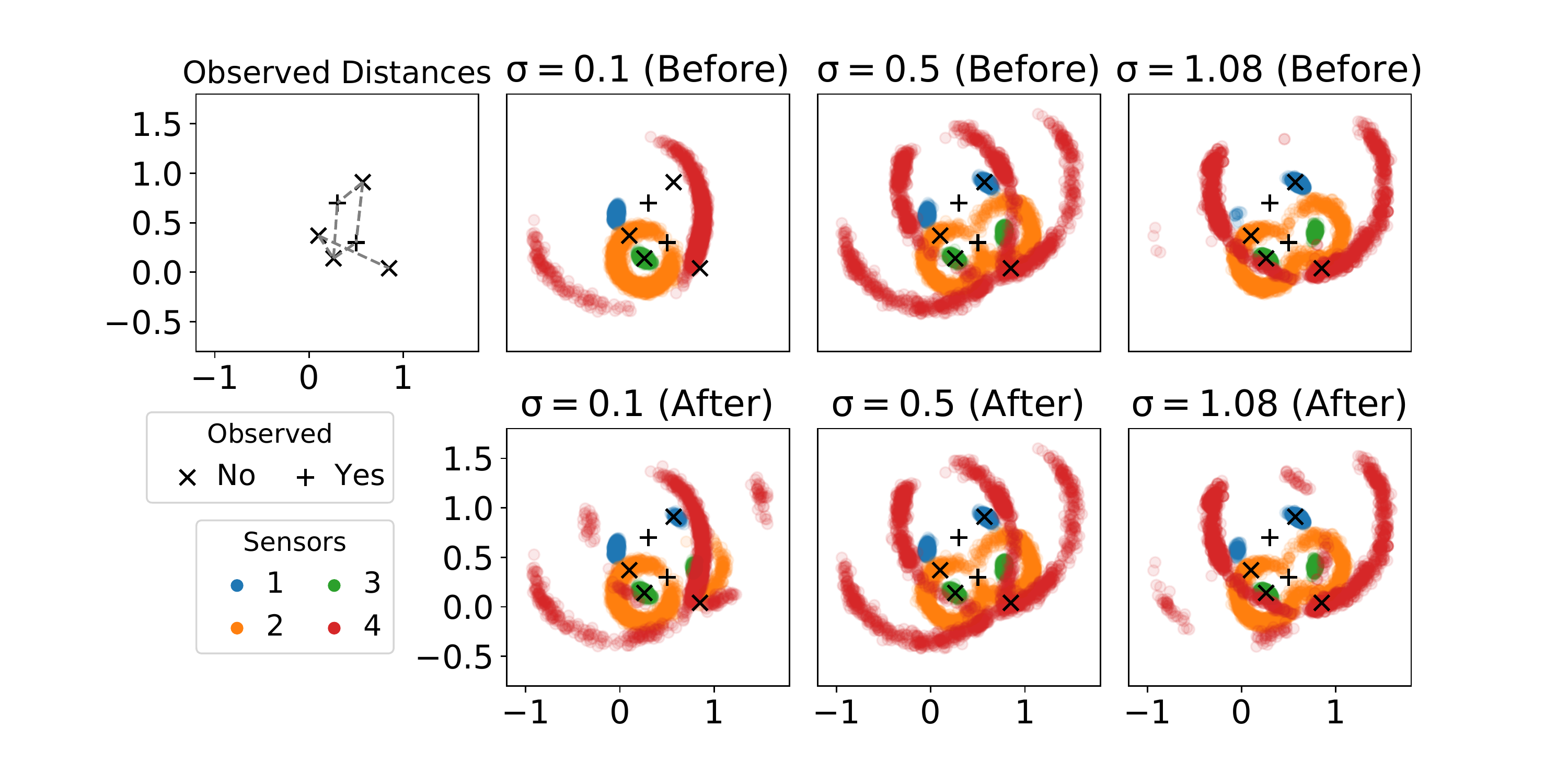}
    \vspace{-0.8cm}
    \caption{True and inferred locations of sensors. Black plus signs mark the location of the unobserved sensors, and black crosses indicate the location of the observed ones.}
    \vspace{-0.5cm}
    \label{fig: sensors}
\end{figure}

We run RAM with different scales $\sigma$, each for $420,000$ iterations. We discard the first $20,000$ particles as burn-in and thin the remaining to obtain a sample of size $n=4000$. We then evaluate the quality of the samples by applying a GOF test. Fig.~\ref{fig: sensors} shows the posterior samples generated using each $\sigma$. The samples from $\sigma = 0.5$ seem to capture all the modes of the posterior and is consistent with the results reported in \citet[Fig.~5]{tak2018repelling}, whereas samples from $\sigma = 0.1$ and $1.08$ clearly miss some modes. We then compare the test results in Table~\ref{tab: sensors}, which reports the number of rejections over 10 repetitions. All tests reject most runs for $\sigma = 1.08$ (the scale used in \citet{tak2018repelling}) and almost no run for $\sigma = 1.08$, which is consistent with the posterior plots. However, for $\sigma = 0.5$, no method except ospKSD rejected the null hypothesis, demonstrating again the ability of ospKSD to detect missing modes. Fig.~\ref{fig: sensors} also shows the particles \emph{after} perturbation by the (non-identity) transition kernel used by ospKSD, from which some particles seem to have moved to the missing high-density regions. spKSD in this case also performed poorly with no sample for $\sigma = 0.1$ being rejected, which is potentially because the benefit of not having a held-out set outweighs that of using a tuned transition kernel for this example.

\section{Discussion and Conclusion}
\label{sec: conclusion}
We show with a bimodal Gaussian example that GOF tests based on KSD can fail when the target has well-separated modes. To increase its power, we propose to perturb both the candidate and the target distributions using a Markov process before applying the KSD test. Empirical results suggest that our methods (ospKSD and spKSD) are more sensitive to discrepancies in the mixing weights of multimodal distributions, and can achieve remarkably high power particularly when the  mixing components are elliptic distributions.

% \paragraph{Limitations and future work}
% The jump proposal of the transition kernel used in ospKSD and spKSD is constructed specifically for targets that are mixtures of elliptic distributions and relies on estimates of the location and local geometry of modes, which may be inappropriate for target with more complicated geometrical structure. Investigating other perturbation operators that can relax these assumptions and improve the test power of KSD for a wider family of alternatives is a worthwhile research area. Finally, another interesting direction is to combine ospKSD with the aggregated test framework described in \citet{schrab2022ksd} to avoid resorting to data splitting. 

\subsection{Limitations and Future Work}
The ospKSD and spKSD rely heavily on accurate estimation of the mode locations and Hessians, which can be extremely challenging and computationally costly for high-dimensional problems. Additionally, the jump proposal of the transition kernel used in the proposed methods is constructed specifically for targets that are mixtures of elliptic distributions, which may be inappropriate for targets with more complicated geometrical structure. Further investigations could aim to find perturbation operators that scale better with dimensionality or that suit a wider family of target distributions. 

Moreover, both spKSD and ospKSD require careful hyper-parameter setting. The spKSD, as a sum-like statistic, requires a trade-off between the test power and the number of elements in the grid $\calS$. Although the heuristic described in Section~\ref{sec: experiments} is found to perform decently in our experiments, it is of practical interest to analyse the sensitivity of the test performance to the grid size both empirically and theoretically. The ospKSD, on the other hand, requires a held-out dataset to tune $\theta$, potentially reducing test power due to data-splitting. One possible approach to mitigate this problem is to combine ospKSD with the aggregated testing framework described in \citet{schrab2022ksd} to avoid splitting the data.

\section*{Acknowledgements}
XL is supported by the President’s PhD Scholarships of Imperial College London and the EPSRC StatML CDT programme EP/S023151/1. ABD is supported by Wave 1 of The UKRI Strategic Priorities Fund under the EPSRC Grant EP/T001569/1 and EPSRC Grant EP/W006022/1, particularly the “Ecosystems of Digital Twins” theme within those grants \& The Alan Turing Institute. We thank the anonymous reviewers for their valuable comments.

%%%%%%%%%%%%%%%%%%%%%%%%%%%%%%%%%%%%%%%%%%%%%%%%%%%%%%%%%%%%%%%%%%%%%%%%%%%%%%%
%%%%%%%%%%%%%%%%%%%%%%%%%%%%%%%%%%%%%%%%%%%%%%%%%%%%%%%%%%%%%%%%%%%%%%%%%%%%%%%
% REFERENCES
% In the unusual situation where you want a paper to appear in the
% references without citing it in the main text, use \nocite
% \nocite{langley00}

\bibliography{ref}

\begin{thebibliography}{51}
\providecommand{\natexlab}[1]{#1}
\providecommand{\url}[1]{\texttt{#1}}
\expandafter\ifx\csname urlstyle\endcsname\relax
  \providecommand{\doi}[1]{doi: #1}\else
  \providecommand{\doi}{doi: \begingroup \urlstyle{rm}\Url}\fi

\bibitem[Ahn et~al.(2013)Ahn, Chen, and Welling]{ahn2013distributed}
Ahn, S., Chen, Y., and Welling, M.
\newblock {Distributed and adaptive darting Monte Carlo through regenerations}.
\newblock In \emph{Proceedings of the Sixteenth International Conference on
  Artificial Intelligence and Statistics}, volume~31 of \emph{Proceedings of
  Machine Learning Research}, pp.\  108--116, Scottsdale, Arizona, USA, 29
  Apr--01 May 2013. PMLR.

\bibitem[Anastasiou et~al.(2023)Anastasiou, Barp, Briol, Ebner, Gaunt,
  Ghaderinezhad, Gorham, Gretton, Ley, Liu, Mackey, Oates, Reinert, and
  Swan]{anastasiou2021stein}
Anastasiou, A., Barp, A., Briol, F.-X., Ebner, B., Gaunt, R.~E., Ghaderinezhad,
  F., Gorham, J., Gretton, A., Ley, C., Liu, Q., Mackey, L., Oates, C.~J.,
  Reinert, G., and Swan, Y.
\newblock Stein’s method meets computational statistics: {A} review of some
  recent developments.
\newblock \emph{Statistical Science}, 38\penalty0 (1):\penalty0 120 -- 139,
  2023.
\newblock \doi{10.1214/22-STS863}.

\bibitem[{Barker}(1965)]{barker1965}
{Barker}, A.~A.
\newblock {Monte Carlo calculations of the radial distribution functions for a
  proton-electron plasma}.
\newblock \emph{Australian Journal of Physics}, 18:\penalty0 119, April 1965.
\newblock \doi{10.1071/PH650119}.

\bibitem[Bartle(2014)]{bartle2014elements}
Bartle, R.~G.
\newblock \emph{The elements of integration and {L}ebesgue measure}.
\newblock John Wiley \& Sons, 2014.

\bibitem[Berlinet \& Thomas-Agnan(2011)Berlinet and
  Thomas-Agnan]{berlinet2011reproducing}
Berlinet, A. and Thomas-Agnan, C.
\newblock \emph{{Reproducing kernel Hilbert spaces in probability and
  statistics}}.
\newblock Springer Science \& Business Media, 2011.

\bibitem[Cambanis et~al.(1981)Cambanis, Huang, and Simons]{cambanis1981theory}
Cambanis, S., Huang, S., and Simons, G.
\newblock On the theory of elliptically contoured distributions.
\newblock \emph{Journal of Multivariate Analysis}, 11\penalty0 (3):\penalty0
  368--385, 1981.
\newblock ISSN 0047-259X.
\newblock \doi{https://doi.org/10.1016/0047-259X(81)90082-8}.

\bibitem[Casella \& Berger(2001)Casella and Berger]{Casella2001statistial}
Casella, G. and Berger, R.
\newblock \emph{Statistical inference}.
\newblock {Duxbury Resource Center}, June 2001.
\newblock ISBN 0534243126.

\bibitem[Cho et~al.(2013)Cho, Raiko, and Ilin]{cho2013gaussian}
Cho, K.~H., Raiko, T., and Ilin, A.
\newblock {Gaussian-Bernoulli deep Boltzmann machine}.
\newblock In \emph{The 2013 International Joint Conference on Neural Networks
  (IJCNN)}, pp.\  1--7. IEEE, 2013.

\bibitem[Chwialkowski et~al.(2016)Chwialkowski, Strathmann, and
  Gretton]{chwialkowski2016kernel}
Chwialkowski, K., Strathmann, H., and Gretton, A.
\newblock {A kernel test of goodness of fit}.
\newblock In \emph{Proceedings of The 33rd International Conference on Machine
  Learning}, volume~48 of \emph{Proceedings of Machine Learning Research}, pp.\
   2606--2615, New York, New York, USA, 20--22 Jun 2016. PMLR.

\bibitem[Gong et~al.(2021{\natexlab{a}})Gong, Li, and
  Hern{\'a}ndez-Lobato]{gong2021sliced}
Gong, W., Li, Y., and Hern{\'a}ndez-Lobato, J.~M.
\newblock {Sliced kernelized Stein discrepancy}.
\newblock In \emph{International Conference on Learning Representations},
  2021{\natexlab{a}}.

\bibitem[Gong et~al.(2021{\natexlab{b}})Gong, Zhang, Li, and
  Hernandez-Lobato]{gong2021active}
Gong, W., Zhang, K., Li, Y., and Hernandez-Lobato, J.~M.
\newblock {Active slices for sliced Stein discrepancy}.
\newblock In \emph{Proceedings of the 38th International Conference on Machine
  Learning}, volume 139 of \emph{Proceedings of Machine Learning Research},
  pp.\  3766--3776. PMLR, 18--24 Jul 2021{\natexlab{b}}.

\bibitem[Gorham \& Mackey(2015)Gorham and Mackey]{gorham2015measuring}
Gorham, J. and Mackey, L.
\newblock {Measuring sample quality with Stein\textquotesingle s method}.
\newblock In \emph{Advances in Neural Information Processing Systems},
  volume~28. Curran Associates, Inc., 2015.

\bibitem[Gorham \& Mackey(2017)Gorham and Mackey]{gorham2017measuring}
Gorham, J. and Mackey, L.
\newblock Measuring sample quality with kernels.
\newblock In \emph{International Conference on Machine Learning}, pp.\
  1292--1301. PMLR, 2017.

\bibitem[Gorham et~al.(2019)Gorham, Duncan, Vollmer, and
  Mackey]{gorham2019measuring}
Gorham, J., Duncan, A.~B., Vollmer, S.~J., and Mackey, L.
\newblock {Measuring sample quality with diffusions}.
\newblock \emph{The Annals of Applied Probability}, 29\penalty0 (5):\penalty0
  2884 -- 2928, 2019.
\newblock \doi{10.1214/19-AAP1467}.

\bibitem[Green(1995)]{green1995reversible}
Green, P.~J.
\newblock {Reversible jump Markov chain Monte Carlo computation and Bayesian
  model determination}.
\newblock \emph{Biometrika}, 82\penalty0 (4):\penalty0 711--732, 1995.

\bibitem[Green \& Hastie(2009)Green and Hastie]{green2009reversible}
Green, P.~J. and Hastie, D.~I.
\newblock {Reversible jump MCMC}.
\newblock \emph{Genetics}, 155\penalty0 (3):\penalty0 1391--1403, 2009.

\bibitem[Gretton et~al.(2012)Gretton, Borgwardt, Rasch, Sch{\"o}lkopf, and
  Smola]{gretton2012kernel}
Gretton, A., Borgwardt, K.~M., Rasch, M.~J., Sch{\"o}lkopf, B., and Smola, A.
\newblock {A kernel two-sample test}.
\newblock \emph{The Journal of Machine Learning Research}, 13\penalty0
  (1):\penalty0 723--773, 2012.

\bibitem[Hastings(1970)]{hastings1970monte}
Hastings, W.~K.
\newblock {Monte Carlo sampling methods using Markov chains and their
  applications}.
\newblock \emph{Biometrika}, 57\penalty0 (1):\penalty0 97--109, 1970.
\newblock ISSN 00063444.

\bibitem[Hird et~al.(2020)Hird, Livingstone, and Zanella]{hird2020fresh}
Hird, M., Livingstone, S., and Zanella, G.
\newblock {A fresh take on `Barker dynamics' for MCMC}.
\newblock \emph{arXiv preprint arXiv:2012.09731}, 2020.

\bibitem[Hodgkinson et~al.(2020)Hodgkinson, Salomone, and
  Roosta]{hodgkinson2020reproducing}
Hodgkinson, L., Salomone, R., and Roosta, F.
\newblock {The reproducing Stein kernel approach for post-hoc corrected
  sampling}.
\newblock \emph{arXiv preprint arXiv:2001.09266}, 2020.

\bibitem[Huskova \& Janssen(1993)Huskova and Janssen]{huskova1993consistency}
Huskova, M. and Janssen, P.
\newblock {Consistency of the generalized bootstrap for degenerate
  $U$-statistics}.
\newblock \emph{The Annals of Statistics}, 21\penalty0 (4):\penalty0 1811 --
  1823, 1993.
\newblock \doi{10.1214/aos/1176349399}.

\bibitem[Ihler et~al.(2005)Ihler, Fisher, Moses, and
  Willsky]{ihler2005nonparametric}
Ihler, A., Fisher, J., Moses, R., and Willsky, A.
\newblock {Nonparametric belief propagation for self-localization of sensor
  networks}.
\newblock \emph{IEEE Journal on Selected Areas in Communications}, 23\penalty0
  (4):\penalty0 809--819, 2005.
\newblock \doi{10.1109/JSAC.2005.843548}.

\bibitem[Jitkrittum et~al.(2017)Jitkrittum, Xu, Szab\'{o}, Fukumizu, and
  Gretton]{jitkrittum2017linear}
Jitkrittum, W., Xu, W., Szab\'{o}, Z., Fukumizu, K., and Gretton, A.
\newblock {A linear-time kernel goodness-of-fit test}.
\newblock In \emph{Proceedings of the 31st International Conference on Neural
  Information Processing Systems}, NIPS'17, pp.\  261–270, Red Hook, NY, USA,
  2017. Curran Associates Inc.
\newblock ISBN 9781510860964.

\bibitem[Kanagawa et~al.(2022)Kanagawa, Gretton, and
  Mackey]{kanagawa2022controlling}
Kanagawa, H., Gretton, A., and Mackey, L.
\newblock {Controlling moments with kernel Stein discrepancies}.
\newblock \emph{arXiv preprint arXiv:2211.05408}, 2022.

\bibitem[Lan et~al.(2014)Lan, Streets, and Shahbaba]{lan2014wormhole}
Lan, S., Streets, J., and Shahbaba, B.
\newblock {Wormhole Hamiltonian Monte Carlo}.
\newblock In \emph{Proceedings of the AAAI Conference on Artificial
  Intelligence}, volume~28, 2014.

\bibitem[Liu \& Wang(2016)Liu and Wang]{liu2016stein}
Liu, Q. and Wang, D.
\newblock {Stein variational gradient descent: A general purpose Bayesian
  inference algorithm}.
\newblock \emph{Advances in neural information processing systems}, 29, 2016.

\bibitem[Liu et~al.(2016)Liu, Lee, and Jordan]{liu2016kernelized}
Liu, Q., Lee, J., and Jordan, M.
\newblock {A kernelized Stein discrepancy for goodness-of-fit tests}.
\newblock In \emph{Proceedings of The 33rd International Conference on Machine
  Learning}, volume~48 of \emph{Proceedings of Machine Learning Research}, pp.\
   276--284, New York, New York, USA, 20--22 Jun 2016. PMLR.

\bibitem[Livingstone \& Zanella(2021)Livingstone and
  Zanella]{livingstone2021barker}
Livingstone, S. and Zanella, G.
\newblock {The Barker proposal: Combining robustness and efficiency in
  gradient-based MCMC}.
\newblock \emph{Journal of the Royal Statistical Society: Series B}, 2021.

\bibitem[Massey(1951)]{massey1951kolmogorov}
Massey, F.~J.
\newblock {The Kolmogorov-Smirnov test for goodness of fit}.
\newblock \emph{Journal of the American Statistical Association}, 46\penalty0
  (253):\penalty0 68--78, 1951.
\newblock \doi{10.1080/01621459.1951.10500769}.

\bibitem[Matsubara et~al.(2022)Matsubara, Knoblauch, Briol, and
  Oates]{matsubara2021robust}
Matsubara, T., Knoblauch, J., Briol, F.-X., and Oates, C.~J.
\newblock Robust generalised {B}ayesian inference for intractable likelihoods.
\newblock \emph{Journal of the Royal Statistical Society Series B: Statistical
  Methodology}, 84\penalty0 (3):\penalty0 997--1022, 2022.

\bibitem[Melchior et~al.(2017)Melchior, Wang, and
  Wiskott]{melchior2017gaussian}
Melchior, J., Wang, N., and Wiskott, L.
\newblock {Gaussian-binary restricted Boltzmann machines for modeling natural
  image statistics}.
\newblock \emph{PloS one}, 12\penalty0 (2):\penalty0 e0171015, 2017.

\bibitem[Metropolis et~al.(1953)Metropolis, Rosenbluth, Rosenbluth, Teller, and
  Teller]{metropolis1953equation}
Metropolis, N., Rosenbluth, A.~W., Rosenbluth, M.~N., Teller, A.~H., and
  Teller, E.
\newblock {Equation of state calculations by fast computing machines}.
\newblock \emph{The journal of chemical physics}, 21\penalty0 (6):\penalty0
  1087--1092, 1953.

\bibitem[Meyn \& Tweedie(2012)Meyn and Tweedie]{meyn2012markov}
Meyn, S.~P. and Tweedie, R.~L.
\newblock \emph{{Markov chains and stochastic stability}}.
\newblock Springer Science \& Business Media, 2012.

\bibitem[Nocedal \& Wright(2006)Nocedal and Wright]{nocedal2009numerical}
Nocedal, J. and Wright, S.
\newblock \emph{{Numerical optimization}}.
\newblock Springer series in operations research and financial engineering.
  Springer, New York, NY, 2. ed. edition, 2006.
\newblock ISBN 978-0-387-30303-1.

\bibitem[Peskun(1973)]{peskun1973optimum}
Peskun, P.~H.
\newblock {Optimum Monte-Carlo sampling using Markov chains}.
\newblock \emph{Biometrika}, 60\penalty0 (3):\penalty0 607--612, 1973.

\bibitem[Pompe et~al.(2020)Pompe, Holmes, and
  {\L}atuszy{\'n}ski]{pompe2020framework}
Pompe, E., Holmes, C., and {\L}atuszy{\'n}ski, K.
\newblock {A framework for adaptive MCMC targeting multimodal distributions}.
\newblock \emph{The Annals of Statistics}, 48\penalty0 (5):\penalty0
  2930--2952, 2020.

\bibitem[Robert \& Casella(2004)Robert and Casella]{robert2004monte}
Robert, C. and Casella, G.
\newblock \emph{{Monte Carlo statistical methods}}.
\newblock Springer Texts in Statistics. Springer New York, New York, NY, 2nd
  ed. 2004. edition, 2004.
\newblock ISBN 1-4757-4145-6.

\bibitem[Schrab et~al.(2021)Schrab, Kim, Albert, Laurent, Guedj, and
  Gretton]{schrab2021mmd}
Schrab, A., Kim, I., Albert, M., Laurent, B., Guedj, B., and Gretton, A.
\newblock {MMD aggregated two-Sample test}.
\newblock \emph{arXiv preprint arXiv:2110.15073}, 2021.

\bibitem[Schrab et~al.(2022)Schrab, Guedj, and Gretton]{schrab2022ksd}
Schrab, A., Guedj, B., and Gretton, A.
\newblock {KSD} aggregated goodness-of-fit test.
\newblock In \emph{Advances in Neural Information Processing Systems 35: Annual
  Conference on Neural Information Processing Systems 2022, NeurIPS 2022},
  2022.

\bibitem[Serfling(2009)]{serfling2009approximation}
Serfling, R.
\newblock \emph{Approximation Theorems of Mathematical Statistics}.
\newblock Wiley Series in Probability and Statistics. Wiley, 2009.
\newblock ISBN 9780470317198.

\bibitem[Song \& Ermon(2019)Song and Ermon]{song2019generative}
Song, Y. and Ermon, S.
\newblock Generative modeling by estimating gradients of the data distribution.
\newblock \emph{Advances in neural information processing systems}, 32, 2019.

\bibitem[Sriperumbudur et~al.(2010)Sriperumbudur, Fukumizu, and
  Lanckriet]{Sriperumbudur2010relation}
Sriperumbudur, B., Fukumizu, K., and Lanckriet, G.
\newblock {On the relation between universality, characteristic kernels and
  RKHS embedding of measures}.
\newblock In \emph{Proceedings of the Thirteenth International Conference on
  Artificial Intelligence and Statistics}, volume~9 of \emph{Proceedings of
  Machine Learning Research}, pp.\  773--780, Chia Laguna Resort, Sardinia,
  Italy, 13--15 May 2010. PMLR.

\bibitem[Sriperumbudur et~al.(2011)Sriperumbudur, Fukumizu, and
  Lanckriet]{Sriperumbudur2011universality}
Sriperumbudur, B.~K., Fukumizu, K., and Lanckriet, G.~R.
\newblock {Universality, characteristic kernels and RKHS embedding of
  measures}.
\newblock \emph{Journal of Machine Learning Research}, 12\penalty0
  (70):\penalty0 2389--2410, 2011.

\bibitem[Stein(1972)]{stein1972bound}
Stein, C.
\newblock {A bound for the error in the normal approximation to the
  distribution of a sum of dependent random variables}.
\newblock In \emph{Proceedings of the sixth Berkeley symposium on mathematical
  statistics and probability, volume 2: Probability theory}, volume~6, pp.\
  583--603. University of California Press, 1972.

\bibitem[Stewart(1976)]{stewart1976positive}
Stewart, J.
\newblock {Positive definite functions and generalizations, an historical
  survey}.
\newblock \emph{Rocky Mountain Journal of Mathematics}, 6\penalty0
  (3):\penalty0 409 -- 434, 1976.
\newblock \doi{10.1216/RMJ-1976-6-3-409}.

\bibitem[Tak et~al.(2018)Tak, Meng, and van Dyk]{tak2018repelling}
Tak, H., Meng, X.-L., and van Dyk, D.~A.
\newblock {A repelling--attracting Metropolis algorithm for multimodality}.
\newblock \emph{Journal of Computational and Graphical Statistics}, 27\penalty0
  (3):\penalty0 479--490, 2018.

\bibitem[Vincent(2011)]{vincent2011connection}
Vincent, P.
\newblock {A connection between score matching and denoising autoencoders}.
\newblock \emph{Neural computation}, 23\penalty0 (7):\penalty0 1661--1674,
  2011.

\bibitem[Wainwright(2019)]{wainwright2019high}
Wainwright, M.
\newblock \emph{High-Dimensional Statistics: {A} Non-Asymptotic Viewpoint}.
\newblock Cambridge Series in Statistical and Probabilistic Mathematics.
  Cambridge University Press, 2019.
\newblock ISBN 9781108498029.

\bibitem[Wenliang \& Kanagawa(2020)Wenliang and
  Kanagawa]{wenliang2020blindness}
Wenliang, L.~K. and Kanagawa, H.
\newblock Blindness of score-based methods to isolated components and mixing
  proportions.
\newblock \emph{arXiv preprint arXiv:2008.10087}, 2020.

\bibitem[Zhang et~al.(2020)Zhang, Hayes, Bird, Habib, and
  Barber]{zhang2020spread}
Zhang, M., Hayes, P., Bird, T., Habib, R., and Barber, D.
\newblock {Spread divergence}.
\newblock In \emph{Proceedings of the 37th International Conference on Machine
  Learning}, volume 119 of \emph{Proceedings of Machine Learning Research},
  pp.\  11106--11116. PMLR, 13--18 Jul 2020.

\bibitem[Zhang et~al.(2022)Zhang, Key, Hayes, Barber, Paige, and
  Briol]{zhang2022towards}
Zhang, M., Key, O., Hayes, P., Barber, D., Paige, B., and Briol, F.-X.
\newblock Towards healing the blindness of score matching.
\newblock \emph{arXiv preprint arXiv:2209.07396}, 2022.

\end{thebibliography}
\bibliographystyle{icml2023}

%%%%%%%%%%%%%%%%%%%%%%%%%%%%%%%%%%%%%%%%%%%%%%%%%%%%%%%%%%%%%%%%%%%%%%%%%%%%%%%
%%%%%%%%%%%%%%%%%%%%%%%%%%%%%%%%%%%%%%%%%%%%%%%%%%%%%%%%%%%%%%%%%%%%%%%%%%%%%%%
% APPENDIX
%%%%%%%%%%%%%%%%%%%%%%%%%%%%%%%%%%%%%%%%%%%%%%%%%%%%%%%%%%%%%%%%%%%%%%%%%%%%%%%
%%%%%%%%%%%%%%%%%%%%%%%%%%%%%%%%%%%%%%%%%%%%%%%%%%%%%%%%%%%%%%%%%%%%%%%%%%%%%%%
\newpage
\appendix
\onecolumn
%%%%%%%%%%%%%%%%%%%%%%%%%%%%%%%%%%%%%%%%%%%%%%%%%%%%%%%%%%%%%%%%%%%%%%%%%%%%%%%
%%%%%%%%%%%%%%%%%%%%%%%%%%%%%%%%%%%%%%%%%%%%%%%%%%%%%%%%%%%%%%%%%%%%%%%%%%%%%%%
% ------------------------------------------------------------------------------------------------------------------------
% ------------------------------------------------------------------------------------------------------------------------
\section{Proof of Proposition~\ref{prop: distribution under H1 converges to null distribution}}
\label{proof: distribution under H1 converges to null distribution}

% % --------------------- general version
Fixing positive integer $\nu$, we can write $n_\nu \ksdHat_{P_{\nu}} = n_\nu \ksdHat_{Q} + n_\nu( \ksdHat_{P_{\nu}} - \ksdHat_{Q} )$. Under the stated assumptions on the kernel $k$ and that $\bbE_{x, x'\sim Q}[ u_Q(x, x')^2 ] < \infty$, we can apply \citet[Thm~4.1]{liu2016kernelized} to conclude that, as $\nu \to \infty$,
\begin{talign*}
    n_\nu \ksdHat_{Q} \to_d \sum_{j = 1}^\infty c_j (z_j^2 - 1) \ ,
    % \label{eq: dist of ksdHat(Q, Q)}
\end{talign*}
where $z_j, c_j$ are as defined in Prop.~\ref{prop: distribution under H1 converges to null distribution}. If we could furthermore show that $n_\nu( \ksdHat_{P_{\nu}} - \ksdHat_{Q}) \to 0$ in probability as $\nu \to \infty$, then the desired result would follow from Slutsky's Theorem (see, e.g., \citet{Casella2001statistial}).

To prove the convergence in probability, we fix $\epsilon > 0$ and denote by $\textrm{Pr}_Q$ the probability under $Q$. We also omit the dependence of $n$ on $\nu$ for brevity. The Markov inequality yields
\begin{talign}
    & \textrm{Pr}_Q( n| \ksdHat_{P_\nu} - \ksdHat_{Q} | \geq \epsilon ) \nonumber \\
    &\leq \frac{n}{\epsilon} \bbE_{x_1, \ldots, x_n \sim Q}[  | \ksdHat_{P_\nu} - \ksdHat_{Q} | ] \nonumber\\
    &= \frac{n}{\epsilon} \bbE_{x_1, \ldots, x_n \sim Q} \left| \frac{1}{n(n - 1)} \sum_{1 \leq i \neq j \leq n} u_{P_\nu}(x_i, x_j) - u_Q(x_i, x_j) \right| \nonumber\\
    &\leq \frac{n}{\epsilon} \frac{1}{n(n - 1)} \sum_{1 \leq i \neq j \leq n} \bbE_{x_i, x_j \sim Q} | u_{P_\nu}(x_i, x_j) - u_Q(x_i, x_j) | \nonumber\\
    &= \frac{n}{\epsilon} \bbE_{x, x' \sim Q} | u_{P_\nu}(x, x') - u_Q(x, x') | \nonumber\\
    &\leq \frac{n}{\epsilon} \{ 
        \bbE_{x, x' \sim Q} | s_{p_\nu}(x)^\top s_{p_\nu}(x') - s_q(x)^\top s_q(x') | | k(x, x') |  \nonumber\\
        &\quad + \bbE_{x, x' \sim Q} | (s_{p_\nu}(x) - s_q(x))^\top \nabla_{x'} k(x, x') | \nonumber\\
        &\quad + \bbE_{x, x' \sim Q} | (s_{p_\nu}(x') - s_q(x'))^\top \nabla_x k(x, x') |
    \} \nonumber\\
    &\leq \frac{n}{\epsilon} \Big\{ 
        \left( \bbE_{x, x' \sim Q} \left[ ( s_{p_\nu}(x)^\top s_{p_\nu}(x') - s_q(x)^\top s_q(x') )^2 \right] \right)^{1/2}  
        \left(\bbE_{x, x' \sim Q} \left[ k(x, x')^2 \right] \right)^{1/2} \nonumber\\
        &\qquad\quad + \left( \bbE_{x \sim Q} \left[ \| s_{p_\nu}(x) - s_q(x) \|_2^2 \right] \right)^{1/2} 
        \left( \bbE_{x, x' \sim Q} \left[\| \nabla_{x'} k(x, x') \|_2^2 \right] \right)^{1/2} \nonumber\\
        &\qquad\quad + \left( \bbE_{x \sim Q} \left[ \| s_{p_\nu}(x) - s_q(x) \|_2^2 \right] \right)^{1/2} 
        \left( \bbE_{x, x' \sim Q} \left[\| \nabla_{x} k(x, x') \|_2^2 \right] \right)^{1/2} \label{eq: three terms}
    \Big\} .
\end{talign}
We bound each of the three terms individually. For the first term, we have
\begin{talign*}
    & \bbE_{x, x' \sim Q} \left[ \left( s_{p_\nu}(x)^\top s_{p_\nu}(x') - s_q(x)^\top s_q(x') \right)^2 \right] \\
    &= \bbE_{x, x' \sim Q} \left[ \left( s_{p_\nu}(x)^\top (s_{p_\nu}(x') - s_q(x')) + (s_{p_\nu}(x) - s_q(x))^\top s_q(x') \right)^2 \right] \\
    &\leq 2 \underbrace{\bbE_{x, x' \sim Q} \left[ \left( s_{p_\nu}(x)^\top (s_{p_\nu}(x') - s_q(x')) \right)^2 \right]}_{\eqqcolon T_1}
    + 2 \underbrace{\bbE_{x, x' \sim Q} \left[ \left( (s_{p_\nu}(x) - s_q(x))^\top s_q(x') \right)^2 \right]}_{\eqqcolon T_2} ,
\end{talign*}
where the last line follows from the fact that $(a + b)^2 \leq 2a^2 + 2b^2$ for any $a, b \in \bbR$. Now, applying the Cauchy-Schwarz inequality gives
\begin{talign*}
    T_1
    &\leq 2 \bbE_{x, x' \sim Q} [ \| s_{p_\nu}(x) \|_2^2 \| s_{p_\nu}(x') - s_q(x') \|_2^2  ] \\
    &\leq 2 \left( 2\bbE_{x \sim Q}[ \| s_{p_\nu}(x) - s_q(x) \|_2^2 ] + 2\bbE_{x \sim Q}[ \| s_q(x)\|_2^2 ] \right) \bbE_{x'\sim Q}[ \| s_{p_\nu}(x') - s_q(x') \|_2^2  ] \\
    &\leq 
    4 F_\nu^2 + 4 \bbE_{x \sim Q}[ \| s_q(x) \|_2^2 ] F_\nu,
\end{talign*}
where $F_\nu \coloneqq \bbE_{x \sim Q}[ \| s_{p_\nu}(x) - s_q(x)\|_2^2 ]$ is the Fisher Divergence between $P_\nu$ and $Q$, and where the second line holds because 
\begin{talign*}
    \| s_{p_\nu}(x) \|_2^2 = \| s_{p_\nu}(x) - s_q(x) + s_q(x) \|_2^2 \leq 2 \|s_{p_\nu}(x) - s_q(x)\|_2^2 + 2\| s_q(x) \|_2^2 .
\end{talign*}
A similar argument by Cauchy-Schwarz inequality shows that 
\begin{talign*}
    T_2
    \leq 
    \bbE_{x \sim Q}[ \| s_q(x) \|_2^2 ] F_\nu \;.
\end{talign*}
Combining the bounds for $T_1$ and $T_2$ yields
\begin{talign*}
    \bbE_{x, x' \sim Q} [ ( s_{p_\nu}(x)^\top s_{p_\nu}(x') - s_q(x)^\top s_q(x') )^2]
    &\leq 
    8 F_\nu^2 + 10 \bbE_{x \sim Q}[ \| s_q(x) \|_2^2 ] F_\nu \;.
\end{talign*}
By the assumed boundedness of the kernel and its gradients, there exists a positive constant $M < \infty$ depending only on $k$ and $Q$ such that
\begin{talign*}
    \max\left\{ 
    \bbE_{x, x' \sim Q}[ | k(x, x') | ], \ 
    \bbE_{x, x' \sim Q}[ \| \nabla_{x'} k(x, x')\|_2^2], \ 
    \bbE_{x, x' \sim Q}[ \| \nabla_{x } k(x, x')\|_2^2]
    \right\} \leq M \;.
\end{talign*}
We hence conclude from (\ref{eq: three terms}) that
\begin{talign}
    \textrm{Pr}_Q( n| \ksdHat_{P_\nu} - \ksdHat_{Q} | \geq \epsilon )
    &\leq
    \frac{n}{\epsilon} \left[ M^{1/2} \left( 8 F_\nu^2 + 10 \bbE_{x \sim Q}[ \| s_q(x) \|_2^2 ] F_\nu \right)^{1/2} + 2 M^{1/2} F_\nu^{1/2} \right] \nonumber \\
    &\leq
    \frac{n}{\epsilon} M^{1/2} \underbrace{\left[ F_\nu^{1/2} \left(8 F_\nu + 10 \bbE_{x \sim Q}[\| s_q(x) \|_2^2] \right)^{1/2} + 2F_\nu^{1/2}  \right]}_{\eqqcolon T_3} \;.
    \label{eq: bound in terms of fisher div}
\end{talign}
The term $T_3$ is $O( \max(F_\nu, F_\nu^{1/2}) )$.
Therefore, if $n = n_\nu = o( 1/\max(F_\nu, F_\nu^{1/2}) )$, then the right hand side of (\ref{eq: bound in terms of fisher div}) converges to 0 as $\nu \to \infty$, thus $n_\nu| \ksdHat_{P_\nu} - \ksdHat_{Q} | \to 0$ in probability. This completes the proof.
% % --------------------- end of general version

\section{Proof of Theorem~\ref{thm: multimodal}}
\label{proof: multimodal}

Theorem~\ref{thm: multimodal} follows directly from Proposition~\ref{prop: distribution under H1 converges to null distribution} and the next lemma, which states that the Fisher Divergence between $Q$ and $P_\Delta$ decays with a rate at least exponentially fast in the inter-modal distance $\| \Delta \|_2$.

\begin{lemma}
\label{lemma: fisher div goes to zero}
    Under the same assumptions in Prop.~\ref{prop: distribution under H1 converges to null distribution}, we have $\bbE_{x \sim Q}[ \| s_{p_{\Delta}}(x) - s_q(x) \|_2^2] = o\left( e^{-\| \Delta \|_2^2 / 32} \right)$ \;.
\end{lemma}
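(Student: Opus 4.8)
The goal is to bound $\bbE_{x \sim Q}[ \| s_{p_\Delta}(x) - s_q(x) \|_2^2 ]$, where $q(x) \propto e^{-\|x\|_2^2/2}$, $s_q(x) = -x$, and $p_\Delta = \pi \calN(0, I_d) + (1-\pi)\calN(\Delta, I_d)$. First I would compute $s_{p_\Delta}$ explicitly. Writing $\phi_0(x) = e^{-\|x\|_2^2/2}$ and $\phi_\Delta(x) = e^{-\|x - \Delta\|_2^2/2}$ (unnormalised), we have $p_\Delta(x) \propto \pi \phi_0(x) + (1-\pi)\phi_\Delta(x)$, so
\begin{talign*}
    s_{p_\Delta}(x) = \frac{\pi \phi_0(x)(-x) + (1-\pi)\phi_\Delta(x)(\Delta - x)}{\pi \phi_0(x) + (1-\pi)\phi_\Delta(x)} = -x + \frac{(1-\pi)\phi_\Delta(x)}{\pi\phi_0(x) + (1-\pi)\phi_\Delta(x)}\,\Delta \;.
\end{talign*}
Hence $s_{p_\Delta}(x) - s_q(x) = w(x)\Delta$ where $w(x) \coloneqq \frac{(1-\pi)\phi_\Delta(x)}{\pi\phi_0(x) + (1-\pi)\phi_\Delta(x)} \in [0,1]$, and the Fisher divergence reduces to $\|\Delta\|_2^2\, \bbE_{x\sim Q}[ w(x)^2 ]$.

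The heart of the argument is to show $\bbE_{x \sim Q}[ w(x)^2 ] = o(e^{-\|\Delta\|_2^2/32})$, since the polynomial factor $\|\Delta\|_2^2$ is absorbed into the $o(e^{-\|\Delta\|_2^2/32})$ as long as we get a slightly better exponent, e.g. $e^{-\|\Delta\|_2^2/c}$ with $c < 32$, or a $c = 32$ bound times something decaying. I would bound $w(x)^2 \le w(x) \le \frac{1-\pi}{\pi}\cdot\frac{\phi_\Delta(x)}{\phi_0(x)} = \frac{1-\pi}{\pi} e^{\langle x, \Delta\rangle - \|\Delta\|_2^2/2}$ (valid when $\pi > 0$; the degenerate cases $\pi \in \{0,1\}$ are trivial since then $F_\Delta = 0$ or the problem is vacuous). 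Also trivially $w(x) \le 1$. Then split the expectation over the region $\calR \coloneqq \{x : \langle x, \hat\Delta\rangle \le \|\Delta\|_2/2\}$, where $\hat\Delta = \Delta/\|\Delta\|_2$, and its complement. On $\calR$, $e^{\langle x,\Delta\rangle} \le e^{\|\Delta\|_2^2/2}$, so $w(x) \le \frac{1-\pi}{\pi}$ is not yet small — instead use $e^{\langle x,\Delta\rangle - \|\Delta\|_2^2/2} \le e^{-\|\Delta\|_2^2/4}$ on the smaller region $\{\langle x,\hat\Delta\rangle \le \|\Delta\|_2/4\}$; on the complement $\{\langle x, \hat\Delta\rangle > \|\Delta\|_2/4\}$ use $w \le 1$ together with the Gaussian tail bound $\Pr_{x\sim Q}(\langle x, \hat\Delta\rangle > \|\Delta\|_2/4) \le e^{-\|\Delta\|_2^2/32}$ (standard normal tail, since $\langle x, \hat\Delta\rangle \sim \calN(0,1)$). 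Combining, $\bbE[w^2] \le \frac{1-\pi}{\pi} e^{-\|\Delta\|_2^2/4} + e^{-\|\Delta\|_2^2/32}$, which is $O(e^{-\|\Delta\|_2^2/32})$; multiplying by $\|\Delta\|_2^2$ still gives $o(e^{-\|\Delta\|_2^2/32})$ because $\|\Delta\|_2^2 e^{-\|\Delta\|_2^2/4} \to 0$ faster than $e^{-\|\Delta\|_2^2/32}$ decays relative to... wait, one must be slightly careful: $\|\Delta\|_2^2 e^{-\|\Delta\|_2^2/32} = o(1)\cdot$ is \emph{not} $o(e^{-\|\Delta\|_2^2/32})$. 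So I would instead aim for the strictly better constant: sharpen the split so the dominant term is $e^{-c\|\Delta\|_2^2}$ with $1/c > 1/32$, e.g. tune the threshold to $\|\Delta\|_2/2$ giving tail bound $e^{-\|\Delta\|_2^2/8}$ and the other term $e^{-\|\Delta\|_2^2/2}\cdot\frac{1-\pi}{\pi}$... but the lemma explicitly states exponent $32$, so the intended reading is presumably that $\|\Delta\|_2^2 e^{-\|\Delta\|_2^2/32}$ should itself be compared: actually $\bbE_{x\sim Q}[w(x)^2]$ can be shown to be $o(e^{-\|\Delta\|_2^2/32})$ directly by optimizing the threshold $t$ as $t = \|\Delta\|_2/2$ is not forced — one can take $t_\Delta \to \infty$ slowly (e.g. $t_\Delta = \|\Delta\|_2/2 + \log\|\Delta\|_2$) so both the $e^{-(\|\Delta\|_2 - t_\Delta)^2/2}$-type term from $\calR$ and the tail term $e^{-t_\Delta^2/2}$ are $o(e^{-\|\Delta\|_2^2/32})$ after multiplication by $\|\Delta\|_2^2$.

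The main obstacle, then, is purely the bookkeeping of the constant: getting the little-$o$ (not big-$O$) with exactly $1/32$ in the exponent after the stray polynomial factor $\|\Delta\|_2^2$ is multiplied in. I expect this is handled by choosing the region-splitting threshold to grow just slightly faster than $\|\Delta\|_2/2$ so that every resulting term is genuinely $o(e^{-\|\Delta\|_2^2/32})$; the factor $\tfrac14$ versus $\tfrac{1}{32}$ leaves ample room. Everything else — computing $s_{p_\Delta}$, reducing $F_\Delta$ to $\|\Delta\|_2^2 \bbE[w^2]$, the elementary bound on $w$, and the standard Gaussian tail inequality — is routine. Once the lemma is in hand, Theorem~\ref{thm: multimodal} follows immediately from Prop.~\ref{prop: distribution under H1 converges to null distribution}: $n_\nu = o(e^{\|\Delta_\nu\|_2^2/64})$ and $F_\nu = o(e^{-\|\Delta_\nu\|_2^2/32})$ give $n_\nu F_\nu^{1/2} = o(e^{\|\Delta_\nu\|_2^2/64} e^{-\|\Delta_\nu\|_2^2/64}) = o(1)$ and a fortiori $n_\nu F_\nu = o(1)$, so $n_\nu = o(1/\max(F_\nu, F_\nu^{1/2}))$ as required.
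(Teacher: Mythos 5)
Your proposal is correct and arrives at the bound by a genuinely different decomposition from the paper's. The paper splits the expectation over the ball $B_\delta=\{\|x\|_2\le\delta\}$ and its complement, bounds the score difference on $B_\delta$ via $\Delta^\top x\le\delta\|\Delta\|_2$, controls the complement with the $d$-dimensional norm tail bound $5^d e^{-\delta^2/8}$, and then sets $\delta=8\|\Delta\|_2/17$. You instead exploit the fact that the integrand $w(x)^2\|\Delta\|_2^2$ depends on $x$ only through $\langle x,\Delta\rangle$ and split over a half-space $\{\langle x,\hat\Delta\rangle\le t\}$, reducing the tail event to a one-dimensional standard Gaussian. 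This buys a dimension-free constant and a strictly better exponent: your two terms decay as $e^{-\|\Delta\|_2^2/4}$ and $e^{-\|\Delta\|_2^2/32}$, whereas the paper's choice of $\delta$ makes its complement term of order $e^{-8\|\Delta\|_2^2/289}$ with $8/289<1/32$, so the paper's stated constants do not actually close as written (with that norm-tail bound, no choice of $\delta$ in the ball split reaches $1/32$). You also correctly identify the one delicate point — an $O(e^{-\|\Delta\|_2^2/32})$ bound on $\bbE_{x\sim Q}[w(x)^2]$ does not yield the little-$o$ claim after multiplying by $\|\Delta\|_2^2$ — and your fix of letting the threshold grow slightly faster than $\|\Delta\|_2/4$ is valid; the paper elides this same issue. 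One shared caveat: both arguments divide by $\pi$, and at $\pi=0$ the lemma is in fact false ($F_\Delta=\|\Delta\|_2^2$) rather than vacuous, so $\pi>0$ must be assumed; your final deduction of Theorem~\ref{thm: multimodal} from the lemma is otherwise exactly the paper's.
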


\begin{proof}[Proof of Lemma~\ref{lemma: fisher div goes to zero}]
    For any $\delta > 0$, define $B_\delta \coloneqq \{x \in \bbR^d: \| x \|_2 \leq \delta \} $. We have the following decomposition
    \begin{talign}
        &\bbE_{x\in Q}\left[ \| s_{p_\Delta}(x) - s_q(x) \|_2^2 \right] \nonumber \\
        &= \bbE_{x \sim Q}\left[ \delta_x(B_\delta) \| s_{p_\Delta}(x) - s_q(x) \|_2^2 \right] 
        + \bbE_{x \sim Q}\left[ \delta_x(\bbR^d \backslash B_\delta) \| s_{p_\Delta}(x) - s_q(x) \|_2^2 \right] \;.
        % &= \int_{- \infty}^\delta \| s_{p_\Delta}(x) - s_q(x) \|_2^2 q(x) dx
        % + \int_{\delta}^\infty \| s_{p_\Delta}(x) - s_q(x) \|_2^2 q(x) dx .
        \label{eq: score difference decomposition}
    \end{talign}
    The rest of the proof proceeds with bounding the two terms separately. We first note that standard computation gives
    \begin{talign*}
        \frac{p_\Delta(x)}{q(x)}
        &= \frac{\pi \exp\left( - \frac{1}{2} \|x\|^2 \right) + (1 - \pi) \exp\left( - \frac{1}{2}\|x - \Delta\|^2 \right)}{\exp\left( - \frac{1}{2}\|x\|^2 \right)}
        = \pi + (1 - \pi) \exp\left( \Delta^\top x - \frac{1}{2}\|\Delta\|^2 \right) \;,
    \end{talign*}
    and
    \begin{talign*}
        \| s_{p_\Delta}(x) - s_q(x) \|_2^2
        &= \left\| \frac{(1 - \pi) \Delta \exp\left( \Delta^\top x - \frac{1}{2}\|\Delta\|_2^2 \right) }{\pi + (1 - \pi) \exp\left( \Delta^\top x - \frac{1}{2}\|\Delta\|_2^2 \right)} \right\|_2^2 
        = \frac{(1 - \pi)^2 \| \Delta\|_2^2 }{ \left( 1 - \pi + \pi \exp\left( -\Delta^\top x + \frac{1}{2}\|\Delta\|_2^2 \right) \right)^2} \;.
    \end{talign*}
    
    For $x \in B_\delta$, Cauchy-Schwarz inequality implies $\Delta^\top x \leq \| \Delta\|_2 \| x \|_2 \leq \delta \|\Delta \|_2 $. Hence
    \begin{talign*}
        \| s_{p_\Delta}(x) - s_q(x) \|_2^2
        \leq \frac{(1 - \pi)^2 \|\Delta\|_2^2}{ \pi^2 \exp\left( -2\Delta^\top x + \|\Delta\|_2^2 \right)} 
        \leq \frac{(1-\pi)^2 \|\Delta\|_2^2}{\pi^2 \exp\left( -2\delta \|\Delta\|_2 + \|\Delta\|_2^2 \right)} \;,
    \end{talign*}
    and the first term of (\ref{eq: score difference decomposition}) can be bounded as
    \begin{talign*}
        \bbE_{x \sim Q}\left[ \delta_x(B_\delta) \| s_{p_\Delta}(x) - s_q(x) \|_2^2 \right]
        \leq \bbE_{x \sim Q}\left[ \delta_x(B_\delta) \frac{(1 - \pi)^2 \|\Delta\|_2^2}{ \pi^2 \exp\left( -2 \delta \| \Delta\|_2 + \|\Delta\|_2^2 \right)} \right]
        \leq \frac{(1 - \pi)^2\|\Delta\|_2^2}{ \pi^2 \exp\left( -2\delta \| \Delta \|_2 + \| \Delta \|_2^2 \right)} \;,
    \end{talign*}
    where the last inequality follows from the fact that $\bbE_{x \sim Q}[ \delta_x(B_\delta) ] \leq 1$.
    
    To bound the second term of (\ref{eq: score difference decomposition}), we note that for $x \in \bbR^d \backslash B_\delta$,
    \begin{talign*}
        \| s_{p_\Delta}(x) - s_q(x) \|_2^2
        \leq \frac{(1 - \pi)^2 \| \Delta\|_2^2}{ (1 - \pi)^2 }
        = \| \Delta\|_2^2 \;.
    \end{talign*}
    Therefore, 
    \begin{talign*}
        \bbE_{x \sim Q}\left[ \delta_x(B_\delta) \| s_{p_\Delta}(x) - s_q(x) \|_2^2 \right]
        \leq \| \Delta\|_2^2 \bbE_{x \sim Q}\left[ \delta_x(B_\delta) \right]
        \leq 5^d \| \Delta\|_2^2 \exp\left(- \frac{\delta^2}{8} \right) \;,
    \end{talign*}
    by the tail probability of the norm of centred Gaussian random vectors (see e.g.\ \citet[Prop.~2.5]{wainwright2019high}). Combining these results we have
    \begin{talign*}
        \bbE_{x\in Q}[ \| s_{p_\Delta}(x) - s_q(x) \|_2^2 ]
        &\leq \frac{(1 - \pi)^2 \|\Delta\|_2^2}{ \pi^2 \exp\left( -2\delta \|\Delta\|_2 + \| \Delta \|_2^2 \right)} + 5^d \| \Delta\|_2^2 \exp\left(- \frac{\delta^2}{8} \right)  \\
        &= (1-\pi)^2\pi^{-2} \|\Delta\|_2^2 \exp\left( - \frac{1}{17} \|\Delta\|_2^2 \right) + 5^d \|\Delta\|_2^2 \exp\left( - \frac{1}{17} \| \Delta\|_2^2 \right) \;,
    \end{talign*}
    where the last line follows by choosing $\delta = 8\| \Delta \|_2 / 17$. Noting the RHS of the last inequality is $o\left( - \frac{1}{32}\|\Delta\|_2^2 \right)$ completes the proof.
\end{proof}

\section{Proof of Proposition~\ref{prop: spKSD validity}}
\label{proof: spksd validity}
\begin{proof}
    The stated assumptions ensure that, for all $\calK \in \calS$, $\ksd(Q, P; \calK)$ is well defined, and that $\ksd(Q, P; \calK_\textrm{id}) = \ksd(Q, P) = 0 \iff Q = P$ (see, e.g., \citet[Theorem 2.2]{chwialkowski2016kernel}). The desired result then follows since $\calK$ is $P$-invariant for all $\calK$ and $\ksd(Q, P; \calS) \geq \ksd(Q, P)$.
\end{proof}

\section{Proof of Proposition~\ref{prop: asymptotic distribution of spKSD}}
\label{proof: asymptotic distribution of spKSD}
By \citet[Sections 5.5.1, 5.5.2]{serfling2009approximation}, sufficient conditions for the stated results are 
\begin{enumerate}
    \item[C1.] $\bbE_{w, w' \sim R_Q}[\tilde{u}_P(w, w')^2] < \infty$.
    \item[C2.] Under $H_0$: $\xi_1 \coloneqq \textrm{Var}_{w \sim R_Q}(\bbE_{w' \sim R_Q}[\tilde{u}_P(w, w')]) = 0$ and $\xi_2 \coloneqq \textrm{Var}_{w, w' \sim R_Q}(\tilde{u}_P(w, w')) > 0$.
    \item[C3.] Under $H_1$: $\xi_1 > 0$.
\end{enumerate}
Now C1.\ holds by assumption. To show C2., we start with the decomposition where, for fixed $w = (x^1, \ldots, x^S) \in \calX^S$,
\begin{talign}
    \bbE_{w' \sim R_Q}[\tilde{u}_P(w, w')]
    = \sum_{s=1}^S\bbE_{(y^1, \ldots, y^S) \sim R_Q}[u_P(x^s, y^s)] 
    = \sum_{s=1}^S\bbE_{y^s \sim \calK_s Q}[u_P(x^s, y^s)] \;,  \label{eq: spKSD asymptotic distribution}
\end{talign}
where the first equality holds as for $w' \sim R_Q$ we can write $w' = (y^1, \ldots, y^S)$ for some $y^s \in \calX$ by construction, and the second equality follows since the marginal distribution of $y^s$ is $\calK_s Q$. When $Q = P$, each term in (\ref{eq: spKSD asymptotic distribution}) equals to $\bbE_{y^s \sim P}[u_P(x^s, y^s)]$ by $P$-invariance. Now, under the assumed conditions in Prop.~\ref{prop: spKSD validity}, the same argument in the proof of \citet[Theorem 4.1]{liu2016kernelized} shows that $\bbE_{y^s \sim P}[u_P(x^s, y^s)] = 0$. Hence, $\xi_1 = 0$.

To prove $\xi_2 > 0$ when $Q = P$, we suppose for a contradiction that $\xi_2 = 0$. We then must have $\tilde{u}_P \equiv c'$ $P$-almost surely for some fixed constant $c'$. Since $P$ admits a positive density on $\calX$ by assumption, this implies that $ c' = 0$. On the other hand, for any probability measure $Q'$ on $\calX$, taking expectation with respect to $R_{Q'}$ yields
\begin{talign*}
    0 = c' 
    &= \bbE_{w, w' \sim R_{Q'}}[\tilde{u}_P(w, w')] \\
    &= \sum_{s = 1}^S \bbE_{(x^1, \ldots, x^S), (y^1, \ldots, y^m) \sim R_{Q'}}[ u_P(x^s, y^s) ] \\
    &= \sum_{s = 1}^S \bbE_{x^s, y^s \sim \calK_s Q'}[ u_P(x^s, y^s) ] \\
    &= \sum_{s = 1}^S \ksd(\calK_s Q', P) \\
    &\geq \ksd(Q', P) \;,
\end{talign*}
where the second identity follows from a similar argument in (\ref{eq: spKSD asymptotic distribution}), and the last line holds because $\calK_\textrm{id} \in \calS$. This is a contradiction, as it would imply $\ksd(Q', P) = 0$ for any $Q' \neq P$.

To show C3, we prove the contrapositive by supposing $\xi_1 = 0$ and aiming to show that $Q = P$. If $\xi_1 = 0$ then there must exist a constant $c$ for which 
\begin{talign*}
    c = \bbE_{w' \sim R_Q}[\tilde{u}_P(w, w')] \;, 
\end{talign*}    
for all $w$ $R_Q$-almost surely. With the stated choice of $\calK$ and the assumption that $Q$ admits a Lebesgue density, the above identity also holds for all $w$ $R_P$-almost surely, where $R_P$ is constructed in the same way as $R_Q$ by replacing $Q$ with $P$. Taking expectation of both sides with respect to $w \sim R_P$ then yields 
\begin{talign*}
    c 
    = \bbE_{w \sim R_P} \bbE_{w' \sim R_Q} [\tilde{u}_P(w, w')]
    = \bbE_{w' \sim R_Q} \bbE_{w \sim R_P} [ \tilde{u}_P(w, w') ] \;,
\end{talign*}
where the last equality follows from the Fubini-Toneli Theorem. Following the same argument above for $Q = P$, we conclude that $\bbE_{w \sim R_P} [ \tilde{u}_P(w, w') ] = 0$, and hence 
\begin{talign*}
    0
    = \bbE_{w' \sim R_Q} \bbE_{w \sim R_Q} [ \tilde{u}_P(w, w') ]
    = \sum_{s=1}^S \ksd(\calK_s Q, P)
    \geq \ksd(Q, P) \;.
\end{talign*}
It follows that $\ksd(Q, P) = 0$, thus $Q = P$.

% % ------------------------------------------------------------------------------------------------------------------------
% \section{Consistency of the pKSD Test}
% \label{appendix: consistency of the pKSD test}

% \begin{theorem}[Wild bootstrap]
% \label{thm: bootstrap statistic approximates limiting distribution}
%     Suppose the assumptions in Prop.~\ref{prop: asymptotic distribution of pksd with transition kernel} hold. If $Q = P$, then as the bootstrap sample size $B \to \infty$,
%     \begin{talign*}
%         \sup_{s \in \mathbb{R}}| \mathrm{Pr}( n \ksdHat_{\calK P}^\ast \leq s | \{x_i\}_{i=1}^n) - \mathrm{Pr}(n \ksdHat_{P, \calK} \leq s) |
%         \to 0 .
%     \end{talign*}
% \end{theorem}

% \begin{proof}
%     This follows directly from \citet{huskova1993consistency} by noting that $\calK P = P$.
% \end{proof}

% ------------------------------------------------------------------------------------------------------------------------
\section{Validity of the Accept-Reject Rule}
\label{appendix: balancing equation}

\subsection{A Sufficient Condition}
Let $\calK$ be the Markov transition kernel studied in Sec.~\ref{subsec: choosing the proposal density}. We first present a sufficient condition for the detailed balance equation (\ref{eq: detailed balance}):
\begin{talign}
    \int_{x \in A} \sum_{u \in \calU} \delta_{x'}(B) p(x) g(u) \alpha(x, x') dx
    =  \int_{x'\in B} \sum_{u' \in \calU} \delta_{x}(A) p(x') g(u') \alpha(x', x) dx' ,
    \label{eq: detailed balance restated}
\end{talign}
for all $A, B \in \mathcal{B}(\calX)$. For simplicity, we have written $x' = h(x | u)$ and $x = h^{-1}(x' | u')$, so that the dependence of $x'$ on $u$ and of $x$ on $x'$ is implicit. 
% Furthermore, for ease of notation, we define $\calU_B \coloneqq \{ u: x' \in B\}$ and similarly for $\calU_A$.

\begin{proposition}
\label{prop: detailed balance sufficient condition}
    Let $p$ be a probability density function on $\calX \subset \bbR^d$. Suppose that $h$ is a deterministic, invertible function that is differentiable with differentiable inverse. Furthermore, let $g$ be a known density defined on some discrete space $\calU$. Consider a Markov transition kernel of the form
    \begin{talign}
        \calK(x, A) = \sum_{u \in \calU} \delta_{x'}(A) g( u) \alpha(x, x') + \delta_x(A) r(x) ,
        \label{eq: transition kernel restated}
    \end{talign}
    where $x' \coloneqq h(x | u)$, $\delta_x(A) = 1$ if $x \in A$ and $0$ otherwise, and $r(x) = 1 - \sum_{ u \in \calU} g(u(x, x')) \alpha(x, x')$. Then an accept-reject rule $\alpha(x, x')$ satisfies the detailed balance condition (\ref{eq: detailed balance restated}) if 
    \begin{talign}
        p(x) g(u) \alpha(x, x')
        = p(x') g(u') \alpha(x', x) \left| \frac{\partial h(x | u)}{\partial x} \right| \;.
        \label{eq: detailed balance sufficient condition}
    \end{talign}
\end{proposition}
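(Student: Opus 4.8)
The plan is to verify the detailed balance identity $(\ref{eq: detailed balance restated})$ by direct manipulation of its left-hand side, reducing it term-by-term to its right-hand side via the pointwise hypothesis $(\ref{eq: detailed balance sufficient condition})$ together with a change of variables. First I would expand the left-hand side. Since $\calU$ is discrete (finite, in the constructions of interest) and every integrand is nonnegative, Tonelli's theorem lets me write it as $\sum_{u\in\calU}\int_{\calX}\mathds{1}_A(x)\,\mathds{1}_B(h(x|u))\,p(x)\,g(u)\,\alpha(x,h(x|u))\,dx$. The ``stay-put'' term $\delta_x(A)r(x)$ of the kernel $(\ref{eq: transition kernel restated})$ plays no role in $(\ref{eq: detailed balance restated})$: it contributes $\int_{A\cap B}r(x)p(x)\,dx$ symmetrically to both sides of the full detailed-balance identity and can be set aside.

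Next, I would work on the $u$-th summand. Writing $x'=h(x|u)$ and applying the hypothesis $(\ref{eq: detailed balance sufficient condition})$ gives $p(x)g(u)\alpha(x,x')=p(x')\,g(u')\,\alpha(x',x)\,\bigl|\tfrac{\partial h(x|u)}{\partial x}\bigr|$, where $u'$ is the index realising the reverse proposal, so that $x=h^{-1}(x'|u')$; the map $u\mapsto u'$ is a bijection of $\calU$ (for the proposal of Section~\ref{subsec: choosing the proposal density} it is the swap of the two mode labels). Substituting this into the summand and then performing the change of variables $x\mapsto x'=h(x|u)$, I would use that $h(\cdot|u)$ is a diffeomorphism of $\calX$: the image of $A$ is measurable, $\mathds{1}_A(x)=\mathds{1}_A(h^{-1}(x'|u'))$, $\mathds{1}_B(h(x|u))=\mathds{1}_B(x')$, and the change-of-variables formula together with the inverse function theorem gives $\bigl|\tfrac{\partial h(x|u)}{\partial x}\bigr|\,dx=dx'$, so the Jacobian factor produced by $(\ref{eq: detailed balance sufficient condition})$ is exactly absorbed. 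Hence the $u$-th summand equals $\int_{\calX}\mathds{1}_A(h^{-1}(x'|u'))\,\mathds{1}_B(x')\,p(x')\,g(u')\,\alpha(x',h^{-1}(x'|u'))\,dx'$.

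Finally, since $u\mapsto u'$ is a bijection of $\calU$, summing these terms over $u\in\calU$ is the same as summing over $u'\in\calU$, which, after renaming the dummy index, is exactly the right-hand side of $(\ref{eq: detailed balance restated})$, establishing the claim. The only genuinely delicate part is the bookkeeping in the second step: one must orient the Jacobian correctly so that it cancels rather than squares, and correctly match the discrete index $u'$ of the reverse move, both inside $h^{-1}(\cdot|u')$ and in the factor $g(u')$. The interchange of the finite sum with the integral, and the measurability of $h(A|u)$, are routine because $h(\cdot|u)$ is a diffeomorphism and the integrands are nonnegative.
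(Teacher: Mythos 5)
Your proof is correct and follows essentially the same route as the paper's: expand the jump part of the kernel, apply the diffeomorphism change of variables so the Jacobian factor from \eqref{eq: detailed balance sufficient condition} is absorbed, and re-index the discrete sum via the bijection $u \mapsto u'$ pairing each move with its reverse. If anything, your bookkeeping of the reverse index $u'$ (both in $h^{-1}(\cdot\,|\,u')$ and in the factor $g(u')$) is slightly more explicit than the paper's, which elides the distinction between $u$ and $u'$ in its change-of-variables step.
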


\begin{proof}
    % The proof largely imitates \citet[Sec.~2.1]{green2009reversible}, which shows the claim when the density $g$ is defined on a continuous space.  By the invertibility of the transformation $h$, a change-of-variable formula can be applied to the right-hand-side of (\ref{eq: detailed balance restated}) to yield
    % \begin{talign*}
    %     \int_{x \in A} \sum_{u \in \calU} \delta_{x'}(B) p(x) g(u) \alpha(x, x') dx
    %         =  \int_{x'\in B} \sum_{u' \in \calU} \delta_{x}(A) p(x') g(u') \alpha(x', x) \left| \frac{\partial h(x | u)}{\partial x} \right| dx .
    % \end{talign*}
    % Now define $\calU_B \coloneqq \{ u: x' = h(x | u) \in B \textrm{ for some } x \in \calX\}$ and $\calU_A \coloneqq \{ u: x = h^{-1}(x' | u) \in B \textrm{ for some } x' \in \calX\}$. We have that $(x, u) \in A \times \calU_B \implies (x', u') = (h(x | u), u) \in B \times \calU_A$, and, similarly, $(x', u') \in B \times \calU_A \implies (x, u) \in A \times \calU_B$. We therefore conclude that a sufficient condition is
    % \begin{talign*}
    %     p(x) g(u) \alpha(x, x')
    %     = p(x') g(u') \alpha(x', x) \left| \frac{\partial h(x | u)}{\partial x} \right| ,
    % \end{talign*}
    % thus showing the claim.

    The proof largely imitates \citet[Sec.~2.1]{green2009reversible}, which shows the claim when the density $g$ is defined on a continuous space. Defining $\calU_B \coloneqq \{ u: x' = h(x | u) \in B \textrm{ for some } x \in \calX\}$ and $\calU_A \coloneqq \{ u: x = h^{-1}(x' | u) \in B \textrm{ for some } x' \in \calX\}$, we can rewrite (\ref{eq: detailed balance restated}) as
    \begin{talign*}
        \int_{x \in A} \sum_{u \in \calU_B} p(x) g(u) \alpha(x, x') dx
        =  \int_{x'\in B} \sum_{u' \in \calU_A} p(x') g(u') \alpha(x', x) dx' \;.
    \end{talign*}
    Noting that $(x, u) \in A \times \calU_B \iff (x', u') = (h(x | u), u) \in B \times \calU_A $, and by the invertibility of the transformation $h$, a change-of-variable formula can be applied to the right-hand-side of (\ref{eq: detailed balance restated}) to yield
    \begin{talign*}
        \int_{x \in A} \sum_{u \in \calU_A} p(x) g(u) \alpha(x, x') dx
        =  \int_{x \in A} \sum_{u \in \calU_A} p(x') g(u) \alpha(x', x) \left| \frac{\partial h(x | u)}{\partial x} \right| dx .
    \end{talign*}
    We therefore conclude that a sufficient condition is
    \begin{talign*}
        p(x) g(u) \alpha(x, x')
        = p(x') g(u') \alpha(x', x) \left| \frac{\partial h(x | u)}{\partial x} \right| \;.
    \end{talign*}
\end{proof}

In particular, it follows that the detailed balance condition holds with 
\begin{talign}
    \alpha(x, x') = \min\left(1, \frac{p(x')g'(u')} { p(x) g(u)} \left| \frac{\partial h(x | u)}{\partial x} \right| \right) ,
    \label{eq: mh rule restated}
\end{talign}
by verifying that it indeed satisfies (\ref{eq: detailed balance sufficient condition}). This can be viewed as a generalisation of the Metropolis-Hastings (MH) rule $\alpha(x, x') = \min\left(1, \frac{p(x')g'(u')} { p(x) g(u)} \right) $.

\subsection{A Class of Valid Accept-Reject Rules}
Accept-reject rules of the form (\ref{eq: mh rule restated}) is not the only choice that satisfies the detailed balance condition. For the standard Metropolis-Hastings transition kernel, alternative accept-reject rules have been studied \citep{barker1965, peskun1973optimum, hird2020fresh}. We follow \citet{hird2020fresh} to propose a class of accept-reject rules that are valid for proposed kernels of the form (\ref{eq: transition kernel restated}).

\begin{lemma}
    Using the same notations in Prop.~\ref{prop: detailed balance sufficient condition}, define $t(x, x') \coloneqq \frac{p(x') g(u')}{p(x) g(u)}$ when $p(x) g(u) > 0$, and $t(x, x') = 0$ otherwise, where $u, u' \in \calU$ such that $x' = h(x | u)$ and $x = h(x' | u')$. Then the equality (\ref{eq: detailed balance sufficient condition}) holds for 
    \begin{talign*}
        \alpha(x, x') = \rho\left( \left| \frac{\partial h(x | u)}{\partial x} \right| t(x, x') \right), 
    \end{talign*}
    where $\rho$ is any function that satisfies $\rho(s) = s \rho(1 / s)$, for all $s > 0$, and $\rho(0) \coloneqq 0$.
\end{lemma}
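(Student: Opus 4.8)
The plan is to verify directly that the proposed $\alpha$ satisfies the sufficient condition (\ref{eq: detailed balance sufficient condition}) of Proposition~\ref{prop: detailed balance sufficient condition}; the detailed balance equation (\ref{eq: detailed balance restated}) then follows at once. I will abbreviate $J \coloneqq \left| \partial h(x | u)/\partial x \right|$, so that $\alpha(x,x') = \rho(J\,t(x,x'))$, and swapping the roles of $x,x'$ and of $u,u'$ in the definition gives $\alpha(x',x) = \rho\big(\left| \partial h(x' | u')/\partial x' \right| t(x',x)\big)$. Two elementary facts drive everything. First, since $x' = h(x | u)$ and $x = h(x' | u')$ exhibit $h(\cdot | u')$ as the inverse of $h(\cdot | u)$, the chain rule gives $\tfrac{\partial h(x' | u')}{\partial x'}\tfrac{\partial h(x | u)}{\partial x} = I$, hence $\left| \partial h(x' | u')/\partial x' \right| = 1/J$ upon taking absolute determinants. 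Second, on the set where $p(x)g(u)>0$ and $p(x')g(u')>0$ the ratio term is self-inverse, $t(x',x) = 1/t(x,x')$, straight from the definition of $t$.

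On that set, put $s \coloneqq J\,t(x,x') > 0$. By the two facts above, $\alpha(x',x) = \rho\big((1/J)(1/t(x,x'))\big) = \rho(1/s)$, so the functional equation $\rho(s) = s\,\rho(1/s)$ yields
\begin{align*}
    p(x)g(u)\,\alpha(x,x')
    &= p(x)g(u)\,\rho(s)
    = p(x)g(u)\,s\,\rho(1/s) \\
    &= p(x)g(u)\,J\,\frac{p(x')g(u')}{p(x)g(u)}\,\rho(1/s)
    = p(x')g(u')\,\alpha(x',x)\,J,
\end{align*}
which is precisely (\ref{eq: detailed balance sufficient condition}).

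It then remains to handle the degenerate cases. If $p(x)g(u) = 0$, then $t(x,x') = 0$ by convention, so $\alpha(x,x') = \rho(0) = 0$ and the left-hand side of (\ref{eq: detailed balance sufficient condition}) vanishes; likewise $t(x',x) = 0$ (its numerator is $p(x)g(u) = 0$, or the convention applies when $p(x')g(u') = 0$), so $\alpha(x',x) = \rho(0) = 0$ and the right-hand side vanishes as well. The case $p(x')g(u') = 0$ is symmetric. Since the two sides agree in every case, (\ref{eq: detailed balance sufficient condition}) holds, and Proposition~\ref{prop: detailed balance sufficient condition} then gives the claim.

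I do not anticipate a genuine obstacle here — the argument is pure bookkeeping. The only point requiring care is that the forward and reverse quantities ($J$ versus $1/J$, and $t(x,x')$ versus $t(x',x)$) must be paired so that the functional equation for $\rho$ is applied to the \emph{same} value $s$ on both sides of (\ref{eq: detailed balance sufficient condition}); getting this matching right is essentially the whole content of the proof, together with the convention $\rho(0) = 0$ that neutralises the zero-density set.
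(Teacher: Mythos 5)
Your argument is correct and is essentially the paper's proof: both apply the functional equation $\rho(s)=s\rho(1/s)$ with $s = \left|\partial h(x|u)/\partial x\right| t(x,x')$, using $t(x',x)=1/t(x,x')$ and the inverse-Jacobian identity to recognise $\rho(1/s)$ as $\alpha(x',x)$. Your explicit treatment of the degenerate case $p(x)g(u)=0$ via the convention $\rho(0)=0$ is a small extra care the paper leaves implicit, but it does not change the route.
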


\begin{proof}
    We follow the derivation in \citet[Eq.~4]{hird2020fresh}. By the definition of $t$, it is obvious that $t(x, x') = 1 / t(x', x)$ and $p(x) g(u) t(x, x') = p(x') g(u')$. The assumption on $\rho$ then implies
    \begin{talign}
        p(x) g(u) \alpha(x, x')
        &= p(x) g(u) \rho\left( \left| \frac{\partial h(x | u)}{\partial x} \right| t(x, x') \right) \nonumber \\
        &= p(x) g(u) t(x, x') \left| \frac{\partial h(x | u)}{\partial x} \right| \rho\left( \frac{1}{\left| \frac{\partial h(x | u)}{\partial x} \right| t(x, x')} \right) \nonumber \\
        &= p(x') g(u') \left| \frac{\partial h(x | u)}{\partial x} \right| \rho \left( \left| \frac{\partial h(x' | u')}{\partial x } \right| t(x', x) \right)  \label{eq: lemma balancing eqn intermediate step} \\
        &= p(x') \alpha(x', x) \left| \frac{\partial h(x | u)}{\partial x} \right| , \nonumber 
    \end{talign}
    where (\ref{eq: lemma balancing eqn intermediate step}) follows from the fact that $\left| \frac{\partial h(x' | u')}{\partial x' } \right| = \left| \frac{\partial h(x | u)}{\partial x} \right|^{-1}$ by the invertibility of $h$.
\end{proof}

In particular, choosing $\rho(t) = \min(1, t)$ gives the generalised MH accept-reject rule (\ref{eq: mh rule restated}). Another feasible choice is $g(t) = t / (1 + t)$, which leads to a generalised version of the Barker's rule \citep{barker1965, peskun1973optimum, livingstone2021barker}
\begin{align*}
    \alpha(x', x)
    =  
    \frac{p(x')g(u') \left| \frac{\partial h(x | u)}{\partial x} \right|}{p(x) g(u) + p(x')g(u') \left| \frac{\partial h(x | u)}{\partial x} \right|} ,
    % \label{eq: Barker acceptance prob}
\end{align*}
whenever $p(x) g(u) > 0$, and 0 otherwise.

% ------------------------------------------------------------------------------------------------------------------------
\section{Proof of Proposition~\ref{prop: limiting distribution}}
\label{pf: limiting distribution}
We first characterise the limiting distribution when the initial distribution is a point mass $\delta_{x_0}$ for any $x_0 \in \bbR^d$, then generalise the result to an arbitrary probability measure $Q$.

\subsection{Limiting Distribution with a Point Mass Initial Distribution}
Fixing $x_0 \in \bbR^d$, we define $\calI_{x_0} \coloneqq \{ x_0 + k\nu: k \in \bbZ \}$. We first identify a stationary distribution and aim to show that it is also the limiting distribution.

\begin{lemma}
\label{lem: stationary distribution}
    The following probability mass function defines a stationary distribution under $\calK$:
    \begin{talign*}
        r_{x_0}(x)
        = \frac{p(x)}{\sum_{k \in \bbZ} p(x_0 + k\nu)} \ ,
    \end{talign*}
    if $x \in \calI_{x_0}$, and $r_{x_0}(x) = 0$, otherwise. 
\end{lemma}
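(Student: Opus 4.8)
The plan is to observe that, with $x_0$ fixed, the Markov kernel $\calK$ restricted to the orbit $\calI_{x_0} = \{ x_0 + k\nu : k \in \bbZ \}$ is simply a random-walk Metropolis chain on a countable state space: from a state $x$ the only candidate moves are $x + \nu$ and $x - \nu$, each proposed with probability $1/2$, and the accept--reject rule is the symmetric Metropolis rule with unnormalised target $p$. The stationary law of such a chain is $p$ restricted to $\calI_{x_0}$ and renormalised, which is exactly $r_{x_0}$. I would make this rigorous in two steps: (i) show that $\calI_{x_0}$ is absorbing for $\calK$, so that everything reduces to a countable-state computation, and (ii) verify the detailed balance equations for $r_{x_0}$, which immediately yield stationarity.

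\emph{Step (i).} Under the simplifications of Section~\ref{sec: power intuition} ($M = 2$, $A_1 = A_2 = I_d$) the proposal is $h_\theta(x \mid u) = x - \theta(\mu_{u_1} - \mu_{u_2})$ with $g(u) = 1/2$, so the two proposed states from $x$ are $x + \nu$ and $x - \nu$; the Jacobian determinant of $h_\theta(\cdot \mid u)$ equals $1$, so (\ref{eq: mh rule}) gives $\alpha(x, x \pm \nu) = \min\{ 1, p(x \pm \nu)/p(x) \}$. Since $\nu \neq 0$, the points $x$, $x + \nu$, $x - \nu$ are distinct, and reading off the form of $\calK$ yields, for $x \in \calI_{x_0}$,
\begin{talign*}
    \calK(x, \{ x \pm \nu \}) = \tfrac{1}{2} \min\Big\{ 1, \frac{p(x \pm \nu)}{p(x)} \Big\} , \qquad
    \calK(x, \{ x \}) = r(x) ,
\end{talign*}
and $\calK(x, \{ y \}) = 0$ for every other $y$. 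In particular $\calK(x, \calI_{x_0}) = 1$, so $\calI_{x_0}$ is absorbing. I would also record that $Z \coloneqq \sum_{k \in \bbZ} p(x_0 + k\nu) \in (0, \infty)$ --- positivity is immediate from $p > 0$, and finiteness holds under the standing assumptions --- so that $r_{x_0}$ is a genuine probability mass function on $\calI_{x_0}$.

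\emph{Step (ii).} It suffices to check detailed balance $r_{x_0}(x)\, \calK(x, \{ y \}) = r_{x_0}(y)\, \calK(y, \{ x \})$ for all $x, y \in \calI_{x_0}$. The case $y = x$ is trivial, and $y \notin \{ x, x \pm \nu \}$ makes both sides vanish; for $y = x + \nu$ (and symmetrically $y = x - \nu$), substituting $r_{x_0}(x) = p(x)/Z$ and using the elementary identity $a \min\{1, b/a\} = \min\{a, b\}$ shows that both sides equal $\frac{1}{2Z} \min\{ p(x), p(x + \nu) \}$. Summing this detailed balance identity over $x \in \calI_{x_0}$ and invoking $\sum_{x \in \calI_{x_0}} \calK(y, \{ x \}) = \calK(y, \calI_{x_0}) = 1$ from Step (i) gives $\sum_{x \in \calI_{x_0}} r_{x_0}(x)\, \calK(x, \{ y \}) = r_{x_0}(y)$ for every $y \in \calI_{x_0}$; since $r_{x_0}$ is supported on $\calI_{x_0}$ and $\calI_{x_0}$ is $\calK$-absorbing, these singleton identities extend to $\int \calK(x, A)\, r_{x_0}(dx) = r_{x_0}(A)$ for all Borel $A$, i.e.\ $r_{x_0}$ is stationary for $\calK$.

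This is a routine verification, and I do not expect a genuine obstacle; the only points that require care are the bookkeeping that $\calI_{x_0}$ is invariant (so the sums range over a countable set) and the implicit finiteness of the normalising constant $Z$. Note that the detailed balance analysis of Appendix~\ref{appendix: balancing equation} enters only to justify the explicit Metropolis form of $\alpha$; once that is in hand, the symmetry of the $\pm \nu$ proposal does all the work.
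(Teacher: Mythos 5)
Your proof is correct and follows essentially the same route as the paper's: both verify the detailed balance identity $r_{x_0}(x)\,\calK(x,\{x\pm\nu\}) = r_{x_0}(x\pm\nu)\,\calK(x\pm\nu,\{x\})$ via the symmetry of the $\pm\nu$ proposal and the identity $p(x)\min\{1,p(x')/p(x)\} = \min\{p(x),p(x')\}$, and conclude stationarity by summing. Your additional bookkeeping (the orbit $\calI_{x_0}$ being absorbing and the finiteness of the normalising constant) is a welcome tightening of details the paper leaves implicit, but it does not change the argument.
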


\begin{proof}
    A sufficient condition for the detailed-balance condition in this case is
    \begin{talign*}
        r_{x_0}(x) g(u) \alpha(x, x')
        =  r_{x_0}(x') g'(u') \alpha(x', x) ,
    \end{talign*}
    for all $x, x' \in \bbR^d$. Fix $x$. Since $\alpha(x, x') = 0$ unless $x' \in \{ x - \nu, x + \nu \}$, it is sufficient to check whether the above equation holds for $x' \in \{ x - \nu, x + \nu \}$.  For, e.g., $x' = x + \nu$,
    \begin{talign*}
        \textrm{LHS}
        &= \frac{p(x)}{\sum_{k \in \bbZ} p(x_0 + k\nu)} g(u) \min\left(1, \frac{g(u') p(x')}{g(u) p(x)} \right) \\
        &= \frac{1}{\sum_{k \in \bbZ} p(x_0 + k\nu)} g(u') \min\left( p(x), p(x') \right) , \qquad &\textrm{as $g(u) = g(u')$ by definition.} \\
        &= \frac{p(x')}{\sum_{k \in \bbZ} p(x_0 + k\nu)} g(u') \min\left( \frac{g(u) p(x)}{g(u') p(x')}, 1 \right) , \qquad &\textrm{again by $g(u) = g(u')$.} \\
        &= \textrm{RHS}.
    \end{talign*}
    A similar derivation for $x' = x - \nu$ completes the proof.
\end{proof}

% When $P = 0.5 \calN(0, I_d) + 0.5 \calN(\Delta, I_d)$ for some $\Delta \in \bbR^d$ and $Q = \calN(0, I_d)$, we can show that the assumptions in Corollary~\ref{corollary: limiting distribution point mass} hold with a judicious choice of $\nu$, i.e., $r_{x_0}$ is indeed the unique limiting distribution on $\calI_{x_0}$.

The next result shows that the Markov chain defined on $\calI_{x_0}$ is irreducible and aperiodic under mild conditions.

\begin{lemma}
\label{lem: irreducible and aperiodic}
    Given $x_0 \in \calX$ and consider the Markov chain with initial distribution $\delta_{x_0}$. Then 
    \begin{enumerate}
        \item All state in $\calI_{x_0}$ are irreducible.
        \item A state $x \in \calI_{x_0}$ is aperiodic if $p(x + \nu) < p(x)$ or $p(x - \nu) < p(x)$. 
    \end{enumerate}
\end{lemma}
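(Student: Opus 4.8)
The plan is to first pin down the effective state space. Starting from $\delta_{x_0}$, every proposed move is $x \mapsto h_\theta(x\mid u)$, which in the setting of Section~\ref{sec: power intuition} (with $M=2$, $A_1=A_2=I_d$) is a pure translation by $\pm\nu$, $\nu=\theta(\mu_1-\mu_2)$. Hence the chain never leaves the countable grid $\calI_{x_0}=\{x_0+k\nu:k\in\bbZ\}$, and we may identify it with a Metropolis random walk on $\bbZ$ via $x_0+k\nu\leftrightarrow k$. Since the Jacobian of a translation is $1$ and $g(u)=g(u')=\tfrac12$ for the two admissible proposals, the accept--reject rule (\ref{eq: mh rule}) specialises to $\alpha(x,x\pm\nu)=\min\!\bigl(1,\,p(x\pm\nu)/p(x)\bigr)$, and from state $x$ the kernel proposes $x-\nu$ and $x+\nu$ each with probability $\tfrac12$.

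For part~1 (irreducibility), I would use that $p>0$ everywhere: this makes $\alpha(x,x\pm\nu)>0$ for every $x\in\calI_{x_0}$, so the one-step probability of moving to either neighbour on the grid, namely $\tfrac12\alpha(x,x\pm\nu)$, is strictly positive. Chaining $|k-j|$ such single-neighbour steps shows $\calK^{|k-j|}(x_0+k\nu,\{x_0+j\nu\})>0$ for all $k,j\in\bbZ$, so any two states of $\calI_{x_0}$ communicate; equivalently the chain restricted to $\calI_{x_0}$ is irreducible.

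For part~2 (aperiodicity), the key computation is the holding probability. Because the two proposed states $x\pm\nu$ differ from $x$ (as $\nu\neq0$: the modes are distinct and $\theta>0$), we get $\calK(x,\{x\})=r(x)=1-\tfrac12\alpha(x,x+\nu)-\tfrac12\alpha(x,x-\nu)$. If $p(x+\nu)<p(x)$ then $\alpha(x,x+\nu)=p(x+\nu)/p(x)<1$, and likewise if $p(x-\nu)<p(x)$; in either case $r(x)>0$, so $x$ has a self-loop. A state with a self-loop returns to itself in one step, so the gcd of its return times is $1$, i.e.\ $x$ is aperiodic. (One could also invoke part~1 to propagate aperiodicity to the whole class, but this is not needed for the stated local claim.)

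The argument is essentially bookkeeping rather than hard analysis; the only points requiring care are: confirming that the proposal really confines the chain to $\calI_{x_0}$ and that the grid points are genuinely distinct (so $x\pm\nu\neq x$), which both rest on $\nu\neq0$; checking that the general accept--reject rule collapses to the plain Metropolis ratio here (unit Jacobian, equal proposal weights); and recording explicitly that strict positivity of $p$, assumed throughout, is exactly what forces all one-step acceptance probabilities to be positive. I do not anticipate a genuine obstacle beyond stating these reductions cleanly.
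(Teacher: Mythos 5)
Your proof is correct and follows essentially the same route as the paper's: positivity of $p$ gives strictly positive one-step transition probabilities to each grid neighbour (hence irreducibility by chaining steps), and the hypothesis $p(x+\nu)<p(x)$ or $p(x-\nu)<p(x)$ forces a strictly positive holding probability $r(x)$, yielding a self-loop and hence aperiodicity. Your added bookkeeping (confirming the chain is confined to $\calI_{x_0}$, that $\nu\neq 0$, and that only one of the two acceptance probabilities need be strictly below $1$) is a slightly more careful rendering of the same argument.
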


\begin{proof}
    To prove \emph{1.}, it is sufficient to show that any state $x \in \calI_{x_0}$ can reach any other state $y \in \calI_{x_0}$ with positive probability, i.e., for any singleton $A = \{ y \}$ where $y \in \calI_{x_0}$, there exists $T \in \mathbb{N}$ so that $\calK^T(x, A) > 0$, where 
    \begin{talign*}
        \calK(x, A)
        = \sum_{u \in \calU} \delta_{x'}(A) g( u) \alpha(x, x') + \delta_x(A) r(x) ,
    \end{talign*}
    where $\calU = \{ (1, 2), (2, 1) \}$, $g(u) = 1/2$ for all $u \in \calU$, and $x' = x'(x, u) = x + \theta(\mu_{u_1} - \mu_{u_2})$ (see Section~\ref{sec: power intuition}). We fix $x \in \calI_{d_0}$ and pick $y \in \{ x - \nu, x + \nu \}$, i.e., $y$ is the point immediately to the left or right of $x$ in $\calI_{x_0}$. The transition probability in this case reduces to
    \begin{talign*}
        \calK(x, \{ y \})
        = \frac{1}{2} \alpha(x, y)
        = \frac{1}{2} \min\left(1, \frac{p(y)}{p(x)} \right),
    \end{talign*}
    which is positive as $p$ is positive on $\bbR^d$, i.e., $x$ can move to its left or right state in one step with positive probability. An inductive argument directly shows that $x$ can move to \emph{any} $y \in \calI_{x_0}$ with positive probability in finitely many steps. This shows \emph{1.}
    
    To prove \emph{2.}, we note that if $p(x + \nu) < p(x)$ or $p(x - \nu) < p(x)$, then the 1-step transition probability of starting from $x$ and staying is non-zero. Indeed,
    \begin{talign*}
        \calK(x, \{x\}) 
        = 1 - \calK(x, \{x + \nu\} ) - \calK(x, \{x - \nu\}) 
        = 1 - \frac{1}{2} \alpha(x, x + \nu) - \frac{1}{2} \alpha(x, x - \nu) .
    \end{talign*}
    Since $p(x + \nu) < p(x)$, we have $\alpha(x, x + \nu) = \min\left(1, p(x + \nu) / p(x) \right) < 1$. Similarly, $\alpha(x, x - \nu) < 1$. Hence, $\calK(x,  \{ x \} ) > 0$, thus $x$ is aperiodic. 
\end{proof}

Combining Lemma~\ref{lem: stationary distribution} and \ref{lem: irreducible and aperiodic}, we can identify the limiting distribution when the initial distribution is a point mass at $x \in \calI_{x_0}$. 
\begin{proposition}
\label{corollary: limiting distribution point mass}
    % If a Markov chain defined on $\calI_{x_0}$ is irreducible and aperiodic, then $r_{x_0}$ is also the unique limiting distribution, i.e., $\calK^T(x, A) \to \sum_{x' \in A} r_{x_0}(x')$, for all $x \in \calI_{x_0}$ and $A \subset \calI_{x_0}$. Furthermore, for a Lebesgue-measurable set $A \subset \calX$ and a state $x \in \calX$,
    If $p(x + \nu) < p(x)$ or $p(x - \nu) < p(x)$ for all $x \in \calI_0$, then $r_{x_0}$ is also the unique limiting distribution, i.e., $\calK^T(x, A) \to \sum_{x' \in A} r_{x_0}(x')$, for all $x \in \calI_{x_0}$ and $A \subset \calI_{x_0}$. Furthermore, for a Lebesgue-measurable set $A \subset \calX$ and a state $x \in \calX$,
    \begin{talign}
        \lim_{n \to \infty} \calK^n(x, A)
        = \sum_{x' \in \calI_x} \delta_{x'}(A) \frac{p(x')}{\sum_{k \in \bbZ} p(x + k \nu)} \ .
        \label{eq: limiting distribution point mass}
    \end{talign}
\end{proposition}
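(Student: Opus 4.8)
The plan is to reduce the claim to the standard convergence theorem for Markov chains on a countable state space. First I would note that the only moves $\calK$ ever proposes are $x \mapsto x \pm \nu$ (or staying put), so a chain started at a point of $\calI_{x_0} = \{ x_0 + k\nu : k \in \bbZ \}$ never leaves $\calI_{x_0}$; hence $\calK$ restricted to $\calI_{x_0}$ is a genuine discrete-state Markov chain and the first assertion is a statement about its ergodic behaviour. Lemma~\ref{lem: irreducible and aperiodic} already gives irreducibility, and --- invoking precisely the hypothesis that $p(x+\nu) < p(x)$ or $p(x-\nu) < p(x)$ for every $x \in \calI_{x_0}$ --- aperiodicity of every state; Lemma~\ref{lem: stationary distribution} provides a stationary probability distribution $r_{x_0}$.

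Next I would invoke the fact that an irreducible discrete-state chain admitting a stationary probability distribution is positive recurrent with that distribution as its unique stationary law, and, being aperiodic as well, satisfies $\calK^T(x, \{y\}) \to r_{x_0}(y)$ for all $x, y \in \calI_{x_0}$ (see, e.g., \citet{robert2004monte}). To pass from convergence of the individual masses to convergence of $\calK^T(x, A)$ for an arbitrary $A \subseteq \calI_{x_0}$, I would apply Scheff\'e's lemma: since $\calK^T(x, \cdot)$ and $r_{x_0}$ are probability measures on the countable set $\calI_{x_0}$, pointwise convergence of masses upgrades to total-variation convergence, giving $\calK^T(x, A) \to \sum_{x' \in A} r_{x_0}(x')$; uniqueness of the limiting distribution is inherited from uniqueness of the stationary distribution.

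For the remaining claim I would specialise the first part to $x_0 := x$: as $x \in \calI_x$ and the chain started from $x$ stays in $\calI_x$ at all times, for any Lebesgue-measurable $A \subseteq \calX$ we have $\calK^n(x, A) = \calK^n(x, A \cap \calI_x)$, and the first part then yields $\calK^n(x,A) \to \sum_{x' \in \calI_x} \delta_{x'}(A) \frac{p(x')}{\sum_{k \in \bbZ} p(x + k\nu)}$, which is exactly (\ref{eq: limiting distribution point mass}).

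The proof has no deep obstacle --- the substance sits in the two preceding lemmas --- but the step needing most care is the reduction to a countable-state chain together with the verification that it is positive recurrent, so that a true limiting distribution exists rather than mass escaping to infinity; this is exactly where existence of the stationary probability distribution $r_{x_0}$ (and, implicitly, finiteness of $\sum_{k \in \bbZ} p(x_0 + k\nu)$, which makes $r_{x_0}$ a bona fide probability mass function) is used. A minor secondary point is the pointwise-to-setwise upgrade, dispatched by Scheff\'e's lemma.
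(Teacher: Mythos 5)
Your proposal is correct and follows essentially the same route as the paper: restrict $\calK$ to the countable set $\calI_{x_0}$, combine Lemma~\ref{lem: stationary distribution} and Lemma~\ref{lem: irreducible and aperiodic} with the standard ergodic theorem for irreducible, aperiodic countable-state chains, and reduce the general measurable-set claim to $A \cap \calI_x$ since the chain never leaves $\calI_x$. The only difference is that you spell out the positive-recurrence and Scheff\'e steps that the paper leaves implicit in its citation of the convergence theorem.
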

\begin{proof}
    Under the stated assumption, Lemma~\ref{lem: irreducible and aperiodic} shows that the Markov chain is irreducible and aperiodic on $\calI_{x_0}$. Since the stationary distribution of an irreducible and aperiodic Markov chain defined on a countable space is also the unique limiting distribution (e.g., \citet{meyn2012markov}), the first part follows. (\ref{eq: limiting distribution point mass}) holds because, for any $x$, $r_x$ is a probability mass function taking zero values outside of $\calI_x$.
\end{proof}

\subsection{Limiting Distribution with a General Initial Distribution}
We now prove Proposition~\ref{prop: limiting distribution}, which characterises the limiting distribution with a general initial distribution $Q$.

\begin{proof}[Proof of Proposition~\ref{prop: limiting distribution}]
    Let $A \subset \calX$ be Lebesgue-measurable. For any fixed $n \in \bbZ_+$, the probability of $A$ under the $n$-step perturbed distribution is
    \begin{talign*}
        (\calK^n Q)(A)
        = \int_{x \in \calX} \calK^n(x, A) Q(dx)
        = \int_{x \in \calX} \calK^n(x, A) q(x) dx .
    \end{talign*}
    Since $| \calK^n(x, A) | \leq | \calK^n(x, \calI_x) | \leq 1$ and $\int_{x \in \calX} q(x) dx = 1 < \infty$, we can apply Dominated Convergence Theorem \citep[Section~5.6]{bartle2014elements} to conclude
    \begin{talign}
        \lim_{n \to \infty} (\calK^n Q)(A)
        &= \int_{x \in \calX} \calK^\infty(x, A) q(x) dx \nonumber \\
        &=\int_{x \in \calX} \sum_{x' \in \calI_x} \delta_{x'}(A) \frac{p(x')}{\sum_{k \in \bbZ} p(x + k\nu)} q(x) dx  , &\textrm{by Corollary~\ref{corollary: limiting distribution point mass}.} \nonumber \\
        &=\int_{x \in \calX} \sum_{s \in \bbZ} \delta_{x + s\nu}(A) \frac{p(x + s\nu)}{\sum_{k \in \bbZ} p(x + k\nu)} q(x) dx \nonumber \\
        &= \sum_{s \in \bbZ} \int_{x \in \calX} \delta_{x + s\nu}(A) \frac{p(x + s\nu)}{\sum_{k \in \bbZ} p(x + k\nu)} q(x) dx \label{eq: fubini 1} \\
        &= \sum_{s \in \bbZ} \int_{u \in \calX} \delta_u(A) \frac{p(u)}{\sum_{k \in \bbZ} p(w + (k-s)\nu)} q(u - s\nu) du \label{eq: change of var}\\
        &= \int_{u \in \calX} \sum_{s \in \bbZ} \delta_u(A) \frac{p(u)}{\sum_{k \in \bbZ} p(w + (k-s)\nu)} q(u - s\nu) du \label{eq: fubini 2} \\
        &= \int_{u \in \calX} \delta_u(A) \frac{p(u)}{\sum_{k \in \bbZ} p(w + (k-s)\nu)} \sum_{s \in \bbZ} q(u - s\nu) du \nonumber \\
        &= \int_{u \in A} p(u) \frac{\sum_{s \in \bbZ} q(u + s\nu)}{\sum_{k \in \bbZ} p(u + k\nu)} du , \nonumber 
    \end{talign}
    where in (\ref{eq: fubini 1}) we have applied Fubini-Toneli Theorem \citep[Section~10.9, 10.10]{bartle2014elements}, (\ref{eq: change of var}) follows from a change of variable $u \coloneqq x + s\nu$ for a given $s$, (\ref{eq: fubini 2}) follows from Fubini-Toneli Theorem again, and the last line holds by a re-indexing of the sums on the numerator and on the denominator. In particular, we can apply Fubini-Toneli Theorem as $\int_{x\in\calX}  \sum_{s \in \bbZ} |\delta_{x + s\nu}(A) \frac{p(x + s\nu)}{\sum_{k \in \bbZ} p(x + k\nu)} | q(x) dx \leq \int_{x\in\calX}  \sum_{s \in \bbZ} \frac{p(x + s\nu)}{\sum_{k \in \bbZ} p(x + k\nu)} q(x) dx  < \infty$.
\end{proof}

% ------------------------------------------------------------------------------------------------------------------------
    
\begin{algorithm}[!t]
\caption{Estimating mode vectors and Hessians \citep[Algorithm~3]{pompe2020framework}}
\label{alg: find mode approximations}
    \begin{algorithmic}
        \STATE {\bfseries Input:} Initial points $s_1, \ldots, s_{M_0}$, small positive value $\beta$.
        \STATE {\bfseries Output:} Approximates for mode vectors $\{ \mu_1, \ldots, \mu_M\}$.
        \STATE Initialise BFGS at points $s_1, \ldots, s_{M_0}$ and run the algorithm to minimise $- \log p(x)$. 
        \STATE Denote the returned estimates of the local optima by $m_1, \ldots, m_{M_0}$ and their corresponding Hessian matrices by $A_1, \ldots, A_{M_0}$.
        \STATE Set $\mu_1 \coloneqq m_1$, $A_{\mu_1} \coloneqq A_1, M = 1$.
        \FOR{$i = 2, \ldots, M_0$}
            \IF{$\min_{j \in \{1, \ldots, M \}} \frac{1}{2} ((\mu_j - m_i)^\top A_{\mu_j} (\mu_j - m_i) + (\mu_j - m_i)^\top A_i (\mu_j - m_i)) < \beta$}
                \STATE $k \coloneqq \arg\min_{j \in \{1, \ldots, M \}} \frac{1}{2} ((\mu_j - m_i)^\top A_{\mu_j} (\mu_j - m_i) + (\mu_j - m_i)^\top A_i (\mu_j - m_i))$.
                \IF{$p^\ast(\mu_k) < p^\ast(m_i)$}
                    \STATE Set $\mu_k \coloneqq m_i$ and $A_{\mu_k} \coloneqq A_i$.
                \ENDIF
            \ELSE
                \STATE $\mu_{M + 1} \coloneqq m_i$ and $A_{\mu_{M + 1}} \coloneqq A_i$.
                \STATE $M \coloneqq M + 1$.
            \ENDIF
        \ENDFOR
    \end{algorithmic}
\end{algorithm}

\section{Implementation Details}
This section holds details about the practical implementation of the spKSD and the ospKSD methods.

\subsection{Finding Mode Vectors via Optimisation and Merging}
\label{subsec: finding the mode vectors via optimisation}

\paragraph{Finding local modes}
In practice, the mode locations and Hessians of the density of a non-trivial target distribution are rarely available. \citet{pompe2020framework} describes a general framework to estimate these quantities. It proceeds by running in parallel a sequence of optimisers initiated at different starting points. This is done by minimising the objective $-\log p$ using the BFGS algorithm \citep{nocedal2009numerical}, which returns both the local minima and the approximated Hessian at those points. In our experiments, we use the BFGS algorithm and run for at most 1000 iterations with each initial point.

\paragraph{Mode merging}
Although the end points of the optimisation procedure starting from different initial points may lie close to the local minima, they will still be numerically different from each other. \citet{pompe2020framework} proposed to merge two end points $m_i$ and $m_j$ if their Mahalanobis distance weighted by the averaged Hessians at those points is below a given threshold $\beta$. The full procedure is stated in Algorithm~\ref{alg: find mode approximations} for completeness.

\paragraph{Choosing the initial points}
A set of $n_\textrm{init}$ initial points for BFGS can be constructed either by sampling randomly from a product of intervals $[L_1, U_1] \times \cdots \times [L_d, U_d]$ in $\calX$, or simply from a held-out training set. The first approach will allow modes not covered by the training data to be detected, and the second approach can lead to faster convergence of the optimisation algorithm when the training points lie near the modes of $P$. For spKSD, the first approach is used as we do not assume a held-out set is available. For ospKSD, to combine the best of the two approaches whilst maintaining the same computational budget, half of the $n_{\textrm{init}}$ initial points are drawn randomly from the training set and the other half are initialised uniformly from $[L_1, U_1] \times \cdots \times [L_d, U_d]$. 

\subsection{Choosing the Number of Transitions $T$}
The number of transitions $T$ dictates the perturbed distribution, thus impacting the performance of spKSD. Intuitively, $T$ should be set to a large value when the acceptance rate is low to ensure the limiting distribution is achieved. The spKSD could suffer from a low acceptance rate when the estimates of the modes and local Hessians of the target distribution are inaccurate, or when the target distribution cannot be approximated by a mixture of elliptic distributions.

We propose two heuristics to choose this hyper-parameter in practice: \emph{(i)} viewing this as another hyper-parameter and tuning it using a training set by selecting from a pre-specified set of values, or \emph{(ii)} setting it to a large value (e.g., $T = 1000$) if the computational budget allows. 

In particular, we recommend a large $T$ because, with the proposed transition kernel $\calK$, the KSD $\ksd(\calK^T Q, \calK^T P)$ between the perturbed distributions does \emph{not} necessarily decrease as $T$ grows. This is because $\calK^T Q$ does \emph{not} necessarily converge to $P$ as $T \to \infty$, since $\calK$ is \emph{not} irreducible (see the discussions in Section~\ref{subsec: choosing the proposal density}). 
% This is in contrast to the \emph{spread divergence} of \citet{zhang2020spread} which combines the Kullback-Leibler (KL) divergence with perturbation by convolution with Gaussian noises, in which case applying the perturbation always reduces the KL divergence.

% ------------------------------------------------------------------------------------------------------------------------
\begin{figure}
    \centering
    \includegraphics[width=0.32\textwidth]{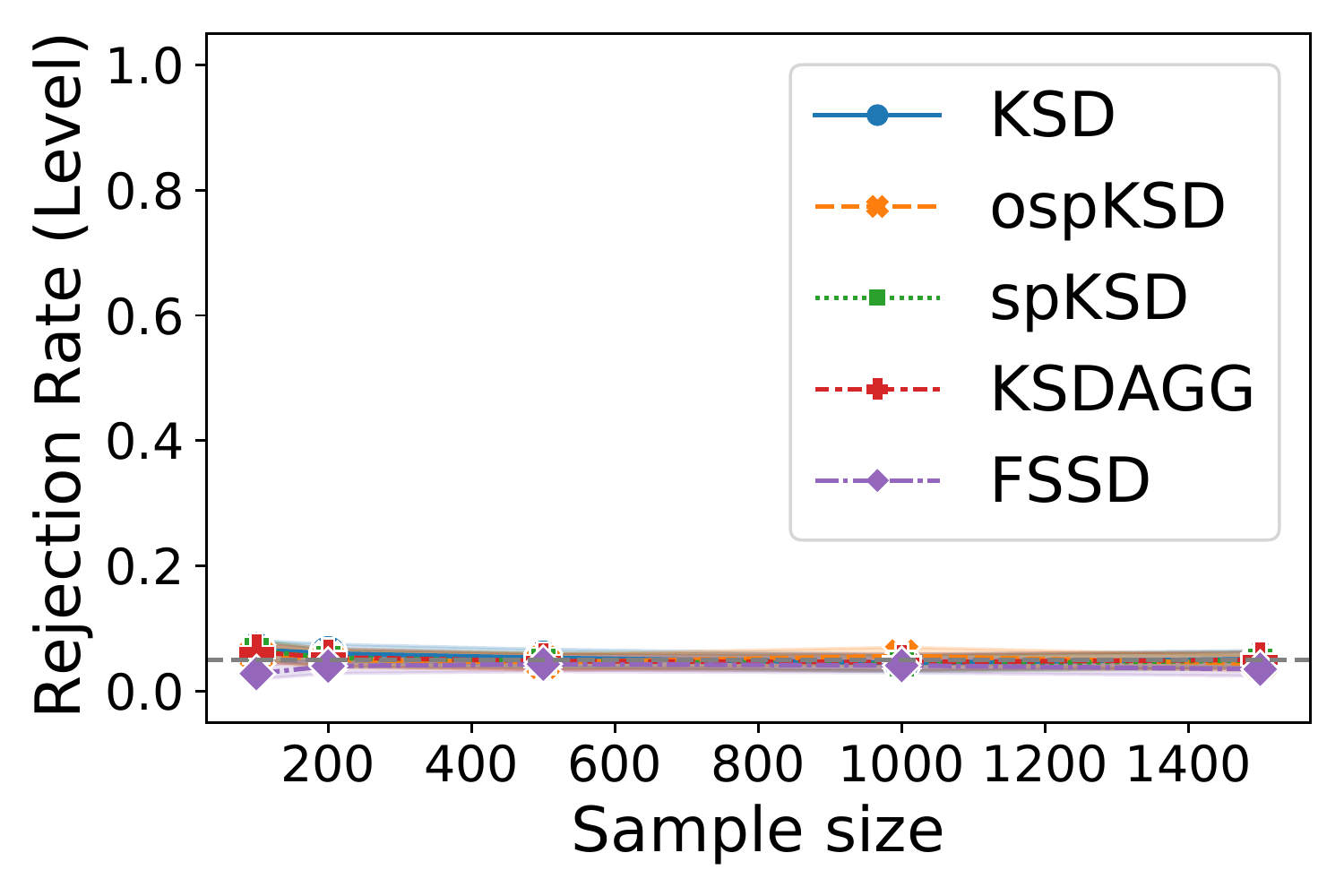}
    \includegraphics[width=0.32\textwidth]{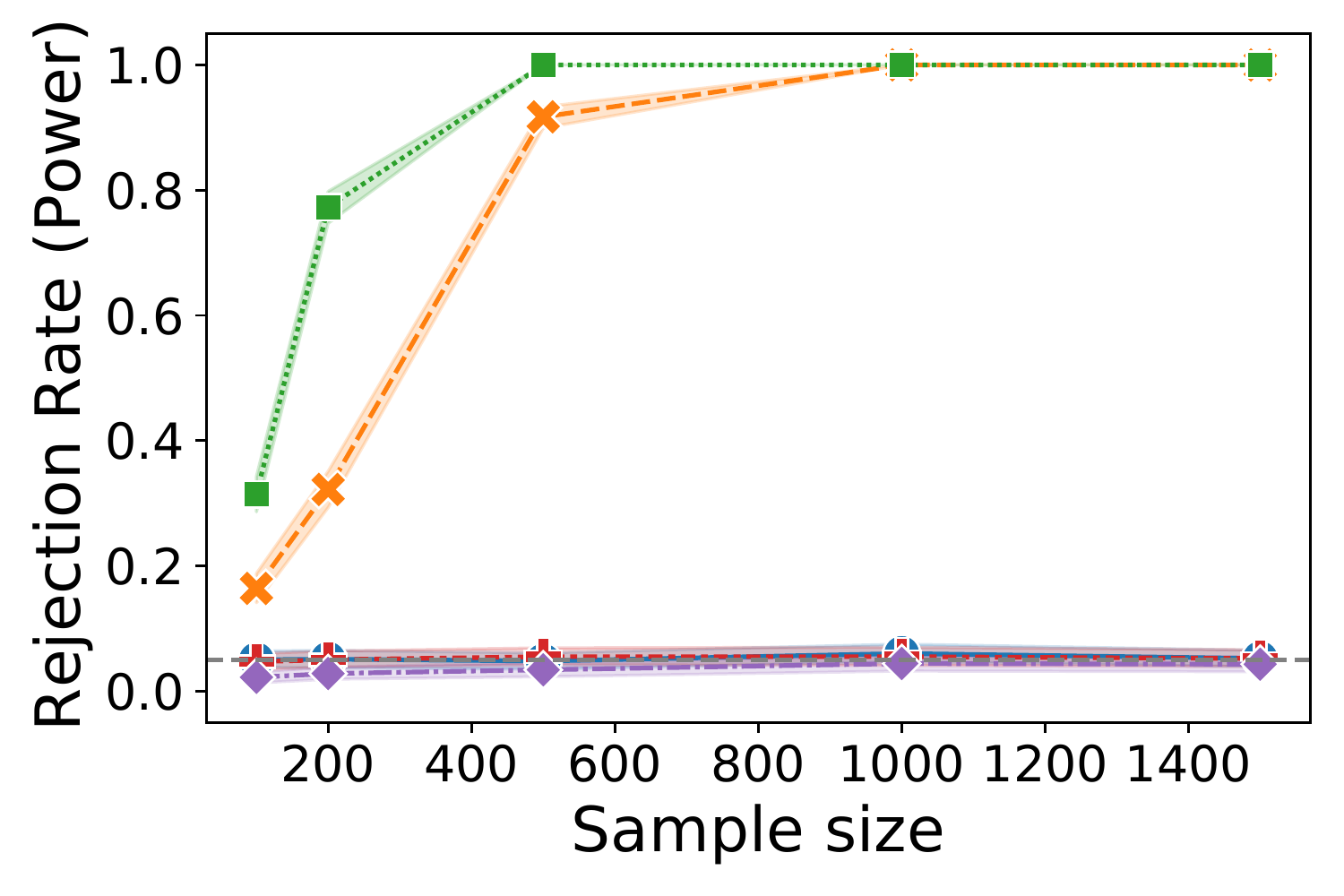}
    \includegraphics[width=0.8\textwidth]{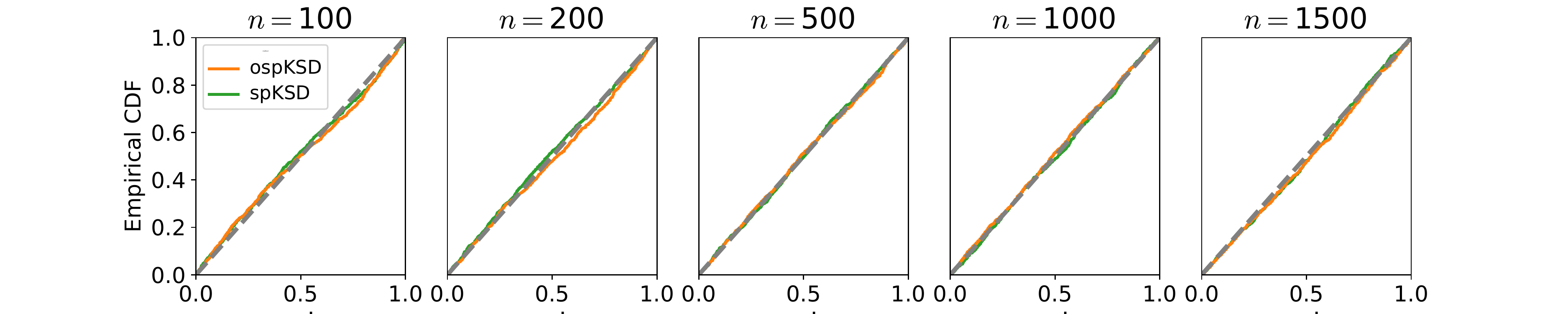}
    \caption{Level (top left) and power (top right) experiments with the multivariate Gaussian mixture example. Bottom: Empirical cumulative distribution function (CDF) of the $p$-values in the level experiments, where the grey dashed line is the CDF of a uniform distribution on $[0, 1]$.}
    \label{fg: bimodal level and power}
\end{figure}

\section{Experimental Details and Supplementary Plots}
\label{appendix: details on experiments}
In this section, we provide detailed definition of the distributions used in each experiment, as well as supplementary figures, including a level study and a power study of the proposed methods.

\subsection{Multivariate Gaussian Mixture: Supplementary Plots}
We include a level study and a power study for the spKSD and ospKSD tests. The target distribution is a multivariate Gaussian mixture in $50$ dimensions, with density $p(x) \propto \pi_p \exp\left(- \frac{1}{2} \| x \|_2^2 \right) + (1 - \pi_p) \exp\left(- \frac{1}{2} \|x - \Delta e_1\|_2^2 \right)$, where $\pi_p = 0.5$, $\Delta = 6$, and $e_1\in \bbR^d$ is a vector with 1 in the first coordinate and 0 in others. Samples are drawn either from the same distribution (level experiment), or from only the left component (power experiment). The probability of rejection over 100 repetitions is plotted in Fig.~\ref{fg: bimodal level and power}. We can see from the top left plot that under the null hypothesis, all tests have the prescribed test level $\alpha = 0.05$. The plots in the bottom row further confirm the validity of the test by showing that the empirical cumulative distribution function (CDF) of the $p$-values under the null is indeed close to the CDF of a uniform distribution. In the top right plot, we investigate the rejection rate under the alternative hypothesis where the samples are drawn from the left mode only. We see that both ospKSD and spKSD achieve a significantly higher power than the benchmarks (KSD, \textsc{KSDAgg} and FSSD), whose power remains close to the level for all sample sizes.

\subsection{Mixture of $t$ and Banana Distributions}
\label{appendix: t banana}
The mixture of $t$ and banana example mostly follows the setup in \citet{pompe2020framework}. Each $t$-distribution has 7 degrees of freedom and covariance matrix $0.1 \sqrt{d} I_d$. Each banana-shaped distribution has density function $p_{b, \mu} = p_{\mu} \circ \phi_b$, where $\phi_{b, \mu}(x) = (x_1, x_2 + b x_1^2 - 100b, x_3, \ldots, x_d)^\top$ with $b=0.003$, and $p_{\mu}$ is the density of a $t$-distribution with 7 degrees of freedom centred at $\mu \in \bbR^d$ and with shape matrix $C = \textrm{diag}(100, 1, \ldots, 1) \in \bbR^{d \times d}$.

\subsection{Sensors Localisation}
\label{appendix: sensors}
Following \citet{tak2018repelling}, we use a diffuse bivariate Gaussian prior distribution $\calN(0, 10^2 I_2)$ for each $x_i \in \bbR^2$. Let $w_{ij}$ be the binary random variable for which $w_{ij} = 1$ if the distance $y_{ij}$ is observed and 0 otherwise. The full posterior is
\begin{talign*}
	\pi(x_1, \ldots, x_4 | y, w)
	\propto
	\exp\left( - \frac{\sum_{k=1}^4 x_k^\top x_k}{2 \times 10^2} \right) \Pi_{i < j} f_{ij}(x_i, x_j | y_{ij}, w_{ij}) ,
\end{talign*}
where $w = \{ w_{ij} \}$, $y = \{ y_{ij} \}$ and 
\begin{talign*}
    f_{ij}(x_i, x_j | y_{ij}, w_{ij})
    = 
    \left[ \exp\left(- \frac{(y_{ij} - \| x_i - x_j \|_2 )^2}{2 \times 0.02^2} \right)
    % \times 
    \exp\left( \frac{- \| x_i - x_j\|_2^2}{2 \times 0.3^2} \right) \right]^{w_{ij}}
    % \times 
    \left[ 1 - \exp\left( \frac{- \| x_i - x_j\|_2^2}{2 \times 0.3^2} \right)\right]^{1 - w_{ij}} .
\end{talign*}

\subsection{Gaussian-Bernoulli Restricted Boltzmann Machine}
\label{appendix: GB-RBM}
We include a supplementary experiment with Gaussian-Bernoulli Restricted Boltzmann Machines (RBMs) \citep{cho2013gaussian}. This model is a popular benchmark for assessing GOF tests \citep{liu2016kernelized, jitkrittum2017linear, schrab2022ksd}. It is a latent variable model with joint density $p(x, h) \propto \exp\left( \frac{1}{2} x^\top B h + b^\top x + c^\top h - \frac{1}{2} \| x \|_2^2 \right)$, where $h \in \{ -1, 1 \}^{d_h}$, and $B, b, c$ are fixed hyperparameters. The marginal density $p(x)$ can be rewritten as a mixture of Gaussian distributions:
\begin{talign}
    p(x)
    = \sum_{h} \gamma(h) N\left(x; \frac{1}{2}Bh + b, I_d \right), \qquad
    \textrm{where }
    \gamma(h) &\propto \exp\left(\frac{1}{2} \left\| \frac{1}{2} Bh + b \right\|_2^2 + c^\top h \right) .
    \label{eq: GB-RBM density}
\end{talign}
We consider two parameter settings: \emph{standard} and \emph{multi-modal}. The \emph{standard} setting follows the setups in \citet{liu2016kernelized}, in which case the RBM is unimodal. For the target distribution $P$, we randomly sample the entries of $b$ and $c$ from a standard normal, and select the entries of $B$ from $\{-1, 1 \}$ with equal probability. We sample from a perturbed version of $P$ where Gaussian noises with standard deviation $\sigma$ are injected into the entries of $B$. As $\sigma$ increases, the problem becomes easier, so all tests are able to reject with a high probability (Fig.~\ref{fig: rbm standard}).

For the \emph{multi-modal} setting, we set $c = 0$ and $b = 0$, and choose $B$ so that the modes are well-separated. Samples are drawn from the same model with a different $c$, which controls the mixing weights. Specifically, we choose $B = 6 E$, where $E \in \bbR^{d \times d_h}$ is formed by the top $d$ row and $d_h$ columns of the matrix $I_{d_\textrm{max}}$, where $d_\textrm{max} \coloneqq \max(d, d_h)$. This renders the local modes of $p$ to be located at the corners of a hyper-cube of width $6$. Due to the choice of $b$ and $B$, changing $c$ only affects the weights of the components but not their mode locations. We choose $c = 0$ for the target so that all components have equal weights, and draw samples with from the same model with $c = (c_0, c_0, 0, \ldots, 0) \in \bbR^{d_h}$ for some $c_0$ using a Gibbs sampler, which we describe in the next subseciton.

We run ospKSD and spKSD with $T=50$ steps and report the results in Fig.~\ref{fig: rbm}. The left plot shows the rejection probability with different values of $c_0$ and $d_h = 5$. As $c_0$ increases, the weights deviate further from those in the target, which is detected by ospKSD and spKSD as shown by the increasing power. The benchmarks again fail to detect this discrepancy due to the sparsity of the modes. We then fix $c_0 = 5$ and analyse how the performance scales with the latent dimension $d_h$ in the right plot of Fig.~\ref{fig: rbm}. With a moderate $d_h$, the rejection probabilities of ospKSD and spKSD are significantly larger than the others. This gap vanishes for $d_h \geq 20$, since a larger $d_h$ gives rise to more modes in $p$ and hence more possible jump directions for each point. Therefore, the probability of proposing a ``correct'' move at each step declines, leading to a small acceptance rate and thus a low power.

% GB-RBM example
\begin{figure}
   \begin{minipage}[t]{.48\textwidth}
        \centering
        \includegraphics[width=.5\textwidth]{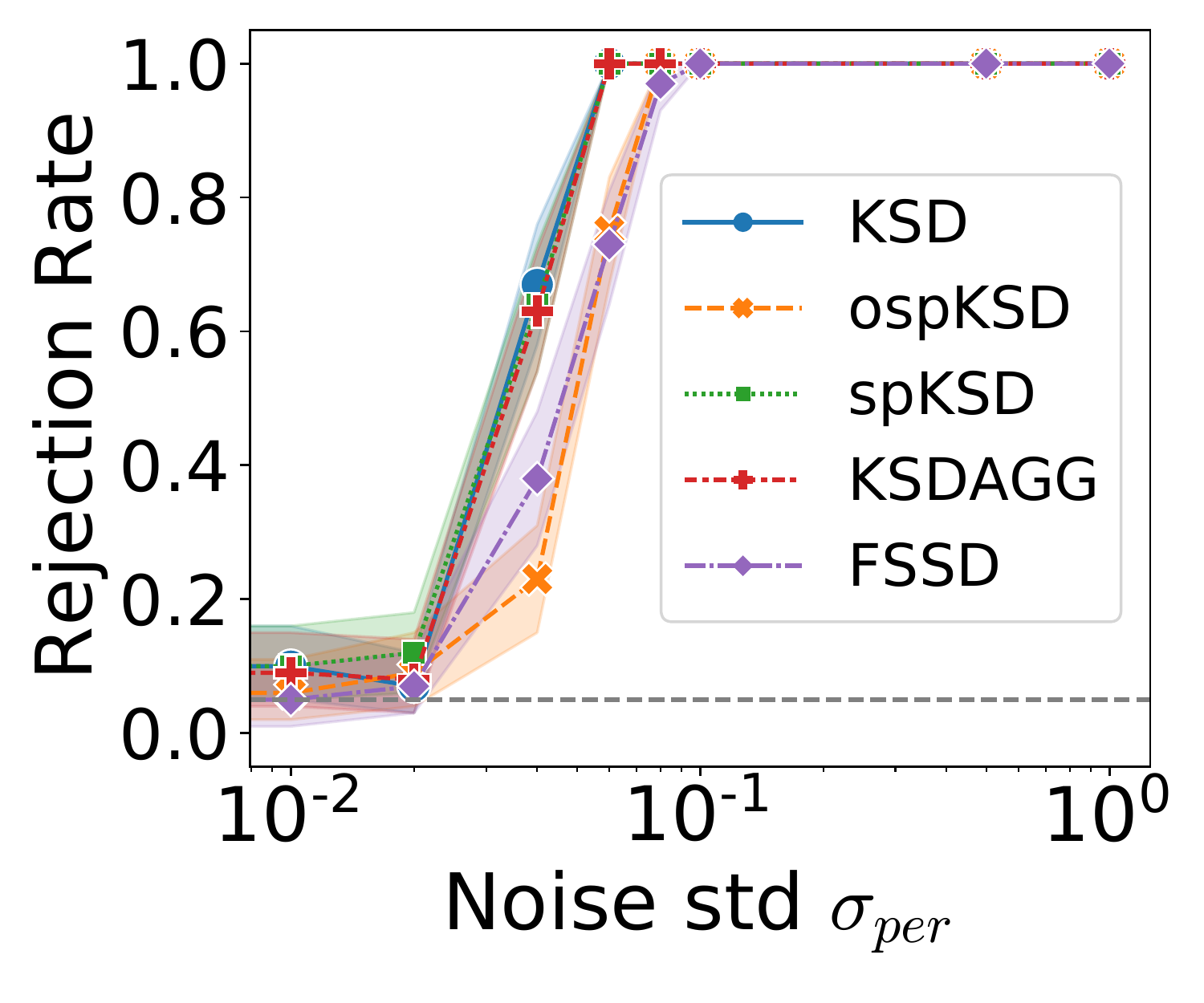}
        \caption{GB-RBM with the standard setting.}
        \label{fig: rbm standard}  
    \end{minipage}
    \hfill
    \begin{minipage}[t]{.5\textwidth}
        \centering
        \includegraphics[width=1.\textwidth]{figs/rbm/rbm_all_both.pdf}
        \caption{GB-RBM with the multi-modal setting.}
        \label{fig: rbm}
    \end{minipage}
\end{figure}

\subsubsection{Sampling from Gaussian-Bernoulli RBMs with Well-Separated Modes}
\label{appendix: sampling from GB-RBM}
We discuss practical considerations when sampling from the Gaussian-Bernoulli RBM with our particular choice of $B$ (the \emph{multi-modal} setting). Denote by $\textrm{GB-RBM}(B, b, c)$ the joint density of the Gaussian-Bernoulli RBM with parameters $B, b, c$ (see the previous section for definition). 

In the \emph{multi-modal} setting, we set $c = (6, 6, 0, \ldots, 0) \in \bbR^{d_h}$, and $B = \lambda E$, where $\lambda$ is a large, positive constant, and $E \in \bbR^{d \times d_h}$ is the top $d \times d_h$ sub-matrix of $I_{d_\textrm{max}}$, where $d_\textrm{max} \coloneqq \max(d, d_h)$. These lead to a model that is a mixture of Gaussian distributions with \emph{distantly} located modes. More specifically, the modes are located at the corners of the hyper-cube in $\bbR^d$ with length $6$.

The standard way to sample from a GB-RBM is to use a Gibbs sampler (see e.g.,  \citet{melchior2017gaussian, jitkrittum2017linear}). However, the Gibbs sampling can suffered from poor mixing when $\lambda$ is large. This is because the modes will become more disconnected as $\lambda$ increases, thus making it more challenging for the sampler to learn the mixing weights correctly. This means an impractically long burn-in period would be required for the Gibbs sampler to produce faithful samples from the ground truth.

We propose a practical method to generate faithful samples when $\lambda$ is large. It is based on the observation that, with $B$ of the above form, varying the value of $\lambda$ only affects the mean locations of the Gaussian components in the GB-RBM, but \emph{not} the mixing ratios. Indeed, for any $h \in \{ -1, 1 \}^{d_h}$, we have $\| Bh \|_2^2 = \lambda^2 \| Eh \|_2^2 = \lambda^2 d_h$, which is constant for all $h$. Substituting this into $\gamma(h)$ of (\ref{eq: GB-RBM density}), 
\begin{talign*}
    \gamma(h )
    = \frac{\exp\left( \frac{1}{8}\lambda^2 d_h + c^\top h \right)}{ \sum_{h'} \exp\left( \frac{1}{8}\lambda^2 d_h + c^\top h' \right) }
    = \frac{\exp\left(c^\top h \right)}{ \sum_{h'} \exp\left(c^\top h' \right) },
\end{talign*}
which does not depend on $\lambda$. It follows that, given any $\lambda, \lambda' \geq 0$, if $(x, h) \sim \textrm{GB-RBM}(\lambda E, b, c)$, then $(y, h) \sim \textrm{GB-RBM}(\lambda' E, b, c)$, where $y \coloneqq x - \frac{1}{2} (\lambda - \lambda' ) E h$. This implies that we can sample from $\textrm{GB-RBM}(\lambda E, b, c)$ with a large $\lambda$ by the following procedure:
\begin{enumerate}
    \item Use a Gibbs sampler to sample $(x, h) \sim \textrm{GB-RBM}(\lambda' E, b, c)$, where $\lambda'$ is small.
    \item Set $y = x - \frac{1}{2} (\lambda' - \lambda) E h$.
\end{enumerate}
Therefore, assuming the Gibbs sampler is capable of generating faithful samples from $\textrm{GB-RBM}(\lambda' E, b, c)$ for some $\lambda' \geq 0$, this will produce faithful samples from $\textrm{GB-RBM}(\lambda E, b, c)$ for any $\lambda \geq 0$ large. In our experiments, we used $\lambda' = 0$.

\end{document}